\def\eqref#1{equation~\ref{#1}}
\def\1{\bm{1}}
\DeclareMathAlphabet{\mathsfit}{\encodingdefault}{\sfdefault}{m}{sl}
\SetMathAlphabet{\mathsfit}{bold}{\encodingdefault}{\sfdefault}{bx}{n}
\def\cC{{\mathcal{C}}}
\def\cO{{\mathcal{O}}}
\newcommand{\R}{\mathbb{R}}
\DeclareMathOperator*{\argmin}{arg\,min}
\definecolor{myyellow}{rgb}{1,1,0.7}
\newcommand{\myref}[1]{$\left(\ref{#1}\right)$} % new macro
\newcommand{\eqdef}{\coloneqq}
\definecolor{bgcolor}{rgb}{0.93,0.99,1}
\definecolor{bgcolor2}{rgb}{0.8,1,0.8}
\definecolor{bgcolor3}{rgb}{0.50,0.90,0.50}
\newcommand{\algname}[1]{{\sf\red\relscale{0.90}#1}\xspace}
\newcommand{\gammaM}{{\gamma}}
\newcommand{\tauM}{{\tau}}
\newcommand{\nclients}{{M}}
\newcommand{\iclient}{m}
\newcommand{\kstep}{t}
\newcommand{\kcohort}{{C}}
\newcommand{\Koper}{{{H}}}
\newcommand{\localsteps}{K}
\newcommand{\comm}{T}
\newcommand{\localsolmk}{y_{\iclient}^{\star,\kstep}}
\newcommand{\lastlocittermk}{y_{\iclient}^{\localsteps,\kstep}}
\newcommand{\lastlocitterk}{y^{\localsteps,\kstep}}
\newcommand{\localsolk}{y^{\star,\kstep}}
\newcommand{\set}{S^{t}}
\newcommand{\localfun}{\psi^t}
\newcommand{\localfuni}{\localfun_\iclient}
\newtheorem{theorem}{Theorem}[section]
\newtheorem{lemma}[theorem]{Lemma}
\newtheorem{corollary}[theorem]{Corollary}
\newtheorem{assumption}{Assumption}
\definecolor{mydarkgreen}{RGB}{39,130,67}
\definecolor{mydarkred}{RGB}{192,25,25}
\newcommand{\red}{\color{mydarkred}}
\newcommand{\michal}[1]{\todo[inline]{\textbf{Michal: }#1}}
\definecolor{lgray}{rgb}{0.95,0.95,0.95}
\definecolor{yel}{rgb}{1,0.98,0.92}
\definecolor{mydarkblue}{rgb}{0,0.2,0.9}
\definecolor{mydarkred}{rgb}{0.8,0.0,0.0}
\definecolor{mydarkgreen}{rgb}{0,0.55,0}
\definecolor{myorange}{RGB}{255,100,0}
\newcommand{\algn}[1]{{\sf\color{mydarkred}\scalefont{0.96}{#1}}\xspace}
\newcommand{\sqnorm}[1]{\left\| #1 \right\|^2}
\newcommand{\Exp}[1]{\mathbb{E}\!\left[ #1 \right]}
\newcommand{\Rop}{\mathcal{P}}
\newcommand{\Ropp}{\widetilde{\mathcal{R}}}
\newcommand{\oma}{\omega_{\mathrm{ran}}}
\newtheorem*{theorem*}{Theorem}
\newtheorem*{lemma*}{Lemma}
\newtheorem*{corollary*}{Corollary}
\newcommand{\Kx}{\localsteps}
\newcommand{\Cx}{\kcohort}
\newcommand{\Mx}{\nclients}
\newcommand{\Tx}{\comm}
\newcommand{\qm}{q_\iclient}
\newcommand{\prm}{p_\iclient}
\newcommand{\conic}{\omega}
\title{Improving Accelerated Federated Learning with Compression and Importance Sampling}
\author{%
  Micha\l{} Grudzie\'n\\
    Departament of Mathematics\\
  University of Oxford\\
\And
Grigory Malinovsky\\
AI Initiative\\
KAUST\\
\And 
Peter Richt\'arik\\
AI Initiative\\
KAUST
  % examples of more authors
  % \And
  % Coauthor \\
  % Affiliation \\
  % Address \\
  % \texttt{email} \\
  % \AND
  % Coauthor \\
  % Affiliation \\
  % Address \\
  % \texttt{email} \\
  % \And
  % Coauthor \\
  % Affiliation \\
  % Address \\
  % \texttt{email} \\
  % \And
  % Coauthor \\
  % Affiliation \\
  % Address \\
  % \texttt{email} \\
}
\begin{document}

\maketitle

\begin{abstract}
	Federated Learning is a collaborative training framework that leverages heterogeneous data distributed across a vast number of clients. Since it is practically infeasible to request and process all clients during the aggregation step, partial participation must be supported. In this setting, the communication between the server and clients poses a major bottleneck. To reduce communication loads, there are two main approaches: compression and local steps. Recent work by~\citet{ProxSkip} introduced the new \algname{ProxSkip} method, which achieves an accelerated rate using the local steps technique. Follow-up works successfully combined local steps acceleration with partial participation \citep{grudzien2023can, condat2023tamuna} and gradient compression \citep{condat2022provably}. In this paper, we finally present a complete method for Federated Learning that incorporates all necessary ingredients: Local Training, Compression, and Partial Participation. We obtain state-of-the-art convergence guarantees in the considered setting. Moreover, we analyze the general sampling framework for partial participation and derive an importance sampling scheme, which leads to even better performance. We experimentally demonstrate the advantages of the proposed method in practice.
\end{abstract}

\section{Introduction}
Federated Learning (FL)~\citep{FEDLEARN, FLblog2017} is a distributed machine learning paradigm that allows multiple devices or clients to collaboratively train a shared model without transferring their raw data to a central server. In traditional machine learning, data is typically gathered and stored in a central location for training a model. However, in Federated Learning, each client trains a local model using its own data and shares only the updated model parameters with a central server or aggregator. The server then aggregates the updates from all clients to create a new global model, which is then sent back to each client to repeat the process~\citep{FedAvg2016}.

This approach has gained significant attention due to its ability to address the challenges of training machine learning models on decentralized and sensitive data~\cite{FL2017-AISTATS}. Federated Learning enables clients to preserve their privacy and security by keeping their data local and not sharing it with the central server. This approach also reduces the need for large-scale data transfers, thereby minimizing communication costs and latency~\citep{li2020federated}.

Federated Learning poses several challenges such as data heterogeneity, communication constraints, and ensuring the privacy and security of the data~\citep{kairouz2021advances}. Researchers in this field have developed novel optimization algorithms to address these challenges and to enable efficient aggregation of the model updates from multiple clients~\citep{FieldGuide2021}. Federated Learning has been successfully applied to various applications, including healthcare~\citep{SplitLearning2}, finance~\citep{long2020federated}, and Internet of Things (IoT) devices~\citep{khan2021federated}. 

This work considers the standard formulation of Federated Learning as a finite sum minimization problem:
\begin{align}
	\label{main_form}
	\min_{x\in \R^d}\left[ f(x)\eqdef \frac{1}{M} \sum_{m=1}^{M} f_m(x) \right]
\end{align}
where $M$ is the number of clients/devices. Each function $f_m(x) = \mathbb{E}_{\xi\sim \mathcal{D}_m}\left[l(x,\xi)\right]$ represents the average loss, calculated via the loss function $l$, of the model parameterized by $x\in \R^d$ over the training data $\mathcal{D}_m$ stored by client $m \in \left[M\right] \eqdef \left\lbrace 1,\ldots, M \right\rbrace.$ 

\subsection{Federated Averaging}
The method known as Federated Averaging (\algname{FedAvg}), proposed by \citet{FL2017-AISTATS}, is a widely used technique that specifically addresses the challenges of practical federated environments while solving problem~\ref{main_form}. \algname{FedAvg} is based on Gradient Descent (\algname{GD}), but applies four modifications: Client Sampling (CS), Data Sampling (DS), Local Training (LT), Communication Compression (CC). 

The \algname{FedAvg} training process occurs over several communication rounds. At the start of each round~$t$, a subset or cohort $S^t \subset [M]$ of clients with a size of $C^t = |S^t|$ is selected to participate in that round's training. The aggregating server then sends the current model version, $x^t$, to all clients $m \in S^t$. Each client $m \in S^t$ performs $K$ iterations of \algname{SGD} on its local loss function, $f_m$, using minibatches $\mathcal{B}_m^{k, t} \subseteq \mathcal{D}_m \text { of size } b_m=\left|\mathcal{B}_m^{k, t}\right| \text { for } k=0, \ldots, K-1$ initialized with $x^t$. Afterward, all participating clients compress their updated models and send these compressed updates to the server to aggregate into a new model version, $x^{t+1}$. The entire process is then repeated. This generalized scheme is described in~\citet{grudzien2023can}.

Each of the four \algname{FedAvg} modifications, Client Sampling (CS), Data Sampling (DS), Local Training (LT), and Communication Compression can be independently turned on or off, or used in various combinations. For instance, if $C^t = M$ for all rounds, then all clients take part in every round, resulting in the deactivation of CS. Similarly, when $b_m = |D_m|$ for each client $m \in [M]$, every client employs all their data to compute the local gradient estimator needed for \algname{SGD}, which leads to the deactivation of DS. Additionally, when $K$ is set to $1$, each participating client performs only one \algname{SGD} step, causing LT to be turned off.  Lastly, If the compression operator is set to the identity, then each client transmits complete updates, resulting in compression being turned off. If all four modifications are disabled, \algname{FedAvg} becomes equivalent to vanilla gradient descent (\algname{GD}).

\subsection{Data Sampling}
The seminal works discussed earlier illustrate the practical benefits of the novel approach, i.e., \algname{FedAvg} method, but they lack theoretical analysis and associated guarantees. Given that \algname{FedAvg} comprises four distinct components, it is expedient to analyze these techniques in isolation to achieve a deeper comprehension of each of them.

Due to the close association of unbiased data sampling techniques with the stochastic approximation literature dating back to the works of~\citet{RobbinsMonro:1951, nemirovsky1983problem, nemi09, bottou2018optimization}, it is not unexpected that CS is comparatively well comprehended. For instance, \citet{gower2019sgd} have scrutinized variations of SGD that back almost any unbiased CS mechanism in the smooth strongly convex area, while \citet{ES-SGD-nonconvex} have analyzed those in the smooth nonconvex region. Furthermore, \citet{PAGE-AS} have proposed and studied oracle-optimal versions of \algname{SGD} that support almost any unbiased CS and DS mechanisms in the smooth nonconvex region, drawing upon the previous works of \citet{PAGE2021}, \citet{SPIDER}, and \citet{SARAH, SARAH-nonconvex}. The CS with variance reduction techniques is widely analyzed by~\citet{sigma_k}.

\subsection{Client Sampling}

As distributed learning gained popularity, researchers began to examine Client Sampling strategies for improving communication efficiency \citep{wu2022node} and ensuring robustness and security during aggregation \citep{so2021securing}. Empirical studies of Client Sampling strategies can be found in the literature, such as \citet{fraboni2021clustered, charles2021large, huang2022stochastic}. Optimal Client Sampling strategies under various conditions have been theoretically analyzed in works such as \citet{wang2022client} and \citet{OptClientSampling2020}. The cyclic patterns of client participation are studied in~\citet{malinovsky2023federated, cho2023convergence}. While Client Sampling shares similarities with data sampling, it also has distinct characteristics that need to be taken into account. 

\subsection{Communication Compression}
Communication Compression is a valuable component in distributed optimization, as it allows each client to transmit a compressed or quantized version of its update, $\Delta^t_{m}$, instead of the entire update vector. This can lead to significant bandwidth savings by reducing the number of bits transmitted over the network. Various operators have been proposed for compressing update vectors, such as stochastic quantization \citep{alistarh2017qsgd}, random sparsification \citep{wangni2017gradient, StichNIPS2018-memory}, and alternative methods~\citep{DoubleSqueeze2019}. 

The utilization of unbiased compressors can decrease the amount of bits that clients transmit per round. However, it can also cause an increase in the variance of the stochastic gradients, which leads to a slower overall convergence~\citep{DCGD, stich2020communication}. To address this issue, \citet{DIANA} proposed \algname{DIANA}, an algorithm that uses control iterates to diminish the variance resulting from gradient compression with unbiased compression operators. This approach guarantees fast convergence. \algname{DIANA} has been examined and extended in various scenarios \citep{DIANA2, DIANA+,DIANA++,D-DIANA,ADIANA} and is a valuable tool for utilizing gradient compression.

The article presents the application of compression techniques in Federated Learning, as discussed in  \citet{basu2019qsparse, reisizadeh2020fedpaq, haddadpour2021federated}. The mechanism of compressing iterates is studied in \citet{GDCI, DFPMCI2019}. Additionally, \citet{malinovsky2022federated} and \citet{ sadiev2022federated} investigate the application of compression with random reshuffling in Federated Learning.

\subsection{Five Generations of Local Training}
Local Training (LT) is a crucial aspect of Federated Learning (FL) models, where each participating client performs multiple local optimization steps before synchronization of parameters. In the smooth strongly convex regime, we will provide a concise overview of the theoretical advancements made in understanding LT. \citet{ProxSkip-VR} categorized LT methods into five generations - heuristic, homogeneous, sublinear, linear, and accelerated - each progressively enhancing the previous one in significant ways.

\textbf{1st (heuristic) generation of LT methods.} 
Although the ideas behind LT were previously utilized in various machine learning fields~\citet{Povey2015,SparkNet2016}, it gained significant attention as a communication acceleration technique following seminal paper introducing the \algname{FedAvg} algorithm \citep{FL2017-AISTATS}. However, their work, along with previous research, lacked any theoretical justification. As a result, LT-based heuristics dominated the field's initial development until the \algname{FedAvg} paper and lacked any theoretical guarantees.

\textbf{2nd (homogeneous) generation of LT methods )}The second generation of LT methods offers guarantees, but their analysis relies on various data homogeneity assumptions. These assumptions include bounded gradients, which require $\left\|\nabla f_m(x)\right\| \leq c$ for all $m \in[M]$ and $x \in \mathbb{R}^d$ \citep{Li-local-bounded-grad-norms--ICLR2020}, or  bounded gradient dissimilarity, i.e., requiring $\frac{1}{M} \sum_{m=1}^M\left\|\nabla f_m(x)\right\|^2 \leq c\|\nabla f(x)\|^2 \text { for all } x \in \mathbb{R}^d$ \citep{LocalDescent2019}. The reasoning behind such assumptions is that in the extreme case when all local functions are identical, running \algname{GD} independently and in parallel on all clients without any communication or averaging would make \algname{GD} communication-efficient. Based on this, as we increase heterogeneity, taking multiple local steps should still be beneficial as long as the number of steps is limited. However, using bounded dissimilarity assumptions is highly problematic as they are not met even in some of the simplest function classes, such as strongly convex quadratics \citep{localGD, localSGD}. Furthermore, due to the highly heterogeneous and non-i.i.d nature of real-world federated learning datasets, relying on strong assumptions like data/gradient homogeneity for analyzing LT methods is both mathematically dubious and practically insignificant. Several authors have analyzed various LT methods under such assumptions and obtained rates \citep{Yu-local-homogeneous-2019,Li2019-local-homogeneous, FedAvg-nonIID}

\textbf{3rd (sublinear) generation of LT methods}. The third generation LT theory successfully eliminated the need for data homogeneity assumptions, as demonstrated by \citet{localGD, localSGD}. However, subsequent studies by \citet{woodworth2020minibatch} and \citet{glasgow2022sharp} showed that \algname{LocalGD} with DS (\algname{LocalSGD}) has communication complexity that is no better than minibatch \algname{SGD} in the heterogeneous data setting. Furthermore, \citet{LFPM} analyzed LT methods for general fixed point problems and \citet{koloskova2020unified} studied decentralized accepts of Local Training. While removing the need for data homogeneity assumptions was a significant advancement, the results were rather pessimistic, indicating that LT-enhanced \algname{GD}, or \algname{LocalGD}, has a sublinear convergence rate, which is inferior to vanilla \algname{GD}'s linear convergence rate \citep{Blake2020}. The effect of server-side stepsizes is analyzed in \citet{malinovsky2022server, charles2020outsized} 

\textbf{4th (linear) generation of LT methods. } The focus of the fourth generation of LT methods was to develop linearly converging versions of LT algorithms by addressing the problem of client drift, which was identified as the reason behind the previous generation's subpar performance compared to \algname{GD}. The first method to successfully mitigate client drift and achieve a linear convergence rate was \algname{Scaffold}, as proposed by \citet{SCAFFOLD}. Other approaches to achieve the same effect were later introduced by \citet{LSGDunified2020} and \citet{FEDLIN}. While obtaining a linear rate under standard assumptions was a significant achievement, these methods still have a slightly higher communication complexity than vanilla \algname{GD} and at best equal to that of \algname{GD}.

\textbf{5th (accelerated) generation of LT methods.}
\citet{ProxSkip} have recently introduced the \algname{ProxSkip} method, which represents a new and simple approach to Local Training that results in provable communication acceleration in the smooth strongly convex regime, even when dealing with heterogeneous data. Specifically, in cases where each $f_m$ is $L$-smooth and $\mu$-strongly convex, \algname{ProxSkip} can solve \ref{main_form} in 
$\mathcal{O}(\sqrt{L / \mu} \log 1 / \varepsilon)$ communication rounds, a significant improvement over the $\mathcal{O}(L / \mu \log 1 / \varepsilon)$ complexity of GD. This accelerated communication complexity has been shown to be optimal by \citet{scaman2019optimal}. \citet{ProxSkip} have also introduced several extensions to \algname{ProxSkip}, including a flexible data sampling framework and decentralized version. As a result of these developments, other new methods that can achieve communication acceleration using Local Training are proposed. 

The initial sequel article by \citet{ProxSkip-VR} presents a broad variance reduction structure for the \algname{ProxSkip} approach. In addition, \citet{RandProx} applies the \algname{ProxSkip} methodology to complex splitting schemes that involve the sum of three operators in a forward-backward setting. Besides,\citet{sadiev2022communication} and \citet{maranjyan2022gradskip} improve the computational complexity of the \algname{ProxSkip} method while maintaining its accelerated communication acceleration.\citet{condat2023tamuna} introduces accelerated Local Training methods that allow Client Sampling based on the \algname{ProxSkip} method, and \citet{grudzien2023can} provide an accelerated method with Client Sampling based on \algname{RandProx} method with primal and dual updates. However, these methods are limited as they only work with a uniform distribution of clients. \algname{CompressedScaffnew} \citep{condat2022provably} is the first LT method to achieve accelerated communication complexity while utilizing compression of updates. However, it works only with permutation-based compressors \citep{PermK}, and it is not compatible with a broad range of unbiased compressors. Permutation-based compressors demand synchronization of compression patterns during the aggregation stage, which is not feasible for Federated Learning settings due to privacy aspects.

\section{Contributions}
	Our work is based on the observation that none of the 5th generation Local Training (LT) methods currently support both Client Sampling (CS) and Communication Compression (CC). This raises the question of whether it is possible to design a method that can benefit from communication acceleration via LT while also supporting CS and utilizing Communication Compression techniques. 
	
	At this point, we are prepared to summarize the crucial observations and contributions made in our work.
	\begin{itemize}
			\item To the best of our knowledge, we provide the first LT  method that successfully combines communication acceleration through local steps, Client Sampling techniques, and Communication Compression for a wide range of unbiased compressors. Our proposed algorithm for distributed optimization and federated learning is the first of its kind to utilize both strategies in combination, resulting in a doubly accelerated rate. Our method based on method \algname{5GCS} \citep{grudzien2023can} benefits from the two acceleration mechanisms provided by Local Training and compression in the Client Sampling regime, exhibiting improved dependency on the condition number of the functions and the dimension of the model, respectively.
		
	\item 	In this paper, we investigate a comprehensive Client Sampling framework based on the work of ~\citet{tyurin2022sharper}, which we then apply to the \algname{5GCS} method proposed by \citet{grudzien2023can}. This approach enables us to analyze a wide range of Client Sampling techniques, including both sampling with and without replacement and it recovers previous results for uniform distribution. The framework also allows us to determine optimal probabilities, which results in improved communication.

	\end{itemize}

\section{Preliminaries}

\subsection{Method's description}
This section provides a description of the proposed methods in this paper. Specifically, we consider two algorithms (Algorithm 1 and Algorithm 2), both of which share the same core idea. At the beginning of the training process, we initialize several parameters, including the starting point $x^0$, the dual (control) iterates $u^0_1,\ldots,u^0_M$, the primal (server-side) stepsize, and $M$ dual (local) stepsizes. Additionally, we choose a sampling scheme $\mathbf{S}$ for Algorithm 1 or a type of compressor $\mathcal{Q}$ for Algorithm 2. Once all parameters are set, we commence the iteration cycle.

At the start of each communication round, we sample a cohort (subset) of clients according to a particular scheme. The server then computes the intermediate model $\hat{x}^t$ and sends this point to each client in the cohort. Once each client receives the model $\hat{x}^t$, the worker uses it as a starting point for solving the local sub-problem defined in Equation~\ref{asn:argmin}. After approximately solving the local sub-problem, each client computes the gradient of the local function at the approximate solution $\nabla F_m(y^{K,t}_m)$ and, based on this information, each client forms and sends an update to the server, either with or without compression. The server then aggregates the received information from workers and updates the global model $x^{t+1}$ and additional variables if necessary. This process repeats until convergence.

\subsection{Assumptions}
We begin by adopting the standard assumption in convex optimization \citep{NesterovBook}.
\begin{assumption}
	\label{assm:L-smooth-conv} The functions $f_m$ are $L_m$-smooth and $\mu_m$-strongly convex for all $m \in\left\lbrace1,...,M\right\rbrace.$
\end{assumption}	
All of our theoretical results will rely on this standard assumption in convex optimization. To recap, a continuously differentiable function $\phi:\mathbb{R}^d\rightarrow\mathbb{R}$ is $L$-smooth if $\phi(x) - \phi(y) - \langle\nabla\phi(y), x-y\rangle \leq \frac{L}{2}\|x-y\|^2$ for all $x, y \in \mathbb{R}^d$, and $\mu$-strongly convex if $\phi(x) - \phi(y) - \langle\nabla\phi(y), x-y\rangle \geq \frac{\mu}{2}\|x-y\|^2$ for all $x, y \in \mathbb{R}^d$, $\overline{L} = \frac{1}{M}\sum_{m=1}^{M} L_m$ and $L_{\max} = \max_m L_m$.

Our method employs the same reformulation of problem~$\ref{main_form}$ as it is used in \citet{grudzien2023can}, which we will now describe. Let $H:\mathbb{R}^d\rightarrow\mathbb{R}^{Md}$ be the linear operator that maps $x\in\mathbb{R}^d$ to the vector $(x, \ldots, x)\in\mathbb{R}^{Md}$ consisting of $M$ copies of $x$. First, note that $F_m(x):=\frac{1}{M}\left(f_m(x) - \frac{\mu_m}{2}\|x\|^2\right)$ is convex and $L_{F,m}$-smooth, where $L_{F,m}:=\frac{1}{M}(L_m-\mu_m)$. Furthermore, we define $F:\mathbb{R}^{Md}\rightarrow\mathbb{R}$ as $F\left(x_1, \ldots, x_M\right):=\sum_{m=1}^M F_m\left(x_m\right)$.

Having introduced the necessary notation, we state the following formulation in the lifted space, which is equivalent to the initial problem~$\ref{main_form}$:
\begin{align}
	\label{lifted_form}
	x^{\star}=\underset{x \in \mathbb{R}^d}{\arg \min }\left[f(x):=F(H x)+\frac{\mu}{2}\|x\|^2\right],
\end{align}
where $\mu = \frac{1}{M}\sum_{m=1}^{M}\mu_m$.

The dual problem to~\ref{lifted_form} has the following form:
\begin{align}
	\label{dual_form}
	u^{\star}=\underset{u \in \mathbb{R}^{M d}}{\arg \max }\left(\frac{1}{2 \mu}\left\|\sum_{m=1}^M u_m\right\|^2+\sum_{m=1}^M F_m^*\left(u_m\right)\right),
\end{align}
where $F_m^*$ is the Fenchel conjugate of $F_m$, defined by $F_m^*(y):=\sup _{x \in \mathbb{R}^d}\left\{\langle x, y\rangle-F_m(x)\right\}$. Under Assumption~\ref{assm:L-smooth-conv} , the primal and dual problems have unique optimal solutions $x^{\star}$ and $u^{\star}$, respectively. 

Next, we consider the tool of analyzing sampling schemes, which is Weighted AB Inequality from \citet{tyurin2022sharper}. Let $\Delta^M:=\left\{\left(p_1, \ldots, p_M\right) \in \mathbb{R}^M \mid p_1, \ldots, p_M \geq 0, \sum_{m=1}^M p_m=1\right\}$ be the standard simplex and $(\Omega, \mathcal{F}, \mathbf{P})$ a probability space.
\begin{assumption}\label{weighted}
	
	(Weighted AB Inequality). Consider the random mapping $\mathbf{S}: \left\lbrace 1,\ldots, M \right\rbrace \times \Omega \rightarrow \left\lbrace 1,\ldots, M \right\rbrace    $, which we call “sampling”. For each sampling we consider the random mapping that we call estimator  $S : \R^d \times \ldots \times \R^d \times \Omega \to \R^d$, such that $\Exp{S(a_1, \ldots , a_M; \psi)} = \frac{1}{M}\sum_{m=1}^M a_m$ for all $a_1,\ldots,a_M \in \R^d$. Assume that there exist $A,B \geq 0$ and weights $(w_1,\ldots,w_M) \in \Delta^M$ such that
	$$ \Exp{\sqnorm{S(a_1,\ldots,a_M;\psi) -  \frac{1}{M}\sum_{m=1}^M a_m}} \leq \frac{A}{M^2}\sum_{m=1}^M \frac{\sqnorm{a_m}}{w_m} - B\sqnorm{\frac{1}{M}\sum_{m=1}^M a_m} , \forall a_m \in\R^d.$$

\end{assumption}

Furthermore, it is necessary to specify the number of local steps to solve sub-problem \ref{asn:argmin}. To maintain the generality and arbitrariness of local solvers, we use an inequality that ensures the accuracy of the approximate solutions of local sub-problems is sufficient. It should be noted that the assumption below covers a broad range of optimization methods, including all linearly convergent algorithms.

\begin{assumption}
	\label{assm:localtr}
	(Local Training). Let $\left\{\mathcal{A}_1, \ldots, \mathcal{A}_M\right\}$ be any Local Training (LT) subroutines for minimizing functions $\left\{\psi_1^t, \ldots, \psi_M^t\right\}$ defined in~\ref{asn:argmin}, capable of finding points $\left\{y_1^{K, t}, \ldots, y_M^{K, t}\right\}$ in $K$ steps, from the starting point $y_m^{0, t}=\hat{x}^t$ for all $m \in\{1, \ldots, M\}$, which satisfy the inequality
	\begin{eqnarray*}
	 	\sum_{\iclient=1}^\Mx\frac{4}{\tauM_\iclient^2}\frac{\mu_\iclient L^2_{F_\iclient}}{3M}\sqnorm{\lastlocittermk-\localsolmk}+\sum_{\iclient=1}^\Mx\frac{L_{F_\iclient}}{\tauM_\iclient^2}\sqnorm{\nabla\localfuni (\lastlocittermk)}\leq\sum_{\iclient=1}^\Mx\frac{\mu_\iclient}{6M}\sqnorm{\hat{x}^t-\localsolmk},
	\end{eqnarray*}
	where $y_m^{\star, t}$ is the unique minimizer of $\psi_m^t$, and $\tauM_\iclient\geq\frac{8\mu_\iclient}{3\Mx}$.
\end{assumption}
Finally, we need to specify the class of compression operators. We consider the class of unbiased compressors with conic variance \citep{condat2021murana}. 
\begin{assumption}
	\label{asn:conic}
	(Unbiased compressor). A randomized mapping $\mathcal{Q}: \mathbb{R}^d \rightarrow \mathbb{R}^d$ is an unbiased compression operator $\left(\mathcal{Q} \in \mathbb{U}(\omega)\right.$ for brevity) if for some $\omega \geq 0$ and $\forall x \in \mathbb{R}^d$
	$$\mathbb{E} \mathcal{Q}(x)=x, \text{	(Unbiasedness) }\quad \mathbb{E}\|\mathcal{Q}(x)-x\|^2 \leq \omega\|x\|^2
	\text{ (Conic variance) }.$$
\end{assumption}

\section{Communication Compression}
In this section we provide convergence guarantees for the Algorithm~\ref{alg:5GC-CC} (\algname{5GCS-CC}), which is the version that combines Local Training, Client Sampling and Communication Compression.
\begin{theorem} \label{thm:inexactCompr}
Let Assumption~\ref{assm:L-smooth-conv} hold. Consider Algorithm~\ref{alg:5GC-CC} (\algname{5GCS-CC}) with the LT solvers $\mathcal{A}_m$ satisfying Assumption~\ref{assm:localtr} and compression operators $\mathcal{Q}_m$ satisfying Assumption~\ref{asn:conic}. Let $\tau = \tau_m$ for all $m\in \left\lbrace 1,\ldots, M \right\rbrace$ and  $\frac{1}{\tauM}-\gammaM(\Mx+\conic\frac{\Mx}{\Cx})\geq \frac{4}{\tauM^2}\frac{\mu}{3\Mx}$, for example: $\tauM\geq\frac{8\mu}{3\Mx}$ and $\gammaM=\frac{1}{2\tauM\left(\Mx+\conic\frac{\Mx}{\Cx}\right)}$. Then for the Lyapunov function
	\begin{equation*}
		  	\Psi^{\kstep}\eqdef \frac{1}{\gammaM}\sqnorm{x^{\kstep}-x^\star}+\frac{\Mx}{\Cx}\left(\omega+1\right)\left(\frac{1}{\tauM}+\frac{1}{L_{F,\max}}\right)\sum_{m=1}^{M}\sqnorm{u^{\kstep}_m-u_m^\star},
	\end{equation*}
	the iterates  satisfy
	$		\Exp{\Psi^{\Tx}}\leq (1-\rho)^\Tx \Psi^0,
	$	where 
	$
	\rho\eqdef  \min\left\{\frac{\gammaM\mu}{1+\gammaM\mu},\frac{C}{M(1+\omega)}\frac{\tauM}{(L_{F,{\max}}+\tauM)}\right\}<1.
	$  
	
	%	Also, $(x^\kstep)_{\kstep\in\mathbb{N}}$ and $(\hat{x}^\kstep)_{\kstep\in\mathbb{N}}$ both converge  to $x^\star$ and $(u^\kstep)_{\kstep\in\mathbb{N}}$ converges to $u^\star$, almost surely.
\end{theorem}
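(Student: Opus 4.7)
The plan is to establish a one-step Lyapunov contraction $\mathbb{E}[\Psi^{t+1}\mid\mathcal{F}^t] \leq (1-\rho)\Psi^t$ and iterate to obtain the claimed geometric rate on $\Exp{\Psi^{\Tx}}$. The analysis follows the primal-dual strategy developed for \algname{5GCS} in \citet{grudzien2023can}, enlarged to absorb the two extra sources of variance present in \algname{5GCS-CC}: the compression variance controlled by $\omega$ and the subsampling variance controlled by the cohort ratio $M/C$.

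First, I would expand the primal recursion. The server-side step writes $x^{t+1}$ as $x^t$ shifted by $\gamma$ times an unbiased estimator of $\nabla f(x^t)$ built from the compressed messages sent by the sampled cohort. Taking conditional expectation first over the compressors $\mathcal{Q}_m$ (using unbiasedness and the conic-variance bound of Assumption~\ref{asn:conic}) and then over the client sampling, the variance of this aggregated estimator splits cleanly into a deterministic part and two fluctuation parts scaled by $\omega$ and $M/C$, respectively. Combining with $\mu$-strong convexity of $f$ produces a contraction of the form $\|x^{t+1}-x^\star\|^2 \leq \tfrac{1}{1+\gamma\mu}\|x^t-x^\star\|^2$ modulo leftover variance proportional to $\gamma^2\bigl(M + \omega\tfrac{M}{C}\bigr)$ times $\sum_m \|u^t_m-u^\star_m\|^2$ plus inexact-LT residuals.

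Second, I would control the dual iterates. Each $u^{t+1}_m$ equals $u^t_m$ when $m$ is not sampled, and on the cohort is perturbed by $\tau\,\mathcal{Q}_m(\,\cdot\,)$ applied to a term involving $\nabla F_m(y^{K,t}_m)$. Using the optimality identity $u^\star_m = \nabla F_m(x^\star)$ together with the co-coercivity of $\nabla F_m$, taking expectation over compression (picking up the factor $\omega+1$) and over sampling (picking up $C/M$) gives $\mathbb{E}\|u^{t+1}_m-u^\star_m\|^2 \leq (1-\eta)\|u^t_m-u^\star_m\|^2 + \text{LT residuals}$ with $\eta$ of the order $\tfrac{C\tau}{M(1+\omega)(L_{F,\max}+\tau)}$.

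Finally, I would weight the two one-step bounds by $1/\gamma$ and $\tfrac{M}{C}(\omega+1)\bigl(\tfrac{1}{\tau}+\tfrac{1}{L_{F,\max}}\bigr)$ respectively and sum. The stepsize hypothesis $\tfrac{1}{\tau}-\gamma\bigl(M+\omega\tfrac{M}{C}\bigr)\geq \tfrac{4\mu}{3M\tau^2}$ is exactly what is needed so that the primal variance leftover is dominated by the dual contraction contribution, while the remaining residuals $\sum_m \tfrac{4\mu_m L_{F_m}^2}{3M\tau^2}\|y^{K,t}_m-y^{\star,t}_m\|^2 + \sum_m \tfrac{L_{F_m}}{\tau^2}\|\nabla \psi^t_m(y^{K,t}_m)\|^2$ are wiped out by invoking Assumption~\ref{assm:localtr}; its right-hand side $\sum_m \tfrac{\mu_m}{6M}\|\hat{x}^t-y^{\star,t}_m\|^2$ is then folded back into the Lyapunov function via the definition of $\hat{x}^t$ and the KKT characterization of $y^{\star,t}_m$. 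The main obstacle will be the bookkeeping of these three cross-cancellations: compression, subsampling, and inexact LT each contribute their own residual, and only the precise balancing dictated by the stepsize condition leaves the clean rate $\rho = \min\bigl\{\gamma\mu/(1+\gamma\mu),\; C\tau/(M(1+\omega)(L_{F,\max}+\tau))\bigr\}$.
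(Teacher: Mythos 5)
Your proposal is essentially the same proof as the paper's, with a packaging difference. The paper first proves an abstract one-step contraction for Algorithm~\ref{alg:5GCSnew} (\algname{Inexact-RandProx}) under a generic ``AB inequality'' on the stochastic dual operator $\Ropp$ (Theorem~\ref{thm:inexact}), and then obtains Theorem~\ref{thm:inexactCompr} as a corollary by verifying that the composition of uniform $C$-sampling with coordinatewise unbiased compressors satisfies that inequality with $A=\frac{M}{C}\frac{M-C}{M-1}+\omega\frac{M}{C}$ and $B=\frac{M-C}{C(M-1)}$. You instead expand the primal and dual recursions directly and interleave the two expectations (over $\mathcal{Q}_m$ first, then over the cohort). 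Both routes rest on the same primitives — variance decomposition, the fixed-point identities $(1+\gamma\mu)\hat{x}^t=x^t-\gamma H^\top u^t$ and $(1+\gamma\mu)x^\star=x^\star-\gamma H^\top u^\star$, co-coercivity of $\nabla F_m$, Young's inequality with $a_m=L_{F_m}/\tau$, and Assumption~\ref{assm:localtr} — and land on the same stepsize balance, so the mathematics is identical; the paper's abstraction simply lets it reuse the bulk of the computation across all its theorems.

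Two framing slips worth fixing if you write this out. First, the primal step is not ``$x^t$ shifted by $\gamma$ times an unbiased estimator of $\nabla f(x^t)$.'' It is $x^{t+1}=\hat{x}^t-\gamma\frac{M}{C}(1+\omega)(v^{t+1}-v^t)$ with $\hat{x}^t=\frac{1}{1+\gamma\mu}(x^t-\gamma v^t)$, i.e.\ a primal-dual / RandProx step correcting $\hat{x}^t$ by an estimate of the dual increment $H^\top(\bar{u}^{t+1}-u^t)$; the $\frac{1}{1+\gamma\mu}$ contraction factor comes from the prox of $\frac{\mu}{2}\|\cdot\|^2$ cancelling against the fixed-point characterization of $x^\star$, not from strong convexity applied to a gradient step. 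Second, the dual update on the cohort is $u_m^{t+1}=u_m^t+\tfrac{1}{1+\omega}\tfrac{C}{M}\mathcal{Q}_m(\bar{u}_m^{t+1}-u_m^t)$, not a perturbation scaled by $\tau$; the prefactor $\tfrac{1}{1+\omega}\tfrac{C}{M}$ is precisely what makes the total update an unbiased, conic-variance operator with effective variance parameter $\frac{M}{C}(\omega+1)-1$, which is where the weight $\frac{M}{C}(\omega+1)$ in the Lyapunov function comes from. Neither slip changes the logic you describe, but both would derail the bookkeeping if carried into a written derivation.
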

Next, we derive the communication complexity for Algorithm~\ref{alg:5GC-CC} (\algname{5GCS-CC}).
\begin{corollary}\label{cor:5GCSC}
	Choose any $0<\varepsilon<1$ and $\tauM = \frac{8}{3}\sqrt{\mu L_{\max}\left(\frac{\conic+1}{\Cx}\right)\frac{1}{\Mx\left(1+\frac{\conic}{\Cx}\right)}}$ and $\gammaM=\frac{1}{2\tauM\Mx\left(1+\frac{\conic}{\Cx}\right)}$. In order to guarantee $\Exp{\Psi^{\Tx}}\leq \varepsilon \Psi^0$, it suffices to take 
	\begin{eqnarray}
		  	T \geq	   
		\cO\left(\left(\frac{\Mx}{\Cx}\left(\conic+1\right)+\left(\sqrt{\frac{\conic}{\Cx}}+1\right)\sqrt{\left(\conic+1\right)\frac{\Mx}{\Cx}\frac{L}{\mu}}\right) \log \frac{1}{\varepsilon}\notag \right)
	\end{eqnarray}
	communication rounds.
\end{corollary}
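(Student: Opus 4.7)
The plan is to reduce the corollary to a direct algebraic computation from Theorem~\ref{thm:inexactCompr}. First I would apply the standard inequality $(1-\rho)^T \leq \exp(-\rho T)$, which implies that $T \geq \frac{1}{\rho}\log\frac{1}{\varepsilon}$ is sufficient to guarantee $\Exp{\Psi^T}\leq \varepsilon\Psi^0$. Hence the entire task is to upper bound $1/\rho$ under the prescribed choices of $\tauM$ and $\gammaM$. Since $\rho = \min\{\rho_1,\rho_2\}$ with $\rho_1 = \frac{\gammaM\mu}{1+\gammaM\mu}$ and $\rho_2 = \frac{\Cx}{\Mx(1+\conic)}\cdot\frac{\tauM}{L_{F,\max}+\tauM}$, I will use $1/\rho \leq 1/\rho_1 + 1/\rho_2$ and bound each reciprocal separately.

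For the first term, I compute $1/\rho_1 = 1 + \tfrac{1}{\gammaM\mu}$. Substituting $\gammaM = \tfrac{1}{2\tauM \Mx(1+\conic/\Cx)}$ gives $\tfrac{1}{\gammaM\mu} = \tfrac{2\tauM \Mx(1+\conic/\Cx)}{\mu}$, and then plugging in $\tauM = \tfrac{8}{3}\sqrt{\mu L_{\max}(\conic+1)/(\Cx \Mx(1+\conic/\Cx))}$ yields, after simplification,
\begin{equation*}
\tfrac{1}{\gammaM\mu} \;=\; \tfrac{16}{3}\sqrt{(1+\tfrac{\conic}{\Cx})\,\tfrac{\Mx}{\Cx}(\conic+1)\,\tfrac{L_{\max}}{\mu}}.
\end{equation*}
For the second term, I split $1/\rho_2 = \tfrac{\Mx(1+\conic)}{\Cx} + \tfrac{\Mx(1+\conic)L_{F,\max}}{\Cx\tauM}$. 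The first summand already matches the $\tfrac{\Mx}{\Cx}(\conic+1)$ contribution in the final bound. For the second, I use $L_{F,\max}\leq L_{\max}/\Mx$ from Assumption~\ref{assm:L-smooth-conv} and again substitute $\tauM$; the calculation is symmetric to the one above and produces, up to constants, the same $\sqrt{(1+\conic/\Cx)\,(\Mx/\Cx)(\conic+1)(L_{\max}/\mu)}$ expression.

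Finally, I would combine the two bounds and simplify the prefactor $\sqrt{1+\conic/\Cx}$ using $\sqrt{1+\conic/\Cx}\leq 1 + \sqrt{\conic/\Cx}$, which turns the square-root contribution into $(\sqrt{\conic/\Cx}+1)\sqrt{(\conic+1)\tfrac{\Mx}{\Cx}\tfrac{L}{\mu}}$ as stated. Adding the $\tfrac{\Mx}{\Cx}(1+\conic)$ piece and absorbing universal constants into the $\cO(\cdot)$ notation gives precisely the claimed complexity, and multiplication by $\log(1/\varepsilon)$ closes the argument.

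The main obstacle is not conceptual but bookkeeping: one must verify that the particular choice of $\tauM$ really does balance the two contributions to $1/\rho$ so that no extra factors (in particular no $\sqrt{\Mx}$ or stray $\Cx$) appear. One must also check that $\tauM\geq \tfrac{8\mu}{3\Mx}$, required for Assumption~\ref{assm:localtr} and Theorem~\ref{thm:inexactCompr}; this is a quick verification using $L_{\max}\geq \mu$ and $\conic+1\geq 1$. Once these two verifications are done, the remainder is routine algebra.
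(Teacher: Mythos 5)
Your proposal is correct and takes essentially the same route as the paper. Both proofs start from Theorem~\ref{thm:inexactCompr}, verify the stepsize conditions, and then bound the number of rounds by estimating $1/\rho$ under the prescribed $\tauM$ and $\gammaM$, using $L_{F,\max}\leq L_{\max}/\Mx$ and the inequality $\sqrt{1+\conic/\Cx}\leq 1+\sqrt{\conic/\Cx}$ to reach the stated $\cO(\cdot)$ form. The only cosmetic difference is that you bound $1/\rho = 1/\min\{\rho_1,\rho_2\} = \max\{1/\rho_1, 1/\rho_2\}$ from above by the sum $1/\rho_1 + 1/\rho_2$, whereas the paper keeps the max; this costs at most a factor of two and is invisible inside the $\cO(\cdot)$.
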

Note, if no compression is used ($\omega = 0$) we recover the rate of \algname{5GCS}: $\mathcal{O}\left(\left(\nicefrac{M}{C}+\sqrt{\nicefrac{ML}{ C \mu}}\right)\log \frac{1}{\varepsilon}\right)$. 

\section{General Client Sampling}
\begin{algorithm*}[!t]
	\caption{\algn{5GCS-CC}}
	\footnotesize
	\begin{algorithmic}[1]\label{alg:5GC-CC}
		\STATE  \textbf{Input:} initial primal iterates $x^0\in\mathbb{R}^d$; initial dual iterates $u_1^0, \dots,u_{\Mx}^0 \in\mathbb{R}^d$; primal stepsize $\gammaM>0$; dual stepsize $\tauM>0$; cohort size $\Cx\in \{1,\dots,\Mx\}$
		\STATE  \textbf{Initialization:}  $v^0\eqdef \sum_{\iclient=1}^\Mx u_\iclient^0$  \hfill {\color{gray} \footnotesize $\diamond$ The server initiates $v^{0}$ as the sum of the initial dual iterates} 
		\FOR{communication round $t=0, 1, \ldots$} 
		\STATE Choose a cohort $\set\subset \{1,\ldots,\Mx\}$ of clients of cardinality $\Cx$, uniformly at random \hfill {\color{gray} \footnotesize $\diamond$  CS step} 
		\STATE Compute $\hat{x}^{t} = \frac{1}{1+\gammaM\mu} \left(x^t - \gammaM v^t\right)$ and broadcast it to the clients  in the cohort 
		\FOR{$\iclient\in \set$}
		\STATE Find $\lastlocittermk$ as the final point after $\Kx$ iterations of some local optimization algorithm $\mathcal{A}_\iclient$, initiated with $y_\iclient^0=\hat{x}^t$, for solving the optimization problem \hfill {\color{gray} \footnotesize $\diamond$ Client $\iclient$ performs $\Kx$ LT steps} 
		\begin{eqnarray}
			\label{asn:argmin}
			  	\lastlocittermk \approx \argmin \limits_{y\in\mathbb{R}^d}\left\{\localfuni(y) \eqdef  F_\iclient(y)+\frac{\tauM_m}{2} \sqnorm{y-\left(\hat{x}^\kstep+\frac{1}{\tauM_m}u_\iclient^t\right)}\right\}\label{localprob}
		\end{eqnarray}
		\STATE Compute $\bar{u}_\iclient^{t+1}=\nabla F_\iclient(\lastlocittermk)$ 
		\STATE  $u_\iclient^{t+1}= u_\iclient^t + \tfrac{1}{1+\conic}\tfrac{\kcohort}{\Mx} Q_\iclient\left(\bar{u}_\iclient^{t+1}-u_\iclient^t\right)$ 
		\STATE Send $Q_\iclient\left(\bar{u}^{t+1}_m-u_m^t\right)$ to the server. \hfill {\color{gray} \footnotesize $\diamond$ Server updates $u_\iclient^{t+1}$}
		\ENDFOR
		\STATE 
		\FOR{$\iclient\in\{1, \ldots,\Mx\}\backslash \set$}
		\STATE $u_{\iclient}^{t+1}\eqdef u_{\iclient}^t $ \hfill {\color{gray} \footnotesize $\diamond$ Non-participating clients do nothing} 
		\ENDFOR
		\STATE $v^{t+1} \eqdef v^t + \frac{1}{1+\omega} \frac{C}{M}\sum_{m=1}^{M} \mathcal{Q}_m\left(\bar{u}^{t+1}_m-u_m^t\right) $ \hfill {\color{gray} \footnotesize $\diamond$ The server keeps  $v^{t+1}$ as the sum of the dual iterates} 
		\STATE  $x^{t+1} \eqdef \hat{x}^{t}- \gammaM \frac{\Mx}{\Cx}\left(1+\conic\right) (v^{t+1}-v^t)$ \hfill {\color{gray} \footnotesize $\diamond$ The server updates the primal iterate} 
		\ENDFOR
	\end{algorithmic}
\end{algorithm*}

In this section we analyze Algorithm~\ref{alg:5GCS-AB} (\algname{5GCS-AB}). First, we introduce a general result for all sampling schemes that can satisfy Assumption~\ref{weighted}

\begin{theorem}
\label{thm:5GCS-AB}
Let Assumption~\ref{assm:L-smooth-conv} hold. Consider Algorithm~\ref{alg:5GCS-AB} with sampling scheme $\mathbf{S}$ satisfying Assumption~\ref{weighted} and LT solvers $\mathcal{A}_m$ satisfying Assumption~\ref{assm:localtr}. Let the inequality hold $\frac{1}{\tauM_\iclient}-\left(\gammaM\left(1-B\right)\Mx +\gammaM\frac{A}{w_\iclient}\right)\geq \frac{4}{\tauM_\iclient^2}\frac{\mu_\iclient}{3M}$, e.g. $\tauM_\iclient\geq\frac{8\mu_\iclient}{3\Mx}$ and $\gammaM\leq\frac{1}{2\tauM_\iclient\left(\left(1-B\right)\Mx +\frac{A}{w_\iclient}\right)}$.  Then for the Lyapunov function 
	\begin{equation*}
		  	\Psi^{\kstep}\eqdef \frac{1}{\gammaM}\sqnorm{x^{t}-x^\star}+\sum_{\iclient=1}^\Mx\left(1+\qm\right)\left(\frac{1}{\tauM_\iclient}+\frac{1}{L_{F_\iclient}}\right)\sqnorm{u_\iclient^{t}-u_\iclient^\star},
	\end{equation*}
	the iterates of  the method satisfy
	\begin{equation*}
	 	\Exp{\Psi^{t+1}}\leq  \max\left\{\frac{1}{1+\gammaM\mu} ,\max_{m}\left[\frac{ L_{F_\iclient}+\frac{\qm}{1+\qm}\tauM_\iclient}{L_{F_\iclient}+\tauM_\iclient}\right] \right\}\Exp{\Psi^{t}},
	\end{equation*}
	where $\qm=\frac{1}{\widehat{p}_\iclient}-1$ and $\widehat{p}_\iclient$ is probability that $m$-th client is participating. 
\end{theorem}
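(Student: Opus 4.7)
My strategy is a primal-dual Lyapunov analysis mirroring the one used for 5GCS, but substituting the uniform-sampling variance bound with the weighted AB bound of Assumption~\ref{weighted}, and carefully propagating the client-dependent participation probabilities $\widehat{p}_\iclient$ into the dual contraction. First I would unfold the primal update
$x^{t+1} = \hat{x}^t - \gammaM\, \Mx\cdot S(\bar{u}^{t+1}_1-u_1^t,\ldots,\bar{u}^{t+1}_\Mx-u_\Mx^t;\psi)$
and take the conditional expectation of $\sqnorm{x^{t+1}-x^\star}$ over the sampling. Unbiasedness isolates the ``mean direction'' $\tfrac{1}{\Mx}\sum_\iclient (\bar{u}^{t+1}_\iclient - u_\iclient^t)$ while the fluctuation is bounded via Assumption~\ref{weighted} with $a_\iclient = \bar{u}^{t+1}_\iclient - u_\iclient^t$. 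Combining this with the KKT relations $\mu x^\star + \tfrac{1}{\Mx}\sum_\iclient u^\star_\iclient = 0$ and $u^\star_\iclient = \nabla F_\iclient(x^\star)$ for the pair~(\ref{lifted_form})--(\ref{dual_form}), together with the strong convexity of $f$, the mean part yields a $\tfrac{1}{1+\gammaM\mu}$ contraction on $\sqnorm{x^t-x^\star}$ plus cross terms in $\bar{u}^{t+1}_\iclient - u^\star_\iclient$ and $\sqnorm{\hat{x}^t-x^\star}$.

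Next I would analyse the dual updates. Since $u_\iclient^{t+1}=u_\iclient^t$ when $\iclient\notin S^t$ and is pulled toward $\bar{u}^{t+1}_\iclient$ when $\iclient\in S^t$, taking expectation with respect to the sampling produces a convex combination of $\sqnorm{u_\iclient^t-u_\iclient^\star}$ and a term in $\sqnorm{\bar{u}^{t+1}_\iclient-u_\iclient^\star}$ whose coefficients depend on $\widehat{p}_\iclient$; this is exactly where the factor $\qm = 1/\widehat{p}_\iclient - 1$ enters the Lyapunov weight $(1+\qm)(1/\tauM_\iclient + 1/L_{F_\iclient})$. I would then rewrite $\bar{u}^{t+1}_\iclient = \nabla F_\iclient(\lastlocittermk)$ via the exact sub-problem optimality $\nabla F_\iclient(\localsolmk) + \tauM_\iclient(\localsolmk - \hat{x}^t) - u_\iclient^t = 0$, split $\bar{u}^{t+1}_\iclient - u^\star_\iclient$ into an ``exact'' part (to which $L_{F_\iclient}$-smoothness and $\tauM_\iclient$-strong convexity of $\localfuni$ produce the characteristic $\tfrac{L_{F_\iclient}}{L_{F_\iclient}+\tauM_\iclient}$-type factor) and an inexactness remainder involving $\sqnorm{\lastlocittermk-\localsolmk}$ and $\sqnorm{\nabla\localfuni(\lastlocittermk)}$ that is exactly the left-hand side of Assumption~\ref{assm:localtr}.

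Combining both bounds with the chosen Lyapunov weights, the $(1+\qm)$ pre-factor on the dual side is tuned so that the $\sqnorm{\bar{u}^{t+1}_\iclient-u^\star_\iclient}$ terms generated by the primal cross terms match and cancel those of the dual step. All residual cross terms in $\sqnorm{\hat{x}^t-x^\star}$ (contributed both by Assumption~\ref{weighted} through $A/w_\iclient$ and by the exact optimality condition of $\localfuni$) must then be absorbed into the slack $\tfrac{1}{\tauM_\iclient}-\gammaM((1-B)\Mx + A/w_\iclient)$. The stated stepsize condition $\tfrac{1}{\tauM_\iclient} - \gammaM((1-B)\Mx + A/w_\iclient) \geq \tfrac{4}{\tauM_\iclient^2}\tfrac{\mu_\iclient}{3\Mx}$ is precisely what is needed for this absorption, with the right-hand side matching the constant in Assumption~\ref{assm:localtr}. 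Once the cross-terms vanish, the recursion collapses to $\Exp{\Psi^{t+1}} \le \max\{\tfrac{1}{1+\gammaM\mu}, \max_\iclient \tfrac{L_{F_\iclient}+\tfrac{\qm}{1+\qm}\tauM_\iclient}{L_{F_\iclient}+\tauM_\iclient}\}\Exp{\Psi^t}$ by the tower property.

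\textbf{Main obstacle.} The hard part will be the simultaneous bookkeeping of three interacting error sources: (i) the AB-controlled sampling variance, whose per-client weight $A/(\Mx^2 w_\iclient)$ must be re-expressed in $u$-differences without breaking the per-client structure of the Lyapunov function; (ii) the stale-dual inflation encoded by $\qm$, which scales the dual Lyapunov weights and thereby dictates the exact form of the contraction factor $\tfrac{L_{F_\iclient}+(\qm/(1+\qm))\tauM_\iclient}{L_{F_\iclient}+\tauM_\iclient}$; and (iii) the LT inexactness, which couples both $\sqnorm{\lastlocittermk-\localsolmk}$ and $\sqnorm{\nabla\localfuni(\lastlocittermk)}$ back to $\sqnorm{\hat{x}^t-\localsolmk}$. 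Verifying that the coefficients of all three error terms combine into exactly the stepsize condition stated in the theorem, so that the two contraction factors appearing in the final $\max$ are tight, is the most delicate calculation; everything else is a straightforward adaptation of the 5GCS analysis to the weighted setting.
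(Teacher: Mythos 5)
Your plan follows essentially the same route as the paper's proof: variance decomposition plus the weighted AB inequality on the primal step, the KKT identities to extract the $\tfrac{1}{1+\gammaM\mu}$ contraction, recognition of the per-client dual update as a Bernoulli step with conic variance $\qm=1/\widehat{p}_\iclient-1$ (which the paper makes explicit by rewriting $u_\iclient^{t+1}=u_\iclient^t+\tfrac{1}{1+\qm}\Ropp_\iclient(\bar{u}_\iclient^{t+1}-u_\iclient^t,\qm)$), then a Young's-inequality/strong-monotonicity cancellation that funnels all remaining error into $\sqnorm{\bar u_\iclient^{t+1}-u_\iclient^t}$ and the LT residual, absorbed by the stepsize slack and Assumption~\ref{assm:localtr}. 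One small imprecision: the $A/w_\iclient$ term from Assumption~\ref{weighted} directly injects $\sqnorm{\bar u_\iclient^{t+1}-u_\iclient^t}$ terms, not $\sqnorm{\hat x^t-x^\star}$ terms; the latter cancel on their own via the Young's split, and it is the $\sqnorm{\bar u_\iclient^{t+1}-u_\iclient^t}$ budget that the stepsize condition controls — but this does not change the structure or validity of the argument.
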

The obtained result is contingent upon the constants $A$ and $B$, as well as the weights $w_m$ specified in Assumption~\ref{weighted}. Furthermore, the rate of the algorithm is influenced by $\widehat{p}_m$, which represents the probability of the $m$-th client participating. This probability is dependent on the chosen sampling scheme $\mathbf{S}$ and needs to be derived separately for each specific case. In main part of the work we consider two important examples: Multisampling and Independent Sampling.

\subsection{Sampling with Replacement (Multisampling)}

Let $\underline{p} = \left(p_1, p_2, \ldots, p_M\right)$ be probabilities summing up to 1 and let $\chi_m$ be the random variable equal to $m$ with probability $p_m$. Fix a cohort size $C \in\{1,2, \ldots, M \}$ and let $\chi_1, \chi_2, \ldots, \chi_C$ be independent copies of $\chi$. Define the gradient estimator via
\begin{equation}
	\label{eq:swr}
	S\left(a_1, \ldots, a_n, \psi, \underline{p} \right):=\frac{1}{C} \sum_{m=1}^C \frac{a_{\chi_m}}{M p_{\chi_m}}
\end{equation}
By utilizing this sampling scheme and its corresponding estimator, we gain the flexibility to assign arbitrary probabilities for client participation while also fixing the cohort size. However, it is important to note that under this sampling scheme, certain clients may appear multiple times within the cohort.
\begin{lemma}
	\label{lem:multi}
	The Multisampling with estimator~\ref{eq:swr} satisfies the Assumption~\ref{weighted} with $A= B = \frac{1}{C}$ and $w_m=\prm$.
\end{lemma}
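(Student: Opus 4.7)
The plan is a direct variance computation exploiting independence of the $C$ samples. Write $X_i \eqdef \frac{a_{\chi_i}}{M p_{\chi_i}}$ so that $S = \frac{1}{C}\sum_{i=1}^{C} X_i$, and let $\bar a \eqdef \frac{1}{M}\sum_{m=1}^{M} a_m$.

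First I would verify unbiasedness: since $\chi_i$ takes value $j$ with probability $p_j$,
\begin{equation*}
\Exp{X_i} \;=\; \sum_{j=1}^{M} p_j \cdot \frac{a_j}{M p_j} \;=\; \bar a ,
\end{equation*}
so $\Exp{S} = \bar a$, which matches the unbiasedness requirement in Assumption~\ref{weighted}. Because $\chi_1,\dots,\chi_C$ are i.i.d., the $X_i$ are i.i.d.\ with common mean $\bar a$, and therefore the variance of the average splits cleanly:
\begin{equation*}
\Exp{\sqnorm{S - \bar a}} \;=\; \frac{1}{C^2}\sum_{i=1}^{C}\Exp{\sqnorm{X_i - \bar a}} \;=\; \frac{1}{C}\Exp{\sqnorm{X_1 - \bar a}} \;=\; \frac{1}{C}\Big(\Exp{\sqnorm{X_1}} - \sqnorm{\bar a}\Big).
\end{equation*}

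Next I would compute the second moment $\Exp{\sqnorm{X_1}}$ directly from the definition of $\chi_1$:
\begin{equation*}
\Exp{\sqnorm{X_1}} \;=\; \sum_{j=1}^{M} p_j \cdot \frac{\sqnorm{a_j}}{M^2 p_j^2} \;=\; \frac{1}{M^2}\sum_{j=1}^{M} \frac{\sqnorm{a_j}}{p_j} .
\end{equation*}
Plugging this back gives
\begin{equation*}
\Exp{\sqnorm{S - \bar a}} \;=\; \frac{1}{C}\cdot \frac{1}{M^2}\sum_{m=1}^{M} \frac{\sqnorm{a_m}}{p_m} \;-\; \frac{1}{C}\sqnorm{\bar a},
\end{equation*}
which is exactly the weighted AB inequality from Assumption~\ref{weighted} with $A = B = \tfrac{1}{C}$ and $w_m = p_m$ (noting $(p_1,\dots,p_M)\in\Delta^M$ by hypothesis).

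There is no genuine obstacle here: the only subtle point is recognizing that independence of the $\chi_i$ lets the cross terms vanish in $\Exp{\sqnorm{S-\bar a}}$, so the variance of a single draw $X_1$ (scaled by $1/C$) suffices, and the identity $\Exp{\sqnorm{X_1 - \bar a}} = \Exp{\sqnorm{X_1}} - \sqnorm{\bar a}$ immediately produces the required $-B\sqnorm{\bar a}$ term with $B = 1/C$ for free, rather than needing to bound it crudely by $0$.
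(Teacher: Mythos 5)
Your argument is correct and is essentially the same as the paper's: both expand $\Exp{\sqnorm{S-\bar a}}$, use independence of the $C$ draws to eliminate cross terms, reduce to the single-draw variance $\tfrac{1}{C}\bigl(\Exp{\sqnorm{X_1}}-\sqnorm{\bar a}\bigr)$, and evaluate $\Exp{\sqnorm{X_1}}=\tfrac{1}{M^2}\sum_m \sqnorm{a_m}/p_m$. The only cosmetic difference is that the paper writes out the expansion of the squared norm and explicitly notes the cross terms vanish, whereas you invoke the i.i.d.\ variance-of-average identity directly; the computations coincide.
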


\begin{algorithm*}[!t]
	\caption{\algn{5GCS-AB}}
	\footnotesize
	\begin{algorithmic}[1]\label{alg:5GCS-AB}
		\STATE  \textbf{Input:} initial primal iterate $x^0\in\mathbb{R}^d$; initial dual iterates $u_1^0, \dots,u_{\Mx}^0 \in\mathbb{R}^d$; primal stepsize $\gammaM>0$; dual stepsizes $\tauM_\iclient>0$; $\omega\in \mathcal{D}_\Mx$
		\STATE  \textbf{Initialization:}  $v^0\eqdef \sum_{\iclient=1}^\Mx u_\iclient^0$  \hfill {\color{gray} \footnotesize $\diamond$ The server initiates $v^{0}$ as the sum of the initial dual iterates} 
		\FOR{communication round $t=0, 1, \ldots$} 
		\STATE Sample a cohort $\set\subset \{1,\ldots,\Mx\}$ of clients according to sampling scheme $\mathbf{S}$
		\STATE Compute $\hat{x}^{t} = \frac{1}{1+\gammaM\mu} \left(x^t - \gammaM v^t\right)$ and broadcast it to the clients  in the cohort 
		\FOR{$\iclient\in \set$}
		\STATE Find $\lastlocittermk$ as the final point after $\Kx$ iterations of some local optimization algorithm $\mathcal{A}_\iclient$, initiated with $y_\iclient^0=\hat{x}^t$, for solving the optimization problem \hfill {\color{gray} \footnotesize $\diamond$ Client $\iclient$ performs $\Kx$ LT steps} 
		\begin{eqnarray}
			\label{asn:argmin1}
			  	\lastlocittermk \approx \argmin \limits_{y\in\mathbb{R}^d}\left\{\localfuni(y) \eqdef  F_\iclient(y)+\frac{\tauM_\iclient}{2} \sqnorm{y-\left(\hat{x}^\kstep+\frac{1}{\tauM_\iclient}u_\iclient^t\right)}\right\}\label{localprob}
		\end{eqnarray}
		\STATE Compute $\bar{u}_\iclient^{t+1}=\nabla F_\iclient(\lastlocittermk)$ 
		\STATE Update $u_{\iclient}^{t+1}=\bar{u}_{\iclient}^{t+1}.$ 
		\ENDFOR
		\FOR{$\iclient\in  \{1,\ldots,\Mx\}\setminus\set$}
		\STATE Update $u_{\iclient}^{t+1}=u_{\iclient}^{t}.$ 
		\ENDFOR
		\STATE  $x^{t+1} \eqdef \hat{x}^{t}- \gammaM \Mx \cdot S({u}_1^{t+1}-u^t_1,\ldots,{u}_\Mx^{t+1}-u^t_\Mx;\omega)$ \hfill {\color{gray} \footnotesize $\diamond$ The server updates the primal iterate} 
		\STATE 
		\STATE $v^{t+1}=\sum_{\iclient=1}^{\Mx}u_\iclient^{t+1}$
		\ENDFOR
	\end{algorithmic}
\end{algorithm*}
Now we are ready to formulate the theorem.

\begin{theorem} \label{thm:5GCMult}
	Let Assumption~\ref{assm:L-smooth-conv} hold. Consider Algorithm~\ref{alg:5GCS-AB} (\algname{5GCS-AB}) with Multisampling and estimator~\ref{eq:swr} satisfying Assumption~\ref{weighted} and LT solvers $\mathcal{A}_m$ satisfying Assumption~\ref{assm:localtr}. Let the inequality hold $\frac{1}{\tauM_\iclient}-\left(\gammaM\left(1-\frac{1}{\Cx}\right)\Mx +\gammaM\frac{1}{\Cx \prm}\right)\geq \frac{4}{\tauM_\iclient^2}\frac{\mu_\iclient}{3\Mx}$ , e.g. $\tauM_\iclient\geq\frac{8\mu_\iclient}{3\Mx}$ and $\gammaM\leq\frac{1}{2\tauM_\iclient\left(\left(1-\frac{1}{\Cx}\right)\Mx +\frac{1}{\Cx \prm}\right)}$.  Then for the Lyapunov function 
	\begin{equation*}
		  	\Psi^{\kstep}\eqdef \frac{1}{\gammaM}\sqnorm{x^{t}-x^\star}+\sum_{\iclient=1}^\Mx\frac{1}{\widehat{p}_\iclient}\left(\frac{1}{\tauM_\iclient}+\frac{1}{L_{F_\iclient}}\right)\sqnorm{u_\iclient^{t}-u_\iclient^\star},
	\end{equation*}
	the iterates of  the method satisfy
	\begin{equation*}
	 	\Exp{\Psi^{t+1}}\leq  \max\left\{\frac{1}{1+\gammaM\mu} , \max_{m}\left[\frac{ L_{F_\iclient}+\left(1-\widehat{p}_\iclient\right)\tauM_\iclient}{L_{F_\iclient}+\tauM_\iclient} \right\}\right]\Exp{\Psi^{t}},
	\end{equation*}
	where $\widehat{p}_\iclient = 1-\left(1-\prm\right)^\Cx$ is probability that $m$-th client is participating. 
	
	%	Also, $(x^\kstep)_{\kstep\in\mathbb{N}}$ and $(\hat{x}^\kstep)_{\kstep\in\mathbb{N}}$ both converge  to $x^\star$ and $(u^\kstep)_{\kstep\in\mathbb{N}}$ converges to $u^\star$, almost surely.
\end{theorem}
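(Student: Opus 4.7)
The plan is to derive Theorem~\ref{thm:5GCMult} as a direct specialization of the general Theorem~\ref{thm:5GCS-AB} to the Multisampling estimator defined in \eqref{eq:swr}. The heavy lifting is already available: Theorem~\ref{thm:5GCS-AB} provides the abstract one-step contraction under Assumption~\ref{weighted}, and Lemma~\ref{lem:multi} identifies the constants $A=B=1/C$ with weights $w_m=p_m$ that certify Assumption~\ref{weighted} for Multisampling. So the proof reduces to three bookkeeping steps.

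First, I would plug $A=B=1/C$ and $w_m=p_m$ into the stepsize prerequisite $\tfrac{1}{\tau_m}-\gamma(1-B)M-\gamma A/w_m \ge \tfrac{4\mu_m}{3M\tau_m^2}$ of Theorem~\ref{thm:5GCS-AB}; this instantly yields the inequality asserted in Theorem~\ref{thm:5GCMult}, and the sufficient choices $\tau_m\ge 8\mu_m/(3M)$ together with $\gamma \le 1/(2\tau_m((1-1/C)M + 1/(C p_m)))$ follow in exactly the same way as in the master theorem. Second, I would compute the participation probability $\widehat{p}_m$ for Multisampling: because the cohort $S^t$ is produced by $C$ i.i.d.\ draws $\chi_1,\dots,\chi_C$ with $\mathbb{P}(\chi_j=m)=p_m$, client $m$ is missed by a single draw with probability $1-p_m$ and, by independence, missed by the entire cohort with probability $(1-p_m)^C$, giving $\widehat{p}_m = 1-(1-p_m)^C$.

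Third, I would translate the Lyapunov function and contraction factor of Theorem~\ref{thm:5GCS-AB} using $q_m = 1/\widehat{p}_m-1$. Since $1+q_m = 1/\widehat{p}_m$, the coefficient in front of $\|u_m^t-u_m^\star\|^2$ becomes $\tfrac{1}{\widehat{p}_m}(1/\tau_m + 1/L_{F_m})$, matching the Lyapunov function in Theorem~\ref{thm:5GCMult}; and because $q_m/(1+q_m) = 1-\widehat{p}_m$, the per-coordinate rate factor $(L_{F_m}+\tfrac{q_m}{1+q_m}\tau_m)/(L_{F_m}+\tau_m)$ simplifies to $(L_{F_m}+(1-\widehat{p}_m)\tau_m)/(L_{F_m}+\tau_m)$, recovering the stated bound.

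I do not expect a real obstacle here: the one-step contraction analysis is encapsulated in Theorem~\ref{thm:5GCS-AB}, the variance bound is encapsulated in Lemma~\ref{lem:multi}, and the only probabilistic content---computing the participation probability---is a one-line consequence of the independence of the Multisampling draws.
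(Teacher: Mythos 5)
Your proposal is correct and follows essentially the same route as the paper's own proof: specialize Theorem~\ref{thm:5GCS-AB} using the constants $A=B=1/C$, $w_m=p_m$ from Lemma~\ref{lem:multi}, identify the participation probability $\widehat{p}_m=1-(1-p_m)^C$ from the independence of the $C$ draws, and rewrite the contraction factor and Lyapunov function via $q_m=1/\widehat{p}_m-1$. The only (minor) difference is that you spell out the one-line justification for $\widehat{p}_m$, which the paper states without derivation.
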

Regrettably, it does not appear to be feasible to obtain a closed-form solution for the optimal probabilities and stepsizes when $C>1$. Nevertheless, we were able to identify a specific set of parameters for a special case where $C = 1$. Furthermore, even in this particular case, the solution is not exact. However, based on the Brouwer fixed-point theorem \citep{cite-key}, a solution for $p_m$ and $\tau_m$ in Corollary \ref{cor:5GCS1} exists.

\begin{corollary}\label{cor:5GCS1}
	Suppose $\Cx=1$. Choose any $0<\varepsilon<1$ and $p_\iclient=\frac{\sqrt{L_{F,\iclient}+\tauM_\iclient}}{\sum_{\iclient=1}^\Mx \sqrt{L_{F,\iclient}+\tauM_\iclient}}$, and $\tauM_\iclient=\frac{8}{3}\sqrt{\overline{L}\mu\Mx}p_\iclient$.  In order to guarantee $\Exp{\Psi^{\Tx}}\leq \varepsilon \Psi^0$, it suffices to take 
	\begin{eqnarray*}
		  	T \geq	 
		\max\left\{1+\frac{16}{3}\sqrt{\frac{\overline{L}\Mx}{\mu}},\frac{3}{8}\sqrt{\frac{\overline{L}\Mx}{\mu}}+\Mx\right\}\log \frac{1}{\varepsilon}
	\end{eqnarray*}
	communication rounds.
\end{corollary}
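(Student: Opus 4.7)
The plan is to apply Theorem~\ref{thm:5GCMult} in the special case $\Cx=1$ and tune the contraction factor
\begin{equation*}
\max\left\{(1+\gammaM\mu)^{-1},\ \max_{\iclient}\frac{L_{F,\iclient}+(1-\widehat{p}_\iclient)\tauM_\iclient}{L_{F,\iclient}+\tauM_\iclient}\right\}
\end{equation*}
to the two quantities on the right-hand side of the claimed bound. When $\Cx=1$, Multisampling coincides with single-client sampling, so $\widehat{p}_\iclient=p_\iclient$, and the stepsize requirement of Theorem~\ref{thm:5GCMult} collapses to $\gammaM\le p_\iclient/(2\tauM_\iclient)$. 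Plugging $\tauM_\iclient=\tfrac{8}{3}\sqrt{\overline{L}\mu\Mx}\,p_\iclient$ into this bound makes the right-hand side equal to $\frac{3}{16\sqrt{\overline{L}\mu\Mx}}$ uniformly in $\iclient$, so I would take $\gammaM$ equal to this constant. A side check confirms that in the relevant regime the condition $\tauM_\iclient\ge 8\mu_\iclient/(3\Mx)$ required by Assumption~\ref{assm:localtr} holds.

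The first term in the contraction immediately produces $1/\rho_1=1+1/(\gammaM\mu)=1+\tfrac{16}{3}\sqrt{\overline{L}\Mx/\mu}$, matching the first argument of the claimed maximum. For the second term, let $\rho_2\eqdef\min_{\iclient}\, p_\iclient\tauM_\iclient/(L_{F,\iclient}+\tauM_\iclient)$. The key step is to exploit the joint definitions of $p_\iclient$ and $\tauM_\iclient$: since $p_\iclient\propto\sqrt{L_{F,\iclient}+\tauM_\iclient}$, writing $s\eqdef\sum_{\iclient=1}^\Mx\sqrt{L_{F,\iclient}+\tauM_\iclient}$ one gets
\begin{equation*}
\frac{p_\iclient\tauM_\iclient}{L_{F,\iclient}+\tauM_\iclient}=\frac{\tfrac{8}{3}\sqrt{\overline{L}\mu\Mx}\,p_\iclient^{\,2}}{L_{F,\iclient}+\tauM_\iclient}=\frac{\tfrac{8}{3}\sqrt{\overline{L}\mu\Mx}}{s^{2}},
\end{equation*}
which is \emph{independent} of $\iclient$. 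This cancellation is the non-routine part of the argument; once it is in hand, the troublesome $\min_\iclient$ disappears and the task reduces to controlling $s^{2}$.

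To bound $s^{2}$ I would apply Cauchy--Schwarz: $s^{2}\le\Mx\sum_{\iclient}(L_{F,\iclient}+\tauM_\iclient)$. Using $\sum_{\iclient}L_{F,\iclient}=\overline{L}-\overline{\mu}\le\overline{L}$ together with $\sum_{\iclient}\tauM_\iclient=\tfrac{8}{3}\sqrt{\overline{L}\mu\Mx}\sum_{\iclient}p_\iclient=\tfrac{8}{3}\sqrt{\overline{L}\mu\Mx}$ then yields $1/\rho_2\le\tfrac{3}{8}\sqrt{\overline{L}\Mx/\mu}+\Mx$, which is the second argument of the claimed maximum. Combining via $\Tx\ge\max(1/\rho_1,1/\rho_2)\log(1/\varepsilon)$ closes out the complexity bound. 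The remaining issue is existence of a consistent pair $(p_\iclient,\tauM_\iclient)_{\iclient=1}^{\Mx}$ satisfying the two implicit relations: substituting $\tauM_\iclient=\tfrac{8}{3}\sqrt{\overline{L}\mu\Mx}\,p_\iclient$ into the formula for $p_\iclient$ gives a continuous self-map of the simplex $\Delta^\Mx$, and Brouwer's fixed-point theorem supplies a solution (possibly non-unique, which is why the statement calls the choice ``not exact''). The main obstacle I anticipate is not any heavy computation but spotting the $\iclient$-independence identity above; without it one is stuck with a $\min_\iclient$ that resists a clean closed form.
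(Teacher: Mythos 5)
Your proof is correct and follows essentially the same route as the paper's: you identify that $\frac{p_\iclient\tauM_\iclient}{L_{F,\iclient}+\tauM_\iclient}=\tfrac{8}{3}\sqrt{\overline{L}\mu\Mx}/s^2$ is $\iclient$-independent, bound $s^2$ by Cauchy--Schwarz using $\sum_\iclient L_{F,\iclient}\le\overline{L}$ and $\sum_\iclient\tauM_\iclient=\tfrac{8}{3}\sqrt{\overline{L}\mu\Mx}$, and pair it with $1+1/(\gammaM\mu)$ after instantiating the admissible stepsize. The paper writes the argument backward (starting from the claimed bound and verifying it dominates the rate), while you derive it forward, but the core cancellation and the Cauchy--Schwarz step are identical.
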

To address the challenge posed by the inexact solution, we have also included the exact formulas for the parameters. While this set of parameters may not offer the optimal complexity, it can still be valuable in certain cases.
\begin{corollary}\label{cor:5GCS2}
	Suppose $C=1$. Choose any $0<\varepsilon<1$ and $p_\iclient=\frac{\sqrt{\frac{L_\iclient}{\Mx}}}{\sum_{\iclient=1}^\Mx \sqrt{\frac{L_\iclient}{\Mx}}}$, and $\tauM_\iclient=\frac{8}{3}\sqrt{\overline{L}\mu\Mx}p_\iclient$.  In order to guarantee $\Exp{\Psi^{\Tx}}\leq \varepsilon \Psi^0$, it suffices to take 
	\begin{eqnarray*}
		  	T \geq	
		\max\left\{1+\frac{16}{3}\sqrt{\frac{\overline{L}\Mx}{\mu}},\frac{3}{8}\sqrt{\frac{\overline{L}\Mx}{\mu}}+\frac{\sum_{\iclient=1}^\Mx \sqrt{L_\iclient}}{\sqrt{L_{\min}}}\right\}\log \frac{1}{\varepsilon}
	\end{eqnarray*}
	communication rounds. Note that $L_{\min} = \min_m L_m$.
\end{corollary}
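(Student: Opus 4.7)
The plan is to invoke Theorem~\ref{thm:5GCMult} with $\Cx=1$, observe that $\widehat{p}_\iclient = 1-(1-\prm)^1 = \prm$, and then simply verify that the prescribed choices of $\gammaM$, $\tauM_\iclient$, and $\prm$ satisfy the stepsize restriction and translate the resulting contraction factor into the claimed iteration complexity. No new convergence machinery is needed; the task is purely to bound the two quantities inside the outer $\max$ in the theorem.

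First I would check feasibility. Since $\Cx=1$, the condition $\gammaM \leq \frac{1}{2\tauM_\iclient((1-\frac{1}{\Cx})\Mx+\frac{1}{\Cx \prm})}$ collapses to $\gammaM \leq \frac{\prm}{2\tauM_\iclient}$. Substituting $\tauM_\iclient = \frac{8}{3}\sqrt{\overline{L}\mu \Mx}\,\prm$ makes the ratio $\tauM_\iclient/\prm = \frac{8}{3}\sqrt{\overline{L}\mu \Mx}$ independent of $\iclient$, so we may take the $\iclient$-independent value $\gammaM = \frac{3}{16\sqrt{\overline{L}\mu \Mx}}$ with equality. The companion restriction $\tauM_\iclient \geq \frac{8\mu_\iclient}{3\Mx}$ is verified by using $\mu \leq \mu_\iclient$ together with the explicit form of $\prm$, and the choice $\gammaM\mu = \frac{3}{16}\sqrt{\mu/(\overline{L}\Mx)}$ then gives $1/(\gammaM\mu) = \frac{16}{3}\sqrt{\overline{L}\Mx/\mu}$. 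A standard rearrangement $T \geq \frac{1+\gammaM\mu}{\gammaM\mu}\log(1/\varepsilon) = \bigl(1+\tfrac{16}{3}\sqrt{\overline{L}\Mx/\mu}\bigr)\log(1/\varepsilon)$ handles the first argument of the outer $\max$.

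For the second argument, since $\widehat{p}_\iclient=\prm$ the theorem's factor becomes $\frac{L_{F_\iclient}+(1-\prm)\tauM_\iclient}{L_{F_\iclient}+\tauM_\iclient} = 1 - \frac{\prm\tauM_\iclient}{L_{F_\iclient}+\tauM_\iclient}$, so the required number of rounds is at most $\max_\iclient \frac{L_{F_\iclient}+\tauM_\iclient}{\prm\tauM_\iclient}\log(1/\varepsilon) = \max_\iclient\bigl(\frac{L_{F_\iclient}}{\prm\tauM_\iclient}+\frac{1}{\prm}\bigr)\log(1/\varepsilon)$. I would then bound the two summands separately. For the first, I use $L_{F_\iclient} \leq L_\iclient/\Mx$, plug in $\prm\tauM_\iclient = \frac{8}{3}\sqrt{\overline{L}\mu \Mx}\prm^2$ and $\prm^2 = L_\iclient/(\sum_j\sqrt{L_j})^2$, which cancels $L_\iclient$, leaving $\frac{3(\sum_j\sqrt{L_j})^2}{8\Mx\sqrt{\overline{L}\mu \Mx}}$. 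Applying Cauchy--Schwarz in the form $(\sum_j\sqrt{L_j})^2 \leq \Mx\sum_j L_j = \Mx^2\overline{L}$ reduces this to $\frac{3}{8}\sqrt{\overline{L}\Mx/\mu}$, uniformly in $\iclient$. For the second, $\max_\iclient \frac{1}{\prm} = \frac{\sum_j\sqrt{L_j}}{\sqrt{L_{\min}}}$ by definition of $\prm$. Summing the two bounds yields the second argument of the outer $\max$, and taking the larger of the two communication counts completes the corollary.

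The only nontrivial step is the Cauchy--Schwarz bound that converts $(\sum_j\sqrt{L_j})^2$ into an expression controlled by $\overline{L}$; everything else is direct substitution and the elementary inequality $\log\frac{1}{1-x} \geq x$ used to pass from a per-iteration contraction factor to an iteration count. I do not anticipate any real obstacle, since Theorem~\ref{thm:5GCMult} already provides the linear-rate result; this corollary is just an optimization exercise over the free parameters in the $\Cx=1$ regime.
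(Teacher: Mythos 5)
Your approach is essentially the same as the paper's: invoke Theorem~\ref{thm:5GCMult} with $C=1$ so that $\widehat{p}_m=p_m$, bound the term $\frac{L_{F_m}}{p_m\tau_m}$ by $\frac{3}{8}\sqrt{\overline{L}M/\mu}$ via $L_{F_m}\le L_m/M$ and the Cauchy--Schwarz estimate $\bigl(\sum_j\sqrt{L_j}\bigr)^2\le M^2\overline{L}$, and observe $\max_m 1/p_m=\sum_j\sqrt{L_j}/\sqrt{L_{\min}}$; the paper writes the same chain in the reverse direction, starting from the claimed $T$ and peeling it down to $\max\{1+\frac{1}{\gamma\mu},\,\max_m\frac{1}{p_m}(L_{F_m}/\tau_m+1)\}\log\frac{1}{\varepsilon}$.

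One small correction to your feasibility check: you justify $\tau_m\ge\frac{8\mu_m}{3M}$ by ``$\mu\le\mu_m$,'' but $\mu=\frac{1}{M}\sum_j\mu_j$ is the average, so $\mu\le\mu_m$ need not hold for every $m$. What you actually need (and what is true, since all $\mu_j>0$) is $\mu M\ge\mu_m$; combined with $\sum_j\sqrt{L_j}\le M\sqrt{\overline{L}}$ this gives $M\sqrt{\overline{L}\mu M}\,p_m\ge\sqrt{\mu M L_m}\ge\sqrt{\mu_m L_m}\ge\mu_m$, which is the intended inequality. The paper omits this verification entirely for this corollary, so this is a presentational slip rather than a gap in the argument.
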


\subsection{Sampling without Replacement (Independent Sampling)}
In the previous example, the server had the ability to control the cohort size and assign probabilities for client participation. However, in practical settings, the server lacks control over these probabilities due to various technical conditions such as internet connections, battery charge, workload, and others. Additionally, each client operates independently of the others. Considering these factors, we adopt the Independent Sampling approach. Let us formally define such a scheme. To do so, we introduce the concept of independent and identically distributed (i.i.d.) random variables:
\begin{eqnarray*}
	\chi_\iclient=\begin{cases}
		1 & \text{with probability }p_m\\
		0 & \text{with probability }1-p_m,
	\end{cases}
\end{eqnarray*}
for all $m\in[M]$, also take $S^t\eqdef\left\{m\in[M]|\chi_\iclient=1\right\}$ and $\underline{p} = \left(p_1,\ldots, p_M\right)$ .  The corresponding estimator for this sampling has the following form:  
\begin{equation}
	\label{eq:1}
	S(a_1,\ldots,a_M,\psi, \underline{p})\eqdef \frac{1}{\Mx}\sum_{\iclient\in S}\frac{a_m}{p_{m}},
\end{equation}
The described sampling scheme with its estimator is called the Independence Sampling. Specifically, it is essential to consider the probability that all clients communicate, denoted as $\Pi_{m=1}^M\prm$, as well as the probability that no client participates, denoted as $\Pi_{m=1}^M(1-\prm)$. It is important to note that $\sum_{m=1}^{M}\prm$ is not necessarily equal to $1$ in general. Furthermore, the cohort size is not fixed but rather random, with the expected cohort size denoted as $\Exp{\set}=\sum_{m=1}^{M}\prm$.
\begin{lemma}
	\label{lem:ind}
	The Independent Sampling with estimator~\ref{eq:1} satisfies the Assumption~\ref{weighted} with $A=\frac{1}{\sum_{m}^{M}\frac{\prm}{1-\prm}}$, $B=0$   and $w_m=\frac{\frac{\prm}{1-\prm}}{\sum_{m=1}^{M}\frac{\prm}{1-\prm}}$.
\end{lemma}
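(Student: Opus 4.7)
The plan is to verify Assumption~\ref{weighted} directly by a short calculation that exploits the independence of the Bernoulli indicators $\chi_m$. I would first write the centered estimator as $S(a_1,\ldots,a_M;\psi,\underline{p}) - \tfrac{1}{M}\sum_{m=1}^M a_m = \tfrac{1}{M}\sum_{m=1}^M (\chi_m - p_m)\tfrac{a_m}{p_m}$, which also incidentally re-confirms unbiasedness, since $\mathbb{E}[\chi_m]=p_m$.

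Next I would compute the second moment of this centered quantity. Because the $\chi_m$ are \emph{independent}, all cross terms in the expansion vanish, so
\begin{equation*}
\mathbb{E}\left\|S - \tfrac{1}{M}\sum_{m=1}^M a_m\right\|^2 = \frac{1}{M^2}\sum_{m=1}^M \mathbb{E}[(\chi_m - p_m)^2]\,\frac{\|a_m\|^2}{p_m^2} = \frac{1}{M^2}\sum_{m=1}^M \frac{1-p_m}{p_m}\,\|a_m\|^2,
\end{equation*}
using $\mathrm{Var}(\chi_m)=p_m(1-p_m)$. This is the key calculation; the independence of the $\chi_m$ is what makes $B=0$ attainable (in contrast to the fixed-cohort multisampling case, where a negative $B$-term is needed).

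Finally I would match this expression to the right-hand side $\tfrac{A}{M^2}\sum_m \tfrac{\|a_m\|^2}{w_m} - B\|\tfrac{1}{M}\sum_m a_m\|^2$ by setting $B=0$ and choosing $A$ and $w_m$ so that $\tfrac{A}{w_m}=\tfrac{1-p_m}{p_m}$ for every $m$, subject to $(w_1,\ldots,w_M)\in\Delta^M$. The normalization $\sum_m w_m=1$ forces $A=\bigl(\sum_{m=1}^M \tfrac{p_m}{1-p_m}\bigr)^{-1}$ and consequently $w_m=\tfrac{p_m/(1-p_m)}{\sum_{k=1}^M p_k/(1-p_k)}$, which are exactly the values claimed.

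There is essentially no obstacle here: the only thing to be a little careful about is the admissibility of the weights, namely that $p_m\in(0,1)$ for all $m$ so that $p_m/(1-p_m)$ is well-defined and strictly positive, ensuring $w_m>0$ and $\sum_m w_m=1$. The degenerate cases $p_m\in\{0,1\}$ can be handled by a limiting argument or by convention (a client with $p_m=1$ contributes zero variance and can be removed from the simplex). Equality holds throughout, so in fact the bound in Assumption~\ref{weighted} is tight for Independent Sampling.
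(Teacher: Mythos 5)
Your proof is correct and follows essentially the same route as the paper: both amount to the standard variance computation for a sum of independent Bernoulli-weighted terms, arriving at $\frac{1}{M^2}\sum_{m=1}^M \frac{1-p_m}{p_m}\|a_m\|^2$ and then reading off $A$, $B=0$, and $w_m$ by normalizing over the simplex. The only cosmetic difference is that you center first and invoke independence to kill cross terms, whereas the paper expands the uncentered second moment and subtracts $\|\mathbb{E}[\cdot]\|^2$ at the end; these are two presentations of the same calculation.
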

Now we are ready to formulate the convergence guarantees and derive communication complexity.  
\begin{theorem}\label{thm:5GCSINDS}
	Consider Algorithm~\ref{alg:5GCS-AB} with Independent Sampling with estimator~\ref{eq:1} satisfying Assumption~\ref{weighted} and LT solver satisfying Assumption~\ref{assm:localtr}. Let the inequality hold $\frac{1}{\tauM_\iclient}-\left(\gammaM \Mx +\gammaM\frac{1-\prm}{\prm}\right)\geq \frac{4}{\tauM_\iclient^2}\frac{\mu_\iclient}{3M}$, e.g. $\tauM_\iclient\geq\frac{8\mu_\iclient}{3\Mx}$ and $\gammaM\leq\frac{1}{2\tauM_\iclient\left(\Mx +\frac{1-\prm}{\prm}\right)}$.  Then for the Lyapunov function 
	\begin{equation*}
		  	\Psi^{\kstep}\eqdef \frac{1}{\gammaM}\sqnorm{x^{t+1}-x^\star}+\sum_{\iclient=1}^\Mx\frac{1}{\prm}\left(\frac{1}{\tauM_\iclient}+\frac{1}{L_{F_\iclient}}\right)\sqnorm{u_\iclient^{t+1}-u_\iclient^\star},
	\end{equation*}
	the iterates of  the method satisfy
	\begin{equation*}
	 	\Exp{\Psi^{t+1}}\leq  \max\left\{\frac{1}{1+\gammaM\mu} ,\max_{m}\left[\frac{ L_{F_\iclient}+\left(1-\prm\right)\tauM_\iclient}{L_{F_\iclient}+\tauM_\iclient}\right] \right\}\Exp{\Psi^{t}},
	\end{equation*}
	where $\prm$ is probability that $m$-th client is participating. 
\end{theorem}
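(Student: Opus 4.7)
}

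The plan is to derive this result as a direct specialization of the general sampling Theorem~\ref{thm:5GCS-AB}, using Lemma~\ref{lem:ind} to identify the constants $A$, $B$, and weights $w_m$ that describe Independent Sampling, and then to evaluate the probability $\widehat{p}_m$ of participation under this scheme.

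First I would invoke Lemma~\ref{lem:ind}, which asserts that the Independent Sampling estimator \eqref{eq:1} satisfies Assumption~\ref{weighted} with
\begin{equation*}
A=\frac{1}{\sum_{m=1}^{M}\frac{p_m}{1-p_m}},\qquad B=0,\qquad w_m=\frac{p_m/(1-p_m)}{\sum_{k=1}^{M}p_k/(1-p_k)}.
\end{equation*}
Plugging these into the stepsize condition of Theorem~\ref{thm:5GCS-AB}, the term $\gamma(1-B)M+\gamma A/w_m$ collapses to $\gamma M + \gamma\frac{1-p_m}{p_m}$, which matches exactly the condition stated in Theorem~\ref{thm:5GCSINDS}. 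The admissible choice $\tau_m\geq 8\mu_m/(3M)$ with $\gamma\leq 1/(2\tau_m(M+(1-p_m)/p_m))$ then follows verbatim from the corresponding safe choice in Theorem~\ref{thm:5GCS-AB}.

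Next I would identify the participation probability. Under Independent Sampling, client $m$ is included in the cohort $S^t$ precisely when $\chi_m=1$, which occurs with probability $p_m$ by construction. Hence $\widehat{p}_m=p_m$, giving $q_m=\frac{1}{\widehat{p}_m}-1=\frac{1-p_m}{p_m}$ and $1+q_m=1/p_m$. Substituting into the Lyapunov function from Theorem~\ref{thm:5GCS-AB} yields
\begin{equation*}
\Psi^{t}=\frac{1}{\gamma}\sqnorm{x^{t}-x^\star}+\sum_{m=1}^{M}\frac{1}{p_m}\left(\frac{1}{\tau_m}+\frac{1}{L_{F_m}}\right)\sqnorm{u_m^{t}-u_m^\star},
\end{equation*}
which is the Lyapunov function declared in the statement (modulo the cosmetic index shift between $t$ and $t+1$, which is irrelevant since the bound is per-iteration). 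The contraction factor simplifies via $\frac{q_m}{1+q_m}=1-p_m$, so that
\begin{equation*}
\frac{L_{F_m}+\frac{q_m}{1+q_m}\tau_m}{L_{F_m}+\tau_m}=\frac{L_{F_m}+(1-p_m)\tau_m}{L_{F_m}+\tau_m},
\end{equation*}
matching the stated rate.

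The main obstacle is ensuring that all hypotheses of Theorem~\ref{thm:5GCS-AB} are indeed met by Independent Sampling: the estimator \eqref{eq:1} must be unbiased and must satisfy the weighted AB inequality, which is exactly what Lemma~\ref{lem:ind} supplies. Once Lemma~\ref{lem:ind} is invoked, the remainder is purely algebraic bookkeeping (substitution of $A$, $B$, $w_m$, $\widehat{p}_m=p_m$) and does not require any new estimates. Thus the real content of the proof lives in Lemma~\ref{lem:ind} and in the general-sampling Theorem~\ref{thm:5GCS-AB}; the present theorem is obtained by a direct specialization.
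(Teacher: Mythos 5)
Your proposal is correct and follows essentially the same route as the paper: invoke Lemma~\ref{lem:ind} to obtain $A$, $B$, $w_m$, observe $\widehat{p}_m=p_m$ for Independent Sampling (hence $q_m=(1-p_m)/p_m$ and $1+q_m=1/p_m$), and specialize Theorem~\ref{thm:5GCS-AB}. Your write-up is in fact somewhat more explicit than the paper's (which omits the $A/w_m=(1-p_m)/p_m$ simplification and the identification $\widehat{p}_m=p_m$), and your remark that the $t$ vs.\ $t+1$ index in the stated Lyapunov function is a cosmetic typo is accurate.
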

\begin{corollary}\label{cor:5GCSINDS}
	Choose any $0<\varepsilon<1$ and $p_\iclient$ can be estimated but not set, then set $\tauM_\iclient =\frac{8}{3}\sqrt{\frac{\bar{L}\mu}{M\sum_{\iclient=1}^\Mx \prm}}$ and $\gammaM=\frac{1}{2\tauM_\iclient\left(\Mx +\frac{1-\prm}{\prm}\right)}$.  In order to guarantee $\Exp{\Psi^{\Tx}}\leq \varepsilon \Psi^0$, it suffices to take 
	\begin{eqnarray*}
		  	T \geq	 
		\max\left\{1+\frac{16}{3}\sqrt{\frac{\overline{L}\Mx}{\mu\sum_{\iclient=1}^\Mx \prm}}\left(1+\frac{1-\prm}{\Mx\prm}\right),\max_{m}\left[\frac{3L_{F_m}}{8\prm}\sqrt{\frac{\Mx\sum_{\iclient=1}^\Mx \prm}{\overline{L}\mu}}+\frac{1}{\prm}\right]\right\}\log \frac{1}{\varepsilon}
	\end{eqnarray*}
	communication rounds.
\end{corollary}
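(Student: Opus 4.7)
\textbf{Proof plan for Corollary~\ref{cor:5GCSINDS}.} The plan is to translate the per-round contraction factor in Theorem~\ref{thm:5GCSINDS} into an iteration count, then plug in the prescribed $\tauM_\iclient$ and $\gammaM$ and simplify. Writing the theorem's bound on $\Exp{\Psi^{t+1}}/\Exp{\Psi^t}$ as $1-\rho$, I get
\[\rho=\min\left\{\frac{\gammaM\mu}{1+\gammaM\mu},\ \min_{m}\frac{\prm\tauM_\iclient}{L_{F_m}+\tauM_\iclient}\right\},\]
obtained by taking one minus each argument of the $\max$. Unrolling over $\Tx$ rounds and using $(1-\rho)^{\Tx}\le e^{-\rho \Tx}$, it suffices to require $\Tx\geq \rho^{-1}\log(1/\varepsilon)$, so the whole task reduces to upper-bounding
\[\rho^{-1}=\max\left\{1+\frac{1}{\gammaM\mu},\ \max_{m}\left[\frac{L_{F_m}}{\prm\tauM_\iclient}+\frac{1}{\prm}\right]\right\}\]
under the prescribed parameter choice.

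\textbf{Primal term.} With $\tauM_\iclient=\tfrac{8}{3}\sqrt{\bar L\mu/(\Mx\sum_m\prm)}$ (uniform in $m$) and $\gammaM$ taken as the tightest value permitted by the theorem's stepsize constraint $\gammaM\leq[2\tauM_\iclient(\Mx+(1-\prm)/\prm)]^{-1}$, direct substitution gives
\[\frac{1}{\gammaM\mu}=\frac{2\tauM_\iclient}{\mu}\Bigl(\Mx+\frac{1-\prm}{\prm}\Bigr)=\frac{16}{3}\sqrt{\frac{\bar L\Mx}{\mu\sum_m\prm}}\Bigl(1+\frac{1-\prm}{\Mx\prm}\Bigr),\]
which, after adding the leading $1$, matches the first entry of the $\max$ in the corollary.

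\textbf{Dual term.} For each $m$,
\[\frac{L_{F_m}}{\prm\tauM_\iclient}+\frac{1}{\prm}=\frac{3L_{F_m}}{8\prm}\sqrt{\frac{\Mx\sum_m\prm}{\bar L\mu}}+\frac{1}{\prm},\]
and maximizing over $m$ yields the second entry of the $\max$. Combining with the previous step gives the claimed communication complexity.

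\textbf{Main obstacle.} Almost everything reduces to substitution; the one genuine subtlety is that the theorem's upper bound on $\gammaM$ is client-dependent, while the algorithm uses a single scalar $\gammaM$. I plan to resolve this by choosing $\gammaM=\min_{m}[2\tauM_\iclient(\Mx+(1-\prm)/\prm)]^{-1}$, which effectively replaces $(1-\prm)/\prm$ in the first entry of the $\max$ by its worst-case value; the outer $\max_m$ already present in the statement absorbs the resulting looseness. A secondary bookkeeping check is that the baseline constraint $\tauM_\iclient\geq 8\mu_\iclient/(3\Mx)$ is consistent with the prescribed $\tauM_\iclient$ under Assumption~\ref{assm:L-smooth-conv}, which follows from $\bar L\geq L_\iclient/\Mx\geq \mu_\iclient/\Mx$ together with $\mu\geq \mu_\iclient/\Mx$; this is routine.
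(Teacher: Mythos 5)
Your proposal is correct and follows essentially the same route as the paper's proof: verify that the prescribed $\tauM_\iclient$ and $\gammaM$ satisfy the stepsize constraint of Theorem~\ref{thm:5GCSINDS}, read off the contraction factor $1-\rho$, invert it, and substitute the parameter choices to recover both entries of the $\max$. One small credit to you: you explicitly flag that the theorem's stepsize bound $\gammaM\leq\bigl[2\tauM_\iclient(\Mx+(1-\prm)/\prm)\bigr]^{-1}$ is $m$-dependent while $\gammaM$ is a single scalar, and resolve it by taking the worst case over $m$; the paper's proof passes over this silently (and, consistently with that, the first entry of the $\max$ in the corollary really should carry a $\max_m$ as well), so your handling is a touch more careful than the original without changing the result.
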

\begin{figure*}[t]
	\centering
	\begin{tabular}{cc}
		\includegraphics[scale=0.415]{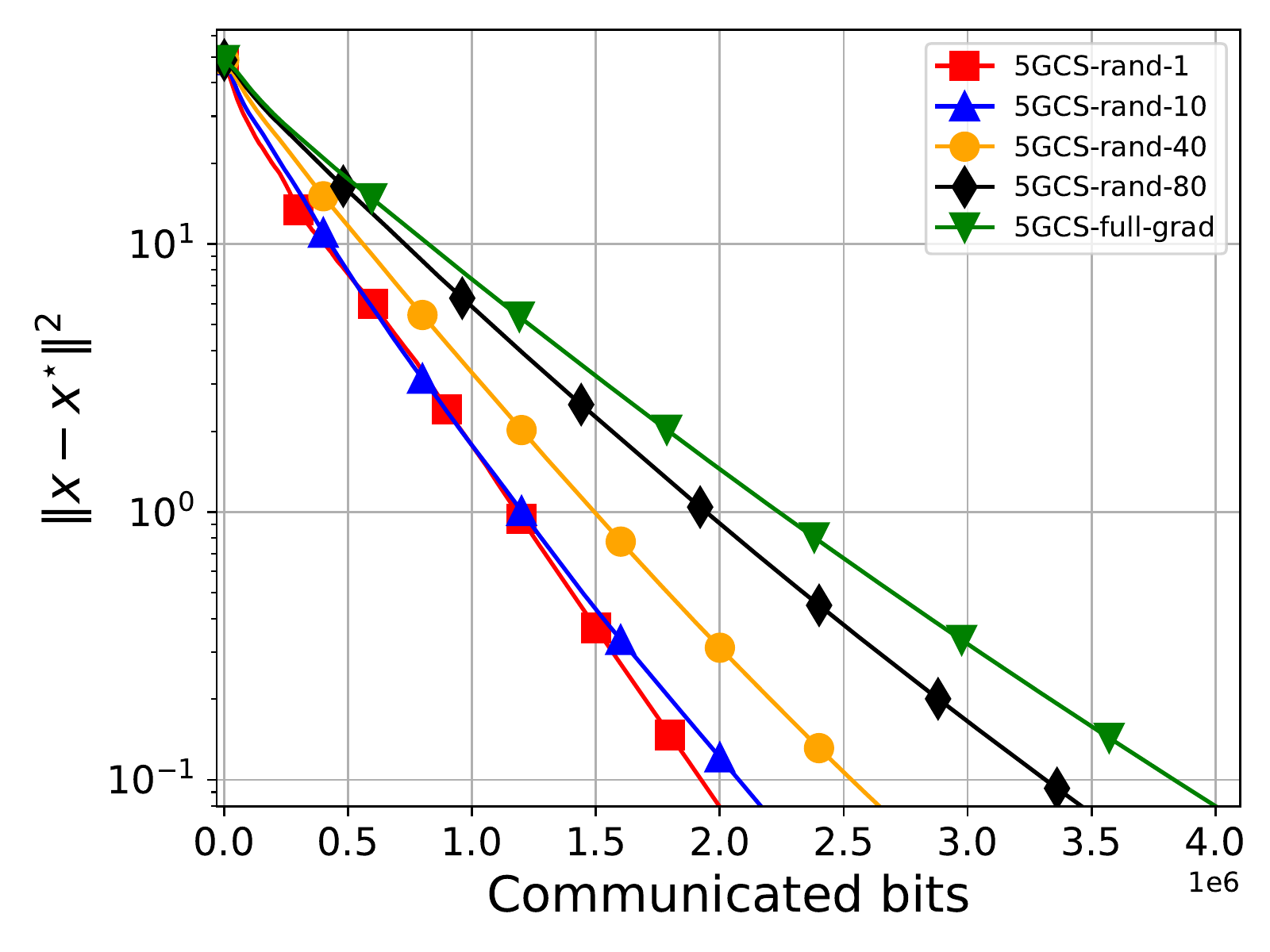}&\includegraphics[scale=0.415]{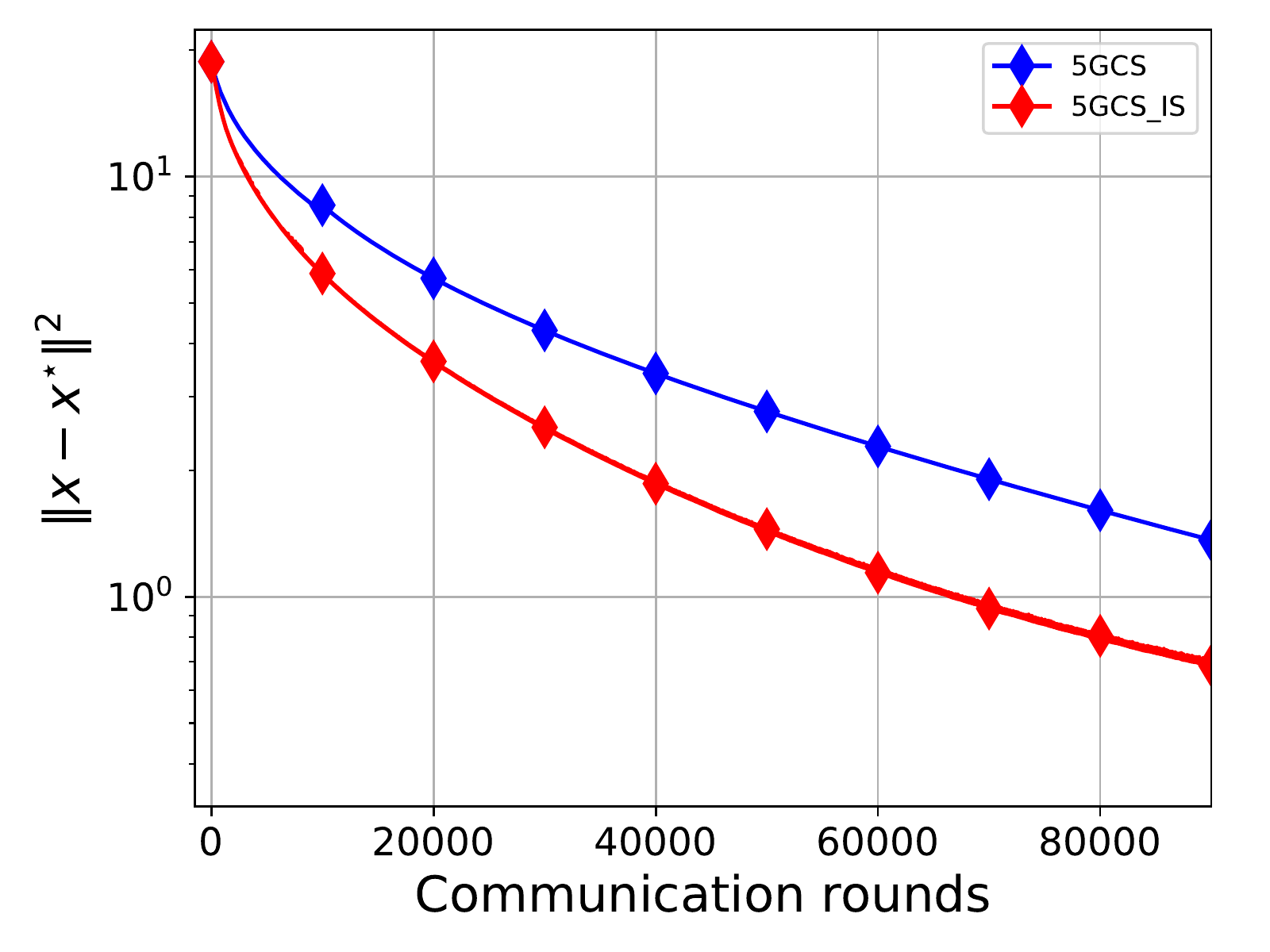}\\
		(a)&(b)
	\end{tabular}
	\caption{(a) Performance of Algorithm~\ref{alg:5GC-CC} (\algname{5GCS-CC}) with different levels of sparsification $k$. (b) Comparison of Algorithm~\ref{alg:5GCS-AB} (\algname{5GCS-AB}) with uniform sampling and Multisampling in case of $C=1$. }
	\label{fig1}
\end{figure*}

\subsection{Uniform Sampling}
In this section we show that previous result of ~\citet{grudzien2023can} can be covered by our framework. This means we fully generalize previous convergence guarantees. In this case we utilize uniform sampling without replacement.  
\begin{theorem}\label{thm:5GCSTN}
	Consider Algorithm~\ref{alg:5GCS-AB} with uniform sampling scheme satisfying~\ref{weighted} and LT solver satisfying Assumption~\ref{assm:localtr}. Let the inequality hold $\frac{1}{\tauM_\iclient}-\gammaM \Mx \geq \frac{4}{\tauM_\iclient^2}\frac{\mu}{3M}$, e.g. $\tauM_\iclient\geq\frac{8\mu}{3\Mx}$ and $\gammaM\leq\frac{1}{2\tauM_\iclient\Mx }$.  Then for the Lyapunov function 
	\begin{equation*}
		  	\Psi^{\kstep}\eqdef \frac{1}{\gammaM}\sqnorm{x^{t+1}-x^\star}+\sum_{\iclient=1}^\Mx\frac{\Mx}{\Cx}\left(\frac{1}{\tauM_\iclient}+\frac{1}{L_{F_\iclient}}\right)\sqnorm{u_\iclient^{t+1}-u_\iclient^\star},
	\end{equation*}
	the iterates of  the method satisfy
	\begin{equation*}
		 	\Exp{\Psi^{t+1}}\leq  \max\left\{\frac{1}{1+\gammaM\mu} ,\max_{m}\left[\frac{ L_{F_\iclient}+\frac{\Mx-\Cx}{\Mx}\tauM_\iclient}{L_{F_\iclient}+\tauM_\iclient}\right] \right\}\Exp{\Psi^{t}}.
	\end{equation*}
\end{theorem}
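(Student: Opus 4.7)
The plan is to derive Theorem~\ref{thm:5GCSTN} as a direct specialization of the general Theorem~\ref{thm:5GCS-AB}. I would verify that uniform sampling without replacement fits Assumption~\ref{weighted} with a particular clean choice of $A$, $B$, $w_m$, and then simply substitute the resulting constants and the probability $\widehat{p}_m = C/M$ of client $m$ being sampled.

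First I would fix the estimator $S(a_1,\ldots,a_M;\psi) = \frac{1}{\Cx}\sum_{m\in \set} a_m$ for a subset $\set\subset[\Mx]$ of size $\Cx$ chosen uniformly at random, together with uniform weights $w_m = 1/\Mx$. Using $\mathbf{P}(m\in\set)=\Cx/\Mx$ and $\mathbf{P}(m,n\in\set) = \Cx(\Cx-1)/(\Mx(\Mx-1))$ for $m\neq n$, a direct second-moment expansion gives
\begin{equation*}
 \Exp{\sqnorm{S(a_1,\ldots,a_M;\psi) - \tfrac{1}{\Mx}\sum_{m=1}^\Mx a_m}} = \tfrac{\Mx-\Cx}{\Cx \Mx (\Mx-1)}\sum_{m=1}^\Mx \sqnorm{a_m} - \tfrac{\Mx-\Cx}{\Cx(\Mx-1)}\sqnorm{\tfrac{1}{\Mx}\sum_{m=1}^\Mx a_m}.
\end{equation*}
Matching this against the general form in Assumption~\ref{weighted} with $w_m=1/\Mx$ yields $A = B = \frac{\Mx-\Cx}{\Cx(\Mx-1)}$. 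This is the analogue of Lemmas~\ref{lem:multi} and~\ref{lem:ind} for uniform sampling without replacement.

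Next I would verify that the generic step size condition collapses into the clean form stated in the theorem. Substituting the above $A$, $B$, and $w_m$,
\begin{equation*}
 (1-B)\Mx + \tfrac{A}{w_m} = \tfrac{\Mx^2(\Cx-1) + \Mx(\Mx-\Cx)}{\Cx(\Mx-1)} = \tfrac{\Mx \Cx (\Mx-1)}{\Cx(\Mx-1)} = \Mx,
\end{equation*}
so the condition $\frac{1}{\tauM_m}-(\gammaM(1-B)\Mx+\gammaM\frac{A}{w_m})\geq \frac{4}{\tauM_m^2}\frac{\mu_m}{3\Mx}$ of Theorem~\ref{thm:5GCS-AB} reduces exactly to $\frac{1}{\tauM_m}-\gammaM \Mx\geq \frac{4}{\tauM_m^2}\frac{\mu}{3\Mx}$, and the explicit choices $\tauM_m\geq\frac{8\mu}{3\Mx}$ and $\gammaM\leq\frac{1}{2\tauM_m \Mx}$ remain admissible.

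Finally I would identify $\widehat{p}_m$ under uniform sampling. Since each client belongs to the cohort with probability $\Cx/\Mx$, we have $\widehat{p}_m = \Cx/\Mx$, hence $\qm = \Mx/\Cx - 1$, $1+\qm = \Mx/\Cx$, and $\frac{\qm}{1+\qm} = (\Mx-\Cx)/\Mx$. Plugging these into the Lyapunov function and the contraction factor of Theorem~\ref{thm:5GCS-AB} reproduces, term by term, the Lyapunov function $\Psi^t$ with the prefactor $\Mx/\Cx$ on each dual term and the stated contraction factor involving $\frac{\Mx-\Cx}{\Mx}\tauM_m$. The main obstacle is the short second-moment computation for sampling without replacement and the algebraic cancellation showing $(1-B)\Mx+\frac{A}{w_m}=\Mx$; once these are settled, the theorem follows with no further work by invoking Theorem~\ref{thm:5GCS-AB}.
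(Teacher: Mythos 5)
Your proof is correct and takes exactly the route the paper intends (and explicitly follows for the Multisampling and Independent Sampling cases in Theorems~\ref{thm:5GCMult} and~\ref{thm:5GCSINDS}): you derive the AB constants for uniform sampling without replacement, namely $A=B=\tfrac{\Mx-\Cx}{\Cx(\Mx-1)}$ with $w_m=1/\Mx$ and $\widehat{p}_m=\Cx/\Mx$, observe that $(1-B)\Mx+A/w_m=\Mx$ because $A=B$, and specialize Theorem~\ref{thm:5GCS-AB}. The paper actually omits an explicit proof of this theorem in the appendix, so your computation fills that gap correctly.
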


\begin{corollary}\label{cor:5GCSTN}
	Suppose that $L_m=L,\forall \iclient\in\left\{1,\ldots,\Mx\right\}$. Choose any $0<\varepsilon<1$ and $\gammaM = \frac{3}{16}\sqrt{\frac{\Cx}{L\mu \Mx}}$ and $\tauM_m=\frac{8}{3}\sqrt{\frac{L\mu}{\Mx\Cx}} $. In order to guarantee $\Exp{\Psi^{\Tx}}\leq \varepsilon \Psi^0$, it suffices to take 
	\begin{eqnarray}
		T &\geq	&  
		\max\left\{1+\frac{16}{3}\sqrt{\frac{\Mx}{\Cx}\frac{L}{\mu}},\frac{\Mx}{\Cx}+\frac{3}{8}\sqrt{\frac{\Mx}{\Cx}\frac{L}{\mu} }\right\}\log\frac{1}{\varepsilon} \notag \\
		&=&   
		\cO\left(\left(\frac{\Mx}{\Cx}+\sqrt{\frac{\Mx}{\Cx}\frac{L}{\mu }}\right) \log \frac{1}{\varepsilon} \right)\notag
	\end{eqnarray}
	communication rounds.
\end{corollary}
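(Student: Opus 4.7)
The plan is to apply Theorem~\ref{thm:5GCSTN} as a black box and then carry out the arithmetic needed to turn the per-iteration contraction factor into the claimed iteration complexity for the specific parameter choice $\gammaM = \frac{3}{16}\sqrt{C/(L\mu M)}$, $\tauM_m = \frac{8}{3}\sqrt{L\mu/(MC)}$.

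First I would verify that these parameters satisfy the hypotheses of Theorem~\ref{thm:5GCSTN}. The lower bound $\tauM_m \geq \frac{8\mu}{3M}$ is equivalent to $\sqrt{L\mu/(MC)} \geq \mu/M$, i.e.\ $LM \geq \mu C$, which holds since $L \geq \mu$ and $C \leq M$. The upper bound $\gammaM \leq \frac{1}{2\tauM_m M}$ holds with equality: plugging in the chosen values, $\frac{1}{2\tauM_m M} = \frac{3}{16M}\sqrt{MC/(L\mu)} = \frac{3}{16}\sqrt{C/(L\mu M)} = \gammaM$. So Theorem~\ref{thm:5GCSTN} applies and yields the linear contraction with rate equal to the stated max.

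Next I would compute the two candidate inverse rates separately and take the larger one. For the primal factor, $1 - \frac{1}{1+\gammaM\mu}$ gives $\frac{1}{\rho_1} = 1 + \frac{1}{\gammaM\mu} = 1 + \frac{16}{3}\sqrt{LM/(\mu C)}$, reproducing the first entry in the max. For the dual factor,
\begin{equation*}
1 - \frac{L_{F,m} + \tfrac{M-C}{M}\tauM_m}{L_{F,m}+\tauM_m} = \frac{C\tauM_m}{M(L_{F,m}+\tauM_m)},
\end{equation*}
so its reciprocal equals $\frac{M L_{F,m}}{C\tauM_m} + \frac{M}{C}$. Using the identity $L_{F,m} = (L-\mu_m)/M \leq L/M$ (from the homogeneous-$L$ assumption) and substituting $\tauM_m$, the first term becomes $\frac{L}{C\tauM_m} = \frac{3L}{8C}\sqrt{MC/(L\mu)} = \frac{3}{8}\sqrt{LM/(\mu C)}$. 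Taking the max over $m$ is trivial since the bound is already uniform, giving $\frac{1}{\rho_2} \leq \frac{3}{8}\sqrt{LM/(\mu C)} + \frac{M}{C}$, matching the second entry.

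Finally, the standard exponential-to-linear conversion $(1-\rho)^T \leq \exp(-\rho T)$ shows that any $T \geq \rho^{-1}\log(1/\varepsilon)$ suffices to make $\Exp{\Psi^T} \leq \varepsilon\Psi^0$, with $\rho$ being the smaller of the two contraction rates above, i.e.\ with $\rho^{-1}$ being the maximum of their reciprocals. The asymptotic simplification $\mathcal{O}\bigl((M/C + \sqrt{ML/(C\mu)})\log(1/\varepsilon)\bigr)$ then follows by absorbing the $+1$ and constant factors. I do not expect any substantive obstacle here: the only mild care is to track where the bound $L_{F,m} \leq L/M$ is used and to confirm the stepsize conditions with equality rather than inequality, both of which are handled above.
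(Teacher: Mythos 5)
Your proof is correct and follows the same route as the paper: verify the stepsize conditions $\tau_m \geq \frac{8\mu}{3M}$ and $\gamma \leq \frac{1}{2\tau_m M}$ (the latter holding with equality, which you rightly note — the paper's $\geq$ there is a typo), invoke Theorem~\ref{thm:5GCSTN} to get the contraction factor, compute the reciprocals of the two candidate rates using $L_{F,m} \leq L/M$, and apply the standard $(1-\rho)^T \leq e^{-\rho T}$ conversion. Your writeup is slightly more explicit than the paper's (e.g.\ spelling out the algebra $1 - \frac{L_{F,m}+\frac{M-C}{M}\tau_m}{L_{F,m}+\tau_m} = \frac{C\tau_m}{M(L_{F,m}+\tau_m)}$ and justifying the lower bound on $\tau_m$), but there is no substantive difference in approach.
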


\section{Experiments}
This study primarily focuses on analyzing the fundamental algorithmic and theoretical aspects of a particular class of algorithms, rather than conducting extensive large-scale experiments. While we acknowledge the importance of such experiments, they fall outside the scope of this work. Instead, we provide illustrative examples and validate our findings through the application of logistic regression to a practical problem setting. 

We are considering $\ell_2$-regularized logistic regression, which is a mathematical model used for classification tasks. The objective function, denoted as $f(x)$, is defined as follows:
$$
 f(x) = \frac{1}{MN} \sum_{m=1}^M \sum_{i=1}^N \log \left(1 + e^{-b_{m, i} a_{m, i}^{\top} x}\right) + \frac{\lambda}{2}\|x\|^2.
$$
In this equation, $a_{m, i} \in \mathbb{R}^d$ and $b_{m, i} \in\{-1,+1\}$ represent the data samples and labels, respectively. The variables $M$ and $N$ correspond to the number of clients and the number of data points per client, respectively. The term $\lambda$ is a regularization parameter, and in accordance with \citet{condat2023tamuna}, we set $\lambda$, such that we have $\kappa = 10^4$. 

To illustrate our experimental results, we have chosen to focus on a specific case using the "a1a" dataset from the LibSVM library \citep{chang2011libsvm}. We have $d = 119$, $M=107$ and $N=15$ for this dataset.

For the experiments involving communication compression, we utilized the Rand-$k$ compressor \citep{DIANA} with various parameters for sparsification and theoretical stepsizes for the method. Based on the plotted results, it is evident that the optimal choice is achieved when setting $k=1$ and the method without communication compression shows the worst performance. We calculate the number of communicated floats by all clients.

In the experiments conducted to evaluate the Multisampling strategy, we employed the exact version of the parameters outlined in Corollary~\ref{cor:5GCS2}. Additionally, we applied a re-scaling procedure to modify the distribution of $L_m$ in order to reduce its uniformity. The resulting values were approximately $L_{\min} \approx 1.48$ and $L_{\max} \approx 2\cdot 10^4$.

The observed results indicate that the exact solution of determining probabilities and stepsizes., despite not being optimal, outperformed the version with uniform sampling.

\bibliography{5GCS-upd.bib}

\begin{thebibliography}{85}
\providecommand{\natexlab}[1]{#1}
\providecommand{\url}[1]{\texttt{#1}}
\expandafter\ifx\csname urlstyle\endcsname\relax
  \providecommand{\doi}[1]{doi: #1}\else
  \providecommand{\doi}{doi: \begingroup \urlstyle{rm}\Url}\fi

\bibitem[Alistarh et~al.(2017)Alistarh, Grubic, Li, Tomioka, and
  Vojnovic]{alistarh2017qsgd}
Dan Alistarh, Demjan Grubic, Jerry Li, Ryota Tomioka, and Milan Vojnovic.
\newblock {QSGD}: {C}ommunication-efficient {SGD} via gradient quantization and
  encoding.
\newblock In \emph{Advances in Neural Information Processing Systems}, pages
  1709--1720, 2017.

\bibitem[Basu et~al.(2019)Basu, Data, Karakus, and Diggavi]{basu2019qsparse}
Debraj Basu, Deepesh Data, Can Karakus, and Suhas Diggavi.
\newblock Qsparse-local-{SGD}: Distributed {SGD} with quantization,
  sparsification and local computations.
\newblock \emph{Advances in Neural Information Processing Systems}, 32, 2019.

\bibitem[Bottou et~al.(2018)Bottou, Curtis, and
  Nocedal]{bottou2018optimization}
L{\'e}on Bottou, Frank~E Curtis, and Jorge Nocedal.
\newblock Optimization methods for large-scale machine learning.
\newblock \emph{SIAM review}, 60\penalty0 (2):\penalty0 223--311, 2018.

\bibitem[Brouwer(1911)]{cite-key}
L.~E.~J. Brouwer.
\newblock {\"U}ber abbildung von mannigfaltigkeiten.
\newblock \emph{Mathematische Annalen}, 71\penalty0 (1):\penalty0 97--115,
  1911.
\newblock \doi{10.1007/BF01456931}.
\newblock URL \url{https://doi.org/10.1007/BF01456931}.

\bibitem[Chang and Lin(2011)]{chang2011libsvm}
Chih-Chung Chang and Chih-Jen Lin.
\newblock Lib{S}{V}{M}: {A} library for support vector machines.
\newblock \emph{ACM Transactions on Intelligent Systems and Technology (TIST)},
  2\penalty0 (3):\penalty0 27, 2011.

\bibitem[Charles and Kone{\v{c}}n{\`y}(2020)]{charles2020outsized}
Zachary Charles and Jakub Kone{\v{c}}n{\`y}.
\newblock On the outsized importance of learning rates in local update methods.
\newblock \emph{arXiv preprint arXiv:2007.00878}, 2020.

\bibitem[Charles et~al.(2021)Charles, Garrett, Huo, Shmulyian, and
  Smith]{charles2021large}
Zachary Charles, Zachary Garrett, Zhouyuan Huo, Sergei Shmulyian, and Virginia
  Smith.
\newblock On large-cohort training for federated learning.
\newblock \emph{Advances in neural information processing systems},
  34:\penalty0 20461--20475, 2021.

\bibitem[Chen et~al.(2022)Chen, Horv\'{a}th, and
  Richt\'{a}rik]{OptClientSampling2020}
Wenlin Chen, Samuel Horv\'{a}th, and Peter Richt\'{a}rik.
\newblock Optimal client sampling for federated learning.
\newblock \emph{Transactions on Machine Learning Research}, 2022.

\bibitem[Cho et~al.(2023)Cho, Sharma, Joshi, Xu, Kale, and
  Zhang]{cho2023convergence}
Yae~Jee Cho, Pranay Sharma, Gauri Joshi, Zheng Xu, Satyen Kale, and Tong Zhang.
\newblock On the convergence of federated averaging with cyclic client
  participation.
\newblock \emph{arXiv preprint arXiv:2302.03109}, 2023.

\bibitem[Chraibi et~al.(2019)Chraibi, Khaled, Kovalev, Salim, Richt\'{a}rik,
  and Tak\'{a}\v{c}]{DFPMCI2019}
S\'{e}lim Chraibi, Ahmed Khaled, Dmitry Kovalev, Adil Salim, Peter
  Richt\'{a}rik, and Martin Tak\'{a}\v{c}.
\newblock Distributed fixed point methods with compressed iterates.
\newblock \emph{arXiv preprint arXiv:1912.09925}, 2019.

\bibitem[Condat and Richt{\'a}rik(2021)]{condat2021murana}
Laurent Condat and Peter Richt{\'a}rik.
\newblock Murana: A generic framework for stochastic variance-reduced
  optimization.
\newblock \emph{arXiv preprint arXiv:2106.03056}, 2021.

\bibitem[Condat and Richt{\'a}rik(2022)]{RandProx}
Laurent Condat and Peter Richt{\'a}rik.
\newblock {R}and{P}rox: Primal-dual optimization algorithms with randomized
  proximal updates.
\newblock \emph{arXiv preprint arXiv:2207.12891}, 2022.

\bibitem[Condat et~al.(2022)Condat, Agarsky, and
  Richt{\'a}rik]{condat2022provably}
Laurent Condat, Ivan Agarsky, and Peter Richt{\'a}rik.
\newblock Provably doubly accelerated federated learning: The first
  theoretically successful combination of local training and compressed
  communication.
\newblock \emph{arXiv preprint arXiv:2210.13277}, 2022.

\bibitem[Condat et~al.(2023)Condat, Malinovsky, and
  Richt{\'a}rik]{condat2023tamuna}
Laurent Condat, Grigory Malinovsky, and Peter Richt{\'a}rik.
\newblock Tamuna: Accelerated federated learning with local training and
  partial participation.
\newblock \emph{arXiv preprint arXiv:2302.09832}, 2023.

\bibitem[Fang et~al.(2018)Fang, Li, Lin, and Zhang]{SPIDER}
Cong Fang, Chris~Junchi Li, Zhouchen Lin, and Tong Zhang.
\newblock {SPIDER}: Near-optimal non-convex optimization via stochastic path
  integrated differential estimator.
\newblock In \emph{31st Conference on Neural Information Processing Systems
  (NeurIPS)}, 2018.

\bibitem[Fraboni et~al.(2021)Fraboni, Vidal, Kameni, and
  Lorenzi]{fraboni2021clustered}
Yann Fraboni, Richard Vidal, Laetitia Kameni, and Marco Lorenzi.
\newblock Clustered sampling: Low-variance and improved representativity for
  clients selection in federated learning.
\newblock In \emph{International Conference on Machine Learning}, pages
  3407--3416. PMLR, 2021.

\bibitem[Glasgow et~al.(2022)Glasgow, Yuan, and Ma]{glasgow2022sharp}
Margalit~R Glasgow, Honglin Yuan, and Tengyu Ma.
\newblock Sharp bounds for federated averaging (local {SGD}) and continuous
  perspective.
\newblock In \emph{International Conference on Artificial Intelligence and
  Statistics}, pages 9050--9090. PMLR, 2022.

\bibitem[Gorbunov et~al.(2020)Gorbunov, Hanzely, and Richt\'{a}rik]{sigma_k}
Eduard Gorbunov, Filip Hanzely, and Peter Richt\'{a}rik.
\newblock A unified theory of {SGD}: Variance reduction, sampling, quantization
  and coordinate descent.
\newblock In \emph{The 23rd International Conference on Artificial Intelligence
  and Statistics}, 2020.

\bibitem[Gorbunov et~al.(2021)Gorbunov, Hanzely, and
  Richt\'{a}rik]{LSGDunified2020}
Eduard Gorbunov, Filip Hanzely, and Peter Richt\'{a}rik.
\newblock Local {SGD}: unified theory and new efficient methods.
\newblock In \emph{24th International Conference on Artificial Intelligence and
  Statistics (AISTATS)}, 2021.

\bibitem[Gower et~al.(2019)Gower, Loizou, Qian, Sailanbayev, Shulgin, and
  Richt{\'a}rik]{gower2019sgd}
Robert~Mansel Gower, Nicolas Loizou, Xun Qian, Alibek Sailanbayev, Egor
  Shulgin, and Peter Richt{\'a}rik.
\newblock Sgd: General analysis and improved rates.
\newblock In \emph{International Conference on Machine Learning}, pages
  5200--5209. PMLR, 2019.

\bibitem[Grudzie{\'n} et~al.(2023)Grudzie{\'n}, Malinovsky, and
  Richt{\'a}rik]{grudzien2023can}
Micha{\l} Grudzie{\'n}, Grigory Malinovsky, and Peter Richt{\'a}rik.
\newblock Can 5th generation local training methods support client sampling?
  yes!
\newblock In \emph{International Conference on Artificial Intelligence and
  Statistics}, pages 1055--1092. PMLR, 2023.

\bibitem[Haddadpour and Mahdavi(2019)]{LocalDescent2019}
Farzin Haddadpour and Mehrdad Mahdavi.
\newblock On the convergence of local descent methods in federated learning.
\newblock \emph{arXiv preprint arXiv:1910.14425}, 2019.

\bibitem[Haddadpour et~al.(2021)Haddadpour, Kamani, Mokhtari, and
  Mahdavi]{haddadpour2021federated}
Farzin Haddadpour, Mohammad~Mahdi Kamani, Aryan Mokhtari, and Mehrdad Mahdavi.
\newblock Federated learning with compression: Unified analysis and sharp
  guarantees.
\newblock In \emph{International Conference on Artificial Intelligence and
  Statistics}, pages 2350--2358. PMLR, 2021.

\bibitem[Horv\'{a}th et~al.(2019)Horv\'{a}th, Kovalev, Mishchenko, Stich, and
  Richt\'{a}rik]{DIANA2}
Samuel Horv\'{a}th, Dmitry Kovalev, Konstantin Mishchenko, Sebastian Stich, and
  Peter Richt\'{a}rik.
\newblock Stochastic distributed learning with gradient quantization and
  variance reduction.
\newblock \emph{arXiv preprint arXiv:1904.05115}, 2019.

\bibitem[Huang et~al.(2022)Huang, Lin, Shen, Li, and
  Zomaya]{huang2022stochastic}
Tiansheng Huang, Weiwei Lin, Li~Shen, Keqin Li, and Albert~Y Zomaya.
\newblock Stochastic client selection for federated learning with volatile
  clients.
\newblock \emph{IEEE Internet of Things Journal}, 9\penalty0 (20):\penalty0
  20055--20070, 2022.

\bibitem[Kairouz et~al.(2021)Kairouz, McMahan, Avent, Bellet, Bennis, Bhagoji,
  Bonawitz, Charles, Cormode, Cummings, et~al.]{kairouz2021advances}
Peter Kairouz, H~Brendan McMahan, Brendan Avent, Aur{\'e}lien Bellet, Mehdi
  Bennis, Arjun~Nitin Bhagoji, Kallista Bonawitz, Zachary Charles, Graham
  Cormode, Rachel Cummings, et~al.
\newblock Advances and open problems in federated learning.
\newblock \emph{Foundations and Trends{\textregistered} in Machine Learning},
  14\penalty0 (1--2):\penalty0 1--210, 2021.

\bibitem[Karimireddy et~al.(2020)Karimireddy, Kale, Mohri, Reddi, Stich, and
  Suresh]{SCAFFOLD}
Sai Karimireddy, Satyen Kale, Mehryar Mohri, Sashank Reddi, Sebastian Stich,
  and Ananda Suresh.
\newblock {SCAFFOLD}: Stochastic controlled averaging for on-device federated
  learning.
\newblock In \emph{39th International Conference on Machine Learning (ICML)},
  2020.

\bibitem[Khaled and Richt\'{a}rik(2019)]{GDCI}
Ahmed Khaled and Peter Richt\'{a}rik.
\newblock Gradient descent with compressed iterates.
\newblock In \emph{NeurIPS Workshop on Federated Learning for Data Privacy and
  Confidentiality}, 2019.

\bibitem[Khaled and Richt\'{a}rik(2020)]{ES-SGD-nonconvex}
Ahmed Khaled and Peter Richt\'{a}rik.
\newblock Better theory for {SGD} in the nonconvex world.
\newblock \emph{arXiv preprint arXiv:2002.03329}, 2020.

\bibitem[Khaled et~al.(2019{\natexlab{a}})Khaled, Mishchenko, and
  Richt\'{a}rik]{localGD}
Ahmed Khaled, Konstantin Mishchenko, and Peter Richt\'{a}rik.
\newblock First analysis of local {GD} on heterogeneous data.
\newblock In \emph{NeurIPS Workshop on Federated Learning for Data Privacy and
  Confidentiality}, pages 1--11, 2019{\natexlab{a}}.

\bibitem[Khaled et~al.(2019{\natexlab{b}})Khaled, Mishchenko, and
  Richt\'{a}rik]{localSGD}
Ahmed Khaled, Konstantin Mishchenko, and Peter Richt\'{a}rik.
\newblock Better communication complexity for local {SGD}.
\newblock In \emph{NeurIPS Workshop on Federated Learning for Data Privacy and
  Confidentiality}, pages 1--11, 2019{\natexlab{b}}.

\bibitem[Khan et~al.(2021)Khan, Saad, Han, Hossain, and
  Hong]{khan2021federated}
Latif~U Khan, Walid Saad, Zhu Han, Ekram Hossain, and Choong~Seon Hong.
\newblock Federated learning for internet of things: Recent advances, taxonomy,
  and open challenges.
\newblock \emph{IEEE Communications Surveys \& Tutorials}, 23\penalty0
  (3):\penalty0 1759--1799, 2021.

\bibitem[Khirirat et~al.(2018)Khirirat, Feyzmahdavian, and Johansson]{DCGD}
Sarit Khirirat, Hamid~Reza Feyzmahdavian, and Mikael Johansson.
\newblock Distributed learning with compressed gradients.
\newblock \emph{arXiv preprint arXiv:1806.06573}, 2018.

\bibitem[Koloskova et~al.(2020)Koloskova, Loizou, Boreiri, Jaggi, and
  Stich]{koloskova2020unified}
Anastasia Koloskova, Nicolas Loizou, Sadra Boreiri, Martin Jaggi, and Sebastian
  Stich.
\newblock A unified theory of decentralized {SGD} with changing topology and
  local updates.
\newblock In \emph{International Conference on Machine Learning}, pages
  5381--5393. PMLR, 2020.

\bibitem[Kone\v{c}n\'{y} et~al.(2016)Kone\v{c}n\'{y}, McMahan, Yu,
  Richt\'{a}rik, Suresh, and Bacon]{FEDLEARN}
Jakub Kone\v{c}n\'{y}, H.~Brendan McMahan, Felix Yu, Peter Richt\'{a}rik,
  Ananda~Theertha Suresh, and Dave Bacon.
\newblock Federated learning: strategies for improving communication
  efficiency.
\newblock In \emph{NIPS Private Multi-Party Machine Learning Workshop}, 2016.

\bibitem[Kovalev et~al.(2021)Kovalev, Koloskova, Jaggi, Richt\'{a}rik, and
  Stich]{D-DIANA}
Dmitry Kovalev, Anastasia Koloskova, Martin Jaggi, Peter Richt\'{a}rik, and
  Sebastian Stich.
\newblock A linearly convergent algorithm for decentralized optimization:
  Sending less bits for free!
\newblock In \emph{24th International Conference on Artificial Intelligence and
  Statistics (AISTATS)}, 2021.

\bibitem[Li et~al.(2020{\natexlab{a}})Li, Sahu, Talwalkar, and
  Smith]{li2020federated}
Tian Li, Anit~Kumar Sahu, Ameet Talwalkar, and Virginia Smith.
\newblock Federated learning: Challenges, methods, and future directions.
\newblock \emph{IEEE signal processing magazine}, 37\penalty0 (3):\penalty0
  50--60, 2020{\natexlab{a}}.

\bibitem[Li et~al.(2019)Li, Yang, Wang, and Zhang]{Li2019-local-homogeneous}
Xiang Li, Wenhao Yang, Shusen Wang, and Zhihua Zhang.
\newblock Communication-efficient local decentralized {SGD} methods.
\newblock \emph{arXiv preprint arXiv:1910.09126}, 2019.

\bibitem[Li et~al.(2020{\natexlab{b}})Li, Huang, Yang, Wang, and
  Zhang]{FedAvg-nonIID}
Xiang Li, Kaixuan Huang, Wenhao Yang, Shusen Wang, and Zhihua Zhang.
\newblock On the convergence of {F}ed{A}vg on non-{IID} data.
\newblock In \emph{International Conference on Learning Representations
  (ICLR)}, 2020{\natexlab{b}}.

\bibitem[Li et~al.(2020{\natexlab{c}})Li, Huang, Yang, Wang, and
  Zhang]{Li-local-bounded-grad-norms--ICLR2020}
Xiang Li, Kaixuan Huang, Wenhao Yang, Shusen Wang, and Zhihua Zhang.
\newblock On the convergence of {F}ed{A}vg on non-{IID} data.
\newblock In \emph{International Conference on Learning Representations},
  2020{\natexlab{c}}.

\bibitem[Li et~al.(2020{\natexlab{d}})Li, Kovalev, Qian, and
  Richt\'{a}rik]{ADIANA}
Zhize Li, Dmitry Kovalev, Xun Qian, and Peter Richt\'{a}rik.
\newblock Acceleration for compressed gradient descent in distributed and
  federated optimization.
\newblock In \emph{International Conference on Machine Learning},
  2020{\natexlab{d}}.

\bibitem[Li et~al.(2021)Li, Bao, Zhang, and Richt\'{a}rik]{PAGE2021}
Zhize Li, Hongyan Bao, Xiangliang Zhang, and Peter Richt\'{a}rik.
\newblock Page: A simple and optimal probabilistic gradient estimator for
  nonconvex optimization.
\newblock In \emph{International Conference on Machine Learning (ICML)}, 2021.
\newblock arXiv:2008.10898.

\bibitem[Long et~al.(2020)Long, Tan, Jiang, and Zhang]{long2020federated}
Guodong Long, Yue Tan, Jing Jiang, and Chengqi Zhang.
\newblock Federated learning for open banking.
\newblock In \emph{Federated Learning: Privacy and Incentive}, pages 240--254.
  Springer, 2020.

\bibitem[Malinovsky and Richt{\'a}rik(2022)]{malinovsky2022federated}
Grigory Malinovsky and Peter Richt{\'a}rik.
\newblock Federated random reshuffling with compression and variance reduction.
\newblock \emph{arXiv preprint arXiv:2205.03914}, 2022.

\bibitem[Malinovsky et~al.(2020)Malinovsky, Kovalev, Gasanov, Condat, and
  Richt\'{a}rik]{LFPM}
Grigory Malinovsky, Dmitry Kovalev, Elnur Gasanov, Laurent Condat, and Peter
  Richt\'{a}rik.
\newblock From local {SGD} to local fixed point methods for federated learning.
\newblock In \emph{International Conference on Machine Learning}, 2020.

\bibitem[Malinovsky et~al.(2021)Malinovsky, Sailanbayev, and
  Richt{\'a}rik]{malinovsky2021random}
Grigory Malinovsky, Alibek Sailanbayev, and Peter Richt{\'a}rik.
\newblock Random reshuffling with variance reduction: New analysis and better
  rates.
\newblock \emph{arXiv preprint arXiv:2104.09342}, 2021.

\bibitem[Malinovsky et~al.(2022{\natexlab{a}})Malinovsky, Mishchenko, and
  Richt{\'a}rik]{malinovsky2022server}
Grigory Malinovsky, Konstantin Mishchenko, and Peter Richt{\'a}rik.
\newblock Server-side stepsizes and sampling without replacement provably help
  in federated optimization.
\newblock \emph{arXiv preprint arXiv:2201.11066}, 2022{\natexlab{a}}.

\bibitem[Malinovsky et~al.(2022{\natexlab{b}})Malinovsky, Yi, and
  Richt\'{a}rik]{ProxSkip-VR}
Grigory Malinovsky, Kai Yi, and Peter Richt\'{a}rik.
\newblock Variance reduced {P}rox{S}kip: Algorithm, theory and application to
  federated learning.
\newblock In \emph{Neural Information Processing Systems (NeurIPS)},
  2022{\natexlab{b}}.

\bibitem[Malinovsky et~al.(2023)Malinovsky, Horv{\'a}th, Burlachenko, and
  Richt{\'a}rik]{malinovsky2023federated}
Grigory Malinovsky, Samuel Horv{\'a}th, Konstantin Burlachenko, and Peter
  Richt{\'a}rik.
\newblock Federated learning with regularized client participation.
\newblock \emph{arXiv preprint arXiv:2302.03662}, 2023.

\bibitem[Maranjyan et~al.(2022)Maranjyan, Safaryan, and
  Richt{\'a}rik]{maranjyan2022gradskip}
Artavazd Maranjyan, Mher Safaryan, and Peter Richt{\'a}rik.
\newblock Gradskip: Communication-accelerated local gradient methods with
  better computational complexity.
\newblock \emph{arXiv preprint arXiv:2210.16402}, 2022.

\bibitem[McMahan and Ramage(2017)]{FLblog2017}
Brendan McMahan and Daniel Ramage.
\newblock Federated learning: Collaborative machine learning without
  centralized training data.
\newblock GoogleAIBlog, April 2017.

\bibitem[McMahan et~al.(2016)McMahan, Moore, Ramage, and Ag\"{u}era~y
  Arcas]{FedAvg2016}
Brendan McMahan, Eider Moore, Daniel Ramage, and Blaise Ag\"{u}era~y Arcas.
\newblock Federated learning of deep networks using model averaging.
\newblock \emph{arXiv preprint arXiv:1602.05629}, 2016.

\bibitem[McMahan et~al.(2017)McMahan, Moore, Ramage, Hampson, and Ag\"{u}era~y
  Arcas]{FL2017-AISTATS}
H~Brendan McMahan, Eider Moore, Daniel Ramage, Seth Hampson, and Blaise
  Ag\"{u}era~y Arcas.
\newblock Communication-efficient learning of deep networks from decentralized
  data.
\newblock In \emph{Proceedings of the 20th International Conference on
  Artificial Intelligence and Statistics (AISTATS)}, 2017.

\bibitem[Mishchenko et~al.(2019)Mishchenko, Gorbunov, Tak{\'a}{\v{c}}, and
  Richt{\'a}rik]{DIANA}
Konstantin Mishchenko, Eduard Gorbunov, Martin Tak{\'a}{\v{c}}, and Peter
  Richt{\'a}rik.
\newblock Distributed learning with compressed gradient differences.
\newblock \emph{arXiv preprint arXiv:1901.09269}, 2019.

\bibitem[Mishchenko et~al.(2022)Mishchenko, Malinovsky, Stich, and
  Richt\'{a}rik]{ProxSkip}
Konstantin Mishchenko, Grigory Malinovsky, Sebastian Stich, and Peter
  Richt\'{a}rik.
\newblock Prox{S}kip: {Y}es! {L}ocal gradient steps provably lead to
  communication acceleration! {F}inally!
\newblock In \emph{39th International Conference on Machine Learning (ICML)},
  2022.

\bibitem[Mitra et~al.(2021)Mitra, Jaafar, Pappas, and Hassani]{FEDLIN}
Aritra Mitra, Rayana Jaafar, George Pappas, and Hamed Hassani.
\newblock Linear convergence in federated learning: Tackling client
  heterogeneity and sparse gradients.
\newblock In \emph{35th Conference on Neural Information Processing Systems
  (NeurIPS)}, 2021.

\bibitem[Moritz et~al.(2016)Moritz, Nishihara, Stoica, and
  Jordan]{SparkNet2016}
Philipp Moritz, Robert Nishihara, Ion Stoica, and Michael~I. Jordan.
\newblock Spark{N}et: Training deep networks in {S}park.
\newblock In \emph{International Conference on Learning Representations
  (ICLR)}, 2016.

\bibitem[Nemirovski et~al.(2009)Nemirovski, Juditsky, Lan, and Shapiro]{nemi09}
A.~Nemirovski, A.~Juditsky, G.~Lan, and A.~Shapiro.
\newblock Robust stochastic approximation approach to stochastic programming.
\newblock \emph{SIAM Journal on Optimization}, 19\penalty0 (4):\penalty0
  1574--1609, 2009.
\newblock \doi{10.1137/070704277}.
\newblock URL \url{https://doi.org/10.1137/070704277}.

\bibitem[Nemirovsky and Yudin(1983)]{nemirovsky1983problem}
Arkadi Nemirovsky and David~B. Yudin.
\newblock \emph{Problem complexity and method efficiency in optimization}.
\newblock Wiley, New York, 1983.

\bibitem[Nesterov(2004)]{NesterovBook}
Yurii Nesterov.
\newblock \emph{Introductory lectures on convex optimization: a basic course
  (Applied Optimization)}.
\newblock Kluwer Academic Publishers, 2004.

\bibitem[Nguyen et~al.(2017{\natexlab{a}})Nguyen, Liu, Scheinberg, and
  Tak{\'a}{\v{c}}]{SARAH}
Lam Nguyen, Jie Liu, Katya Scheinberg, and Martin Tak{\'a}{\v{c}}.
\newblock {SARAH}: A novel method for machine learning problems using
  stochastic recursive gradient.
\newblock In \emph{The 34th International Conference on Machine Learning},
  2017{\natexlab{a}}.

\bibitem[Nguyen et~al.(2017{\natexlab{b}})Nguyen, Liu, Scheinberg, and
  Tak{\'a}{\v{c}}]{SARAH-nonconvex}
Lam~M Nguyen, Jie Liu, Katya Scheinberg, and Martin Tak{\'a}{\v{c}}.
\newblock Stochastic recursive gradient algorithm for nonconvex optimization.
\newblock \emph{arXiv:1705.07261}, 2017{\natexlab{b}}.

\bibitem[Povey et~al.(2015)Povey, Zhang, and Khudanpur]{Povey2015}
Daniel Povey, Xiaohui Zhang, and Sanjeev Khudanpur.
\newblock Parallel training of {DNN}s with natural gradient and parameter
  averaging.
\newblock In \emph{ICLR Workshop}, 2015.

\bibitem[Reisizadeh et~al.(2020)Reisizadeh, Mokhtari, Hassani, Jadbabaie, and
  Pedarsani]{reisizadeh2020fedpaq}
Amirhossein Reisizadeh, Aryan Mokhtari, Hamed Hassani, Ali Jadbabaie, and
  Ramtin Pedarsani.
\newblock Fedpaq: A communication-efficient federated learning method with
  periodic averaging and quantization.
\newblock In \emph{International Conference on Artificial Intelligence and
  Statistics}, pages 2021--2031. PMLR, 2020.

\bibitem[Robbins and Monro(1951)]{RobbinsMonro:1951}
Herbert Robbins and Sutton Monro.
\newblock A stochastic approximation method.
\newblock \emph{Annals of Mathematical Statistics}, 22:\penalty0 400--407,
  1951.

\bibitem[Sadiev et~al.(2022{\natexlab{a}})Sadiev, Kovalev, and
  Richt{\'a}rik]{sadiev2022communication}
Abdurakhmon Sadiev, Dmitry Kovalev, and Peter Richt{\'a}rik.
\newblock Communication acceleration of local gradient methods via an
  accelerated primal-dual algorithm with inexact prox.
\newblock In \emph{Neural Information Processing Systems (NeurIPS)},
  2022{\natexlab{a}}.

\bibitem[Sadiev et~al.(2022{\natexlab{b}})Sadiev, Malinovsky, Gorbunov,
  Sokolov, Khaled, Burlachenko, and Richt{\'a}rik]{sadiev2022federated}
Abdurakhmon Sadiev, Grigory Malinovsky, Eduard Gorbunov, Igor Sokolov, Ahmed
  Khaled, Konstantin Burlachenko, and Peter Richt{\'a}rik.
\newblock Federated optimization algorithms with random reshuffling and
  gradient compression.
\newblock \emph{arXiv preprint arXiv:2206.07021}, 2022{\natexlab{b}}.

\bibitem[Safaryan et~al.(2021)Safaryan, Hanzely, and Richt\'{a}rik]{DIANA+}
Mher Safaryan, Filip Hanzely, and Peter Richt\'{a}rik.
\newblock Smoothness matrices beat smoothness constants: better communication
  compression techniques for distributed optimization.
\newblock In \emph{ICLR Workshop: Distributed and Private Machine Learning},
  2021.

\bibitem[Scaman et~al.(2019)Scaman, Bach, Bubeck, Lee, and
  Massouli{\'e}]{scaman2019optimal}
Kevin Scaman, Francis Bach, S{\'e}bastien Bubeck, Yin Lee, and Laurent
  Massouli{\'e}.
\newblock Optimal convergence rates for convex distributed optimization in
  networks.
\newblock \emph{Journal of Machine Learning Research}, 20:\penalty0 1--31,
  2019.

\bibitem[So et~al.(2021)So, Ali, Guler, Jiao, and Avestimehr]{so2021securing}
Jinhyun So, Ramy~E Ali, Basak Guler, Jiantao Jiao, and Salman Avestimehr.
\newblock Securing secure aggregation: Mitigating multi-round privacy leakage
  in federated learning.
\newblock \emph{arXiv preprint arXiv:2106.03328}, 2021.

\bibitem[Stich et~al.(2018)Stich, Cordonnier, and Jaggi]{StichNIPS2018-memory}
S.~U. Stich, J.-B. Cordonnier, and M.~Jaggi.
\newblock Sparsified {SGD} with memory.
\newblock In \emph{Advances in Neural Information Processing Systems}, 2018.

\bibitem[Stich(2020)]{stich2020communication}
Sebastian~U Stich.
\newblock On communication compression for distributed optimization on
  heterogeneous data.
\newblock \emph{arXiv preprint arXiv:2009.02388}, 2020.

\bibitem[Szlendak et~al.(2022)Szlendak, Tyurin, and Richt\'{a}rik]{PermK}
Rafa\l{} Szlendak, Alexander Tyurin, and Peter Richt\'{a}rik.
\newblock Permutation compressors for provably faster distributed nonconvex
  optimization.
\newblock In \emph{10th International Conference on Learning Representations},
  2022.

\bibitem[Tang et~al.(2019)Tang, Yu, Lian, Zhang, and Liu]{DoubleSqueeze2019}
Hanlin Tang, Chen Yu, Xiangru Lian, Tong Zhang, and Ji~Liu.
\newblock Doublesqueeze: Parallel stochastic gradient descent with double-pass
  error-compensated compression.
\newblock In Kamalika Chaudhuri and Ruslan Salakhutdinov, editors,
  \emph{Proceedings of the 36th International Conference on Machine Learning},
  volume~97 of \emph{Proceedings of Machine Learning Research}, pages
  6155--6165, Long Beach, California, USA, 09--15 Jun 2019. PMLR.
\newblock URL \url{http://proceedings.mlr.press/v97/tang19d.html}.

\bibitem[Tyurin et~al.(2022{\natexlab{a}})Tyurin, Sun, Burlachenko, and
  Richt\'{a}rik]{PAGE-AS}
Alexander Tyurin, Lukang Sun, Konstantin Burlachenko, and Peter Richt\'{a}rik.
\newblock Sharper rates and flexible framework for nonconvex {SGD} with client
  and data sampling.
\newblock \emph{arXiv preprint arXiv:2206.02275}, 2022{\natexlab{a}}.

\bibitem[Tyurin et~al.(2022{\natexlab{b}})Tyurin, Sun, Burlachenko, and
  Richt{\'a}rik]{tyurin2022sharper}
Alexander Tyurin, Lukang Sun, Konstantin Burlachenko, and Peter Richt{\'a}rik.
\newblock Sharper rates and flexible framework for nonconvex {SGD} with client
  and data sampling.
\newblock \emph{arXiv preprint arXiv:2206.02275}, 2022{\natexlab{b}}.

\bibitem[Vepakomma et~al.(2018)Vepakomma, Gupta, Swedish, and
  Raskar]{SplitLearning2}
Praneeth Vepakomma, Otkrist Gupta, Tristan Swedish, and Ramesh Raskar.
\newblock Split learning for health: distributed deep learning without sharing
  raw patient data.
\newblock \emph{arXiv preprint arXiv:1812.00564}, 2018.

\bibitem[Wang et~al.(2021{\natexlab{a}})Wang, Safaryan, and
  Richt\'{a}rik]{DIANA++}
Bokun Wang, Mher Safaryan, and Peter Richt\'{a}rik.
\newblock Smoothness-aware quantization techniques.
\newblock \emph{arXiv preprint arXiv:2106.03524}, 2021{\natexlab{a}}.

\bibitem[Wang et~al.(2021{\natexlab{b}})Wang, Charles, Xu, Joshi, McMahan,
  y~Arcas, Al-Shedivat, Andrew, Avestimehr, Daly, Data, Diggavi, Eichner,
  Gadhikar, Garrett, Girgis, Hanzely, Hard, He, Horv\'{a}th, Huo, Ingerman,
  Jaggi, Javidi, Kairouz, Kale, Karimireddy, Kone\v{c}n\'{y}, Koyejo, Li, Liu,
  Mohri, Qi, Reddi, Richt\'{a}rik, Singhal, Smith, Soltanolkotabi, Song,
  Suresh, Stich, Talwalkar, Wang, worth, Wu, Yu, Yuan, Zaheer, Zhang, Zhang,
  Zheng, Zhu, and Zhu]{FieldGuide2021}
Jianyu Wang, Zachary Charles, Zheng Xu, Gauri Joshi, H.~Brendan McMahan,
  Blaise~Aguera y~Arcas, Maruan Al-Shedivat, Galen Andrew, Salman Avestimehr,
  Katharine Daly, Deepesh Data, Suhas Diggavi, Hubert Eichner, Advait Gadhikar,
  Zachary Garrett, Antonious~M. Girgis, Filip Hanzely, Andrew Hard, Chaoyang
  He, Samuel Horv\'{a}th, Zhouyuan Huo, Alex Ingerman, Martin Jaggi, Tara
  Javidi, Peter Kairouz, Satyen Kale, Sai~Praneeth Karimireddy, Jakub
  Kone\v{c}n\'{y}, Sanmi Koyejo, Tian Li, Luyang Liu, Mehryar Mohri, Hang Qi,
  Sashank~J. Reddi, Peter Richt\'{a}rik, Karan Singhal, Virginia Smith, Mahdi
  Soltanolkotabi, Weikang Song, Ananda~Theertha Suresh, Sebastian~U. Stich,
  Ameet Talwalkar, Hongyi Wang, Blake worth, Shanshan Wu, Felix~X. Yu, Honglin
  Yuan, Manzil Zaheer, Mi~Zhang, Tong Zhang, Chunxiang Zheng, Chen Zhu, and
  Wennan Zhu.
\newblock A field guide to federated optimization.
\newblock \emph{arXiv preprint arXiv:2107.06917}, 2021{\natexlab{b}}.

\bibitem[Wang et~al.(2022)Wang, Guo, Lin, and Tang]{wang2022client}
Lin Wang, YongXin Guo, Tao Lin, and Xiaoying Tang.
\newblock Client selection in nonconvex federated learning: Improved
  convergence analysis for optimal unbiased sampling strategy.
\newblock \emph{arXiv preprint arXiv:2205.13925}, 2022.

\bibitem[Wangni et~al.(2017)Wangni, Wang, Liu, and Zhang]{wangni2017gradient}
Jianqiao Wangni, Jialei Wang, Ji~Liu, and Tong Zhang.
\newblock Gradient sparsification for communication-efficient distributed
  optimization.
\newblock \emph{arXiv preprint arXiv:1710.09854}, 2017.

\bibitem[Woodworth et~al.(2020{\natexlab{a}})Woodworth, Patel, Stich, Dai,
  Bullins, McMahan, Shamir, and Srebro]{Blake2020}
Blake Woodworth, Kumar~Kshitij Patel, Sebastian~U. Stich, Zhen Dai, Brian
  Bullins, H.~Brendan McMahan, Ohad Shamir, and Nathan Srebro.
\newblock Is local {SGD} better than minibatch {SGD}?
\newblock \emph{arXiv preprint arXiv:2002.07839}, 2020{\natexlab{a}}.

\bibitem[Woodworth et~al.(2020{\natexlab{b}})Woodworth, Patel, and
  Srebro]{woodworth2020minibatch}
Blake~E Woodworth, Kumar~Kshitij Patel, and Nati Srebro.
\newblock Minibatch vs local {SGD} for heterogeneous distributed learning.
\newblock \emph{Advances in Neural Information Processing Systems},
  33:\penalty0 6281--6292, 2020{\natexlab{b}}.

\bibitem[Wu and Wang(2022)]{wu2022node}
Hongda Wu and Ping Wang.
\newblock Node selection toward faster convergence for federated learning on
  non-iid data.
\newblock \emph{IEEE Transactions on Network Science and Engineering},
  9\penalty0 (5):\penalty0 3099--3111, 2022.

\bibitem[Yu et~al.(2019)Yu, Jin, and Yang]{Yu-local-homogeneous-2019}
Hao Yu, Rong Jin, and Sen Yang.
\newblock On the linear speedup analysis of communication efficient momentum
  {SGD} for distributed non-convex optimization.
\newblock In \emph{International Conference on Machine Learning (ICML)}, 2019.

\end{thebibliography}
\bibliographystyle{plainnat}

%%%%%%%%%%%%%%%%%%%%%%%%%%%%%%%%%%%%%%%%%%%%%%%%%%%%%%%%%%%%
\clearpage
\appendix

\part*{Supplementary Materials}
\section{Basic Inequalities}

\subsection{Young's inequalities} For all $x,y\in \mathbb{R}^d$ and all $a>0$, we have
\begin{eqnarray}
	&&\langle x, y\rangle\leq \frac{a\sqnorm{x}}{2}+\frac{\sqnorm{y}}{2a},\label{yi1}\\
	&&\sqnorm{x+y}\leq2\sqnorm{x}+2\sqnorm{y}\label{yi2},\\
	&&\frac{1}{2}\sqnorm{x}-\sqnorm{y}\leq \sqnorm{x+y}.\label{yi3}
\end{eqnarray}

\subsection{Variance decomposition} For a random vector $\mathrm{X}\in\mathbb{R}^d$ (with finite second moment) and any $c\in\mathbb{R}^d$, the variance of $X$ can be decomposed as
\begin{eqnarray}
	\label{eq:var-dec}
	\Exp{\sqnorm{\mathrm{X}-\Exp{\mathrm{X}}}}=\Exp{\sqnorm{\mathrm{X}-c}}-\sqnorm{\Exp{\mathrm{X}}-c}.\label{vardec}
\end{eqnarray}
\subsection{Conic compression variance}	An unbiased randomized mapping  $\cC: \R^d\to \R^d$ has conic variance if there exists $\omega\geq 0$ such that
\begin{equation}
	\Exp{\sqnorm{\mathcal{C}(x)-x}}\leq \omega \sqnorm{x}\label{cvar}
\end{equation}
for all $x\in \R^d.$

\subsection{Convexity and $L$-smoothness}

Suppose $\phi\colon\mathbb{R}^d\to\mathbb{R}$ is $L$-smooth and convex. Then
\begin{equation}
	\frac{1}{L}\sqnorm{\nabla \phi(x)-\nabla \phi(y)}\leq \langle \nabla \phi(x)-\nabla \phi(y),x-y \rangle\label{strmono}
\end{equation}
for all $x,y\in\mathbb{R}^d$.
\subsection{Dual Problem and Saddle-Point Reformulation}

Then the saddle function reformulation of \myref{lifted_form} is:
\begin{equation}
	\mathrm{Find} \ (x^\star,(u_\iclient^\star)_{\iclient=1}^\Mx) \in \arg\min_{x\in\R^d}\max_{u\in\R^{\Mx d}} \, \left( \frac{\mu}{2}\sqnorm{x}+\sum_{\iclient=1}^\Mx \left\langle  x,u_\iclient \right\rangle -\sum_{\iclient=1}^\Mx F_\iclient^*(u_\iclient)\right).
	\label{saddlenew}
\end{equation}
To ensure well-posedness of these problems, we need to assume that there exists $x^\star\in\mathbb{R}^d$ s.t.:
\begin{align}
	0=\mu x^\star+\sum_{\iclient=1}^{\Mx} \nabla F_\iclient(x^\star).
\end{align}
Which is equivalent to \myref{lifted_form}, having a solution, which it does (unique in fact) as each $f_\iclient$ is $\mu$-strongly convex.
By first order optimality condition $x^\star$ and $u^\star$ that are solution to \myref{saddlenew}, satisfy:
\begin{equation}
	\left\{ \begin{array}{l}
		0= \mu x^\star + \sum_{\iclient=1}^\Mx u_\iclient^{\star}\\
		\Koper x^\star\in\partial F^* (u^{\star})
	\end{array}\right..\label{fooc}
\end{equation}
Where the latter in \myref{fooc} is equivalent to:
\begin{equation}
	\nabla F(\Koper x^\star)=u^\star.
\end{equation}
Throughout, this section we will denote by $\mathcal{F}_t$ for all $ t\geq 0$ the $\sigma$-algebra generated by the collection of $\left(\R^d\times\R^{d \Mx}\right)$-valued random variables $\left(x^0,u^0\right),\dots,\left(x^t,u^t\right).$

\section{Proof of Theorem \ref{thm:5GCS-AB}}
%In this section I will try to design a method which will hopefully use new inequality, will have more freedom in the distribution on clients. Firstly let's define an update on $u$ that will eventually lead to an improvement. Thus let $$u_\iclient^{t+1}=u_\iclient^t+ \frac{1}{1+\qm}\Ropp_\iclient(\bar{u}_\iclient^{t+1}-u^t_\iclient,\qm),$$ where $\Ropp_\iclient$ is unbiased and satisfies conic variance with $\qm$. Lets $S({u}_1^{t+1}-u^t_1,\ldots,{u}_\Mx^{t+1}-u^t_\Mx;\omega)$ be an operator defined in \ref{weighted}. For greater generality, let $f_\iclient$ be $L_\iclient$ smooth and $\mu_\iclient$-convex. To preserve the naming scheme let $\mu=\frac{1}{\Mx}\sum_{\iclient=1}^\Mx\mu_\iclient$. 
%With those definitions we can do the analysis identically as in the previous section with few minor

\begin{theorem*}\label{thm:5GCSMS}
	Let Assumption~\ref{assm:L-smooth-conv} hold. Consider Algorithm~\ref{alg:5GCS-AB} with sampling scheme $\mathbf{S}$ satisfying Assumption~\ref{weighted} and LT solvers $\mathcal{A}_m$ satisfying Assumption~\ref{assm:localtr}. Let the inequality hold $\frac{1}{\tauM_\iclient}-\left(\gammaM\left(1-B\right)\Mx +\gammaM\frac{A}{w_\iclient}\right)\geq \frac{4}{\tauM_\iclient^2}\frac{\mu_\iclient}{3M}$, e.g. $\tauM_\iclient\geq\frac{8\mu_\iclient}{3\Mx}$ and $\gammaM\leq\frac{1}{2\tauM_\iclient\left(\left(1-B\right)\Mx +\frac{A}{w_\iclient}\right)}$.  Then for the Lyapunov function 
	\begin{equation*}
		  	\Psi^{\kstep}\eqdef \frac{1}{\gammaM}\sqnorm{x^{t}-x^\star}+\sum_{\iclient=1}^\Mx\left(1+\qm\right)\left(\frac{1}{\tauM_\iclient}+\frac{1}{L_{F_\iclient}}\right)\sqnorm{u_\iclient^{t}-u_\iclient^\star},
	\end{equation*}
	the iterates of  the method satisfy
	\begin{equation*}
	 	\Exp{\Psi^{t+1}}\leq  \max\left\{\frac{1}{1+\gammaM\mu} ,\max_{m}\left[\frac{ L_{F_\iclient}+\frac{\qm}{1+\qm}\tauM_\iclient}{L_{F_\iclient}+\tauM_\iclient}\right] \right\}\Exp{\Psi^{t}},
	\end{equation*}
	where $\qm=\frac{1}{\widehat{p}_\iclient}-1$ and $\widehat{p}_\iclient$ is probability that $m$-th client is participating. 
\end{theorem*}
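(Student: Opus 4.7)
The overall strategy is to bound the Lyapunov function via a two-stage expectation: first over the local solver randomness (Assumption~\ref{assm:localtr}) conditional on the cohort $S^t$, then over the sampling via the Weighted AB Inequality (Assumption~\ref{weighted}). The backbone is the saddle-point reformulation of~\myref{lifted_form}, which provides the identities $\mu x^\star + \sum_{\iclient=1}^{\Mx} u_\iclient^\star = 0$ and $u_\iclient^\star = \nabla F_\iclient(x^\star)$, and gives a unified primal-dual view of the updates in Algorithm~\ref{alg:5GCS-AB}.

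For the primal part, I would expand $\sqnorm{x^{t+1}-x^\star} = \sqnorm{\hat{x}^t - x^\star - \gammaM\Mx\, S(\Delta;\omega)}$ with $\Delta_\iclient \eqdef \bar{u}_\iclient^{t+1} - u_\iclient^t$ (extending $\bar{u}_\iclient^{t+1}=\nabla F_\iclient(\lastlocittermk)$ virtually to all $\iclient$). Unbiasedness of $S$ collapses the cross term to $-2\gammaM\langle \hat{x}^t - x^\star, \sum_\iclient \Delta_\iclient\rangle$, while the AB inequality provides $\Exp{\sqnorm{S(\Delta;\omega)}} \leq \frac{A}{\Mx^2}\sum_\iclient \frac{\sqnorm{\Delta_\iclient}}{w_\iclient} + (1-B)\sqnorm{\frac{1}{\Mx}\sum_\iclient \Delta_\iclient}$. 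Substituting $\hat{x}^t = \frac{1}{1+\gammaM\mu}(x^t - \gammaM v^t)$ and using $\mu x^\star + \sum_\iclient u_\iclient^\star = 0$ produces the factor $\frac{1}{1+\gammaM\mu}$ multiplying $\sqnorm{x^t-x^\star}$, together with residuals supported on $\sqnorm{\bar{u}_\iclient^{t+1}-u_\iclient^\star}$ and $\sqnorm{u_\iclient^t-u_\iclient^\star}$.

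For the dual part, fix $\iclient$ and split on participation. If $\iclient\notin \set$, the term is unchanged and contributes $\sqnorm{u_\iclient^t-u_\iclient^\star}$. If $\iclient\in \set$, decompose $\bar{u}_\iclient^{t+1}-u_\iclient^\star = (\bar{u}_\iclient^{t+1}-\bar{u}_\iclient^{\star,t}) + (\bar{u}_\iclient^{\star,t}-u_\iclient^\star)$, where $\bar{u}_\iclient^{\star,t} \eqdef \nabla F_\iclient(\localsolmk)$. The first summand is $L_{F_\iclient}$-Lipschitz-controlled by $\sqnorm{\lastlocittermk-\localsolmk}$ and $\sqnorm{\nabla \localfuni(\lastlocittermk)}$, both dominated by Assumption~\ref{assm:localtr}. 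For the second, the optimality of the subproblem~\myref{asn:argmin1} identifies $\bar{u}_\iclient^{\star,t} = \mathrm{prox}_{\tauM_\iclient F_\iclient^*}(\tauM_\iclient \hat{x}^t + u_\iclient^t)$; since $F_\iclient$ is $L_{F_\iclient}$-smooth, $F_\iclient^*$ is $1/L_{F_\iclient}$-strongly convex, yielding a firm nonexpansiveness bound with contraction $L_{F_\iclient}/(L_{F_\iclient}+\tauM_\iclient)$ toward $u_\iclient^\star$. Weighting the two cases by the participation probability $\widehat{p}_\iclient = 1/(1+\qm)$ produces the announced factor $\frac{L_{F_\iclient}+\frac{\qm}{1+\qm}\tauM_\iclient}{L_{F_\iclient}+\tauM_\iclient}$ on the rescaled dual distance.

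The main obstacle will be absorbing the leftover cross terms and the LT inexactness residuals into the Lyapunov decrease. The primal bound leaves terms coupling $\hat{x}^t-x^\star$ with $u_\iclient^t-u_\iclient^\star$ and with $\bar{u}_\iclient^{t+1}-\bar{u}_\iclient^{\star,t}$; the dual bound produces an analogous mismatch involving $\sqnorm{\hat{x}^t-x^\star}$. These are reconciled by the choice of weights $(1+\qm)(\frac{1}{\tauM_\iclient}+\frac{1}{L_{F_\iclient}})$ in $\Psi^t$, together with Young's inequality~\myref{yi1} to split the inner products. The inexactness residuals sum to at most $\sum_\iclient \frac{\mu_\iclient}{6\Mx}\sqnorm{\hat{x}^t - \localsolmk}$, and after triangle inequality~\myref{yi2} this is bounded by multiples of $\sqnorm{\hat{x}^t - x^\star}$ and $\sqnorm{\localsolmk - x^\star}$, which themselves relate back to $\sqnorm{u_\iclient^t - u_\iclient^\star}$ through the proximal identity. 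The stepsize condition $\frac{1}{\tauM_\iclient} - \gammaM((1-B)\Mx + \frac{A}{w_\iclient}) \geq \frac{4\mu_\iclient}{3\Mx\tauM_\iclient^2}$ is precisely what is needed to ensure that the sum of these residual terms is dominated by the contraction margin. Assembling the primal and dual contractions then yields the advertised recursion with the max of the two factors.
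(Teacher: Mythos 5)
Your primal-variable expansion is the same as the paper's: expand $\sqnorm{\hat{x}^t - x^\star - \gammaM\Mx\,S(\Delta;\omega)}$, collapse the cross term by unbiasedness of $S$, apply the Weighted AB inequality, and push through the identities $(1+\gammaM\mu)\hat{x}^t = x^t - \gammaM\Koper^\top u^t$ and $(1+\gammaM\mu)x^\star = x^\star - \gammaM\Koper^\top u^\star$. So far this matches.

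For the dual variable, you take a genuinely different route. The paper does \emph{not} case-split on participation; it observes that the update can be written as $u_\iclient^{t+1} = u_\iclient^t + \tfrac{1}{1+\qm}\Ropp_\iclient(\bar{u}_\iclient^{t+1}-u_\iclient^t,\qm)$, i.e.\ an unbiased Bernoulli operator with conic variance $\qm = 1/\widehat{p}_\iclient - 1$, and then applies the variance decomposition~\myref{vardec} plus the conic-variance bound~\myref{cvar} to obtain
\begin{equation*}
\Exp{\sqnorm{u_\iclient^{t+1}-u_\iclient^\star}\,|\,\mathcal{F}_t} \le \frac{1}{1+\qm}\sqnorm{\bar{u}_\iclient^{t+1}-u_\iclient^\star} + \frac{\qm}{1+\qm}\sqnorm{u_\iclient^t-u_\iclient^\star}.
\end{equation*}
Your explicit split into $\iclient \in \set$ versus $\iclient \notin \set$ with weight $\widehat{p}_\iclient$ produces the same inequality (with equality, in fact), so this is a legitimate reformulation, and your proximal identification $\bar{u}_\iclient^{\star,t} = \mathrm{prox}_{\tauM_\iclient F_\iclient^*}(\tauM_\iclient\hat{x}^t + u_\iclient^t)$ with $u_\iclient^\star = \mathrm{prox}_{\tauM_\iclient F_\iclient^*}(\tauM_\iclient x^\star + u_\iclient^\star)$ and the $\frac{L_{F_\iclient}}{L_{F_\iclient}+\tauM_\iclient}$ contraction for the $\frac{1}{L_{F_\iclient}}$-strongly convex conjugate are all correct statements. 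However, the paper never needs the prox operator at all: after the three-point expansion $\sqnorm{\bar{u}_\iclient^{t+1}-u_\iclient^\star} = \sqnorm{u_\iclient^t-u_\iclient^\star} + 2\langle\bar{u}_\iclient^{t+1}-u_\iclient^\star,\bar{u}_\iclient^{t+1}-u_\iclient^t\rangle - \sqnorm{\bar{u}_\iclient^{t+1}-u_\iclient^t}$, it works directly with the inner products and the co-coercivity of $\nabla F_\iclient$ (inequality~\myref{strmono}).

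The real gap in your plan is in the assembly. Two concrete problems. First, you never say where the term $\sum_\iclient\frac{L_{F_\iclient}}{\tauM_\iclient^2}\sqnorm{\nabla\localfuni(\lastlocittermk)}$ in Assumption~\ref{assm:localtr} is going to come from. In the paper this appears because of the identity $\bar{u}_\iclient^{t+1}-u_\iclient^t = \nabla\localfuni(\lastlocittermk) + \tauM_\iclient(\hat{x}^t-\lastlocittermk)$, which is substituted into the combined cross term
\begin{equation*}
-\langle\hat{x}^t-x^\star,\bar{u}_\iclient^{t+1}-u_\iclient^\star\rangle + \tfrac{1}{\tauM_\iclient}\langle\bar{u}_\iclient^{t+1}-u_\iclient^\star,\bar{u}_\iclient^{t+1}-u_\iclient^t\rangle = -\langle\lastlocittermk-x^\star,\bar{u}_\iclient^{t+1}-u_\iclient^\star\rangle + \tfrac{1}{\tauM_\iclient}\langle\nabla\localfuni(\lastlocittermk),\bar{u}_\iclient^{t+1}-u_\iclient^\star\rangle,
\end{equation*}
after which co-coercivity handles the first piece and Young's inequality~\myref{yi1} with $a_\iclient = L_{F_\iclient}/\tauM_\iclient$ produces exactly the $\sqnorm{\nabla\localfuni(\lastlocittermk)}$ term at the right coefficient. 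If you instead go through $\bar{u}_\iclient^{\star,t}$ and Lipschitz continuity of $\nabla F_\iclient$, you only control $\sqnorm{\bar{u}_\iclient^{t+1}-\bar{u}_\iclient^{\star,t}}$ by $L_{F_\iclient}^2\sqnorm{\lastlocittermk-\localsolmk}$, and the gradient term of the subproblem simply does not arise; Assumption~\ref{assm:localtr} is then only partially usable. Second, your firm-nonexpansiveness bound couples $\bar{u}_\iclient^{\star,t}-u_\iclient^\star$ to the mixed quantity $\tauM_\iclient(\hat{x}^t-x^\star)+(u_\iclient^t-u_\iclient^\star)$; to discharge the $\hat{x}^t-x^\star$ piece one needs the negative margin $-\gammaM\mu\sqnorm{\hat{x}^t-x^\star}$ retained in the primal recursion, and the precise budgeting — splitting it via~\myref{yi3} into $\sqnorm{\hat{x}^t-\localsolmk}$ and $\sqnorm{\localsolmk-x^\star}$, then using $\localsolmk = \hat{x}^t - \tfrac{1}{\tauM_\iclient}(\bar{u}_\iclient^{\star,t}-u_\iclient^t)$ and~\myref{yi2} — is exactly what makes the stepsize condition $\frac{1}{\tauM_\iclient}-\gammaM((1-B)\Mx+\frac{A}{w_\iclient})\ge\frac{4\mu_\iclient}{3\Mx\tauM_\iclient^2}$ tight. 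You acknowledge this reconciliation as "the main obstacle" but do not resolve it, and because the decomposition into $\bar{u}_\iclient^{\star,t}$ removes the very identity the paper relies on, it is not clear your route closes without reintroducing that identity in a different guise. Finally, a minor point: the framing of a "two-stage expectation, first over local-solver randomness," is off — Assumption~\ref{assm:localtr} is a deterministic accuracy requirement, not a source of randomness; the only randomness in the round is the sampling.
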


\begin{proof}
	We start from using variance decomposition \ref{eq:var-dec} and Proposition 1 from \citep{condat2021murana}, we obtain
	\begin{eqnarray}
		\Exp{\sqnorm{x^{t+1}-x^\star}\;|\;\mathcal{F}_t}&\overset{(\ref{vardec})}{=}&\sqnorm{\Exp{x^{t+1}\;|\;\mathcal{F}_t}-x^\star}+\Exp{\sqnorm{x^{t+1}-\Exp{x^{t+1}\;|\;\mathcal{F}_t}}\;|\;\mathcal{F}_t}\notag\\&\overset{(\ref{weighted})}{=}&   \notag\underbrace{ \sqnorm{\hat{x}^{t}-x^\star-\gammaM \Koper^\top\left(\bar{u}^{t+1}-u^t \right)}}_X-\gammaM^2B\sqnorm{\Koper^\top (\bar{u}^{t+1}-u^t)}\\
		&\quad&+\gammaM^2\sum_{\iclient=1}^{\Mx}\frac{A}{w_\iclient}\sqnorm{\bar{u}_\iclient^{t+1}-u_\iclient^t}.
		\label{x102}
	\end{eqnarray}
	Moreover, using \myref{fooc} and the definition of $\hat{x}^t$, we have
	\begin{align}
		&(1+\gammaM\mu)\hat{x}^t=x^t-\gammaM \Koper^\top u^{t},\label{opt13}\\
		&(1+\gammaM\mu)x^\star= x^\star -\gammaM\Koper^\top u^\star.\label{opt23}
	\end{align}
	Using \myref{opt13} and \myref{opt23} we obtain
	\begin{eqnarray}
		X&=&\sqnorm{\hat{x}^{t}-x^\star} +\gammaM^2\sqnorm{\Koper^\top\left(\bar{u}^{t+1}-u^t \right)}-2\gammaM \left\langle 
		\hat{x}^{t}-x^\star,\Koper^\top\left(\bar{u}^{t+1}-u^t \right) \right\rangle  \notag\\
		&\leq &(1+\gammaM\mu) \sqnorm{\hat{x}^{t}-x^\star} +\gammaM^2\sqnorm{\Koper^\top\left(\bar{u}^{t+1}-u^t \right)}\notag\\
		&\quad&-2\gammaM  \left\langle 
		\hat{x}^{t}-x^\star,\Koper^\top\left(\bar{u}^{t+1}-u^\star \right) \right\rangle  +2\gammaM  \left\langle 
		\hat{x}^{t}-x^\star,\Koper^\top\left(u^{t}-u^\star \right) \right\rangle  \notag\\
		&\overset{(\ref{opt13})+(\ref{opt23})}{=} &  \left\langle  x^t-x^\star-\gammaM \Koper^\top\left(u^{t}-u^\star \right),\hat{x}^{t}-x^\star \right\rangle  \notag+\gammaM^2\sqnorm{\Koper^\top\left(\bar{u}^{t+1}-u^t \right)}\notag\\
		&\quad&-2\gammaM  \left\langle 
		\hat{x}^{t}-x^\star,\Koper^\top\left(\bar{u}^{t+1}-u^\star \right) \right\rangle  +  \left\langle 
		\hat{x}^{t}-x^\star,2\gammaM\Koper^\top\left(u^{t}-u^\star \right)\right\rangle  \notag\\
		&= &\left\langle  x^t-x^\star+\gammaM \Koper^\top\left(u^{t}-u^\star \right),\hat{x}^{t}-x^\star \right\rangle  +\gammaM^2\sqnorm{\Koper^\top\left(\bar{u}^{t+1}-u^t \right)}\notag\\
		&\quad&-2\gammaM  \left\langle 
		\hat{x}^{t}-x^\star,\Koper^\top\left(\bar{u}^{t+1}-u^\star \right) \right\rangle  \notag\\
		&\overset{(\ref{opt13})+(\ref{opt23})}{=}&\frac{1}{1+\gammaM\mu} \left\langle  x^t-x^\star+\gammaM \Koper^\top\left(u^{t}-u^\star \right),x^t-x^\star-\gammaM \Koper^\top\left(u^{t}-u^\star \right) \right\rangle  \notag\\
		&\quad&+\gammaM^2\sqnorm{\Koper^\top\left(\bar{u}^{t+1}-u^t \right)}-2\gammaM  \left\langle 
		\hat{x}^{t}-x^\star,\Koper^\top\left(\bar{u}^{t+1}-u^\star \right) \right\rangle  \notag\\
		&= &\frac{1}{1+\gammaM\mu}\sqnorm{x^t-x^\star}-\frac{\gammaM^2}{1+\gammaM\mu}\sqnorm{\Koper^\top\left(u^{t}-u^\star \right)}\notag\\
		&\quad&+\gammaM^2\sqnorm{\Koper^\top\left(\bar{u}^{t+1}-u^t \right)}-2\gammaM  \left\langle 
		\hat{x}^{t}-x^\star,\Koper^\top\left(\bar{u}^{t+1}-u^\star \right) \right\rangle  . \label{long31} 
	\end{eqnarray}
	Combining \myref{x102} and \myref{long31}
	\begin{eqnarray*}
		\Exp{\sqnorm{x^{t+1}-x^\star}\;|\;\mathcal{F}_t}&\leq&  \frac{1}{1+\gammaM\mu}\sqnorm{x^t-x^\star}-\frac{\gammaM^2}{1+\gammaM\mu}\sqnorm{\Koper^\top (u^t-u^\star)}\\
		&\quad&+\gammaM^2(1-B)\sqnorm{\Koper^\top (\bar{u}^{t+1}-u^t)}\\
		&\quad&-2\gammaM  \left\langle 
		\hat{x}^{t}-x^\star,\Koper^\top (\bar{u}^{t+1}-u^\star) \right\rangle \\
		&\quad&+\gammaM^2\sum_{\iclient=1}^{\Mx}\frac{A}{w_\iclient}\sqnorm{\bar{u}_\iclient^{t+1}-u_\iclient^t}-\frac{\gammaM\mu}{\Mx}\sqnorm{\Koper\hat{x}^t-\Koper x^\star}.
	\end{eqnarray*}

Let $\widehat{p}=\left(\widehat{p}_1,\ldots,\widehat{p}_\Mx\right)$. The update for $u$ may be written as $$u_\iclient^{t+1}=u_\iclient^t+   \widehat{p}_m\frac{1}{\widehat{p}_m} Bernoulli(\bar{u}_\iclient^{t+1}-u^t_\iclient,\widehat{p}_m),$$ where $\widehat{p}_\iclient$ is the probability that client $\iclient$ participates in the iteration. Firstly note that the update for $u_m^{t+1}$ can be written as:
$$u_\iclient^{t+1}=u_\iclient^t+\widehat{p}_m\Ropp_\iclient(\bar{u}_\iclient^{t+1}-u^t_\iclient,\widehat{p}_m),$$ 
i.e
we have a relation of $\frac{1}{1+\qm}=\widehat{p}_m$ , which obviously makes sense, since the independent, unbiased bernoulli compressor with probability $\prm$ has conic variance $\qm=\frac{1}{\widehat{p}_m}-1$. This leads to
	$$u_\iclient^{t+1}=u_\iclient^t+ \frac{1}{1+\qm}\Ropp_\iclient(\bar{u}_\iclient^{t+1}-u^t_\iclient,\qm).$$ 
	Using such form, we get
	\begin{eqnarray}
		\notag \Exp{\sqnorm{u_\iclient^{t+1}-u_\iclient^\star}\;|\;\mathcal{F}_t} &\overset{(\ref{vardec})+(\ref{cvar})}{\leq}& \sqnorm{u_\iclient^{t}-u_\iclient^\star+\frac{1}{1+\qm}\left(\bar{u}_\iclient^{t+1} -u_\iclient^t\right)}\\
		&&+\frac{\qm}{(1+\qm)^2}\sqnorm{\bar{u}_\iclient^{t+1} -u_\iclient^t }\notag\\
		&=&\frac{\qm^2}{(1+\qm)^2}\sqnorm{u_\iclient^{t}-u_\iclient^\star}+\frac{1}{(1+\qm)^2}\sqnorm{\bar{u}_\iclient^{t+1}-u_\iclient^\star}\notag\\
	\notag	&\quad&+\frac{2\qm}{(1+\qm)^2} \left\langle  u_\iclient^{t}-u_\iclient^\star,
		\bar{u}_\iclient^{t+1}-u_\iclient^\star \right\rangle\\\notag &&+\frac{\qm}{(1+\qm)^2}\sqnorm{\bar{u}_\iclient^{t+1} -u_\iclient^\star }\notag+\frac{\qm}{(1+\qm)^2}\sqnorm{u_\iclient^{t} -u_\iclient^\star }\\
		&&\notag-\frac{2\qm}{(1+\qm)^2} \left\langle  u_\iclient^{t}-u_\iclient^\star,
		\bar{u}_\iclient^{t+1}-u_\iclient^\star \right\rangle \notag\\
		&\leq&\frac{1}{1+\qm}\sqnorm{\bar{u}_\iclient^{t+1} -u_\iclient^\star }+\frac{\qm}{1+\qm}\sqnorm{u_\iclient^{t} -u_\iclient^\star}.\label{newubound2}
	\end{eqnarray}
	Let us consider the first term in \myref{newubound2}:
	\begin{eqnarray*}
		\sqnorm{\bar{u}_\iclient^{t+1}-u_\iclient^\star}&=&\sqnorm{(u_\iclient^t-u_\iclient^\star)+(\bar{u}_\iclient^{t+1}-u_\iclient^t)}\\
		&=&\sqnorm{u_\iclient^t-u_\iclient^\star}+\sqnorm{\bar{u}_\iclient^{t+1}-u_\iclient^t}+2 \left\langle  u_\iclient^t-u_\iclient^\star,\bar{u}_\iclient^{t+1}-u_\iclient^t \right\rangle \\
		&=&\sqnorm{u_\iclient^t-u_\iclient^\star}+2  \left\langle  \bar{u}_\iclient^{t+1}-u_\iclient^\star,\bar{u}_\iclient^{t+1}-u_\iclient^t \right\rangle  - \sqnorm{\bar{u}_\iclient^{t+1}-u_\iclient^t}.
	\end{eqnarray*}
	Combining terms together we get
	\begin{align*}
		\Exp{\sqnorm{u_\iclient^{t+1}-u_\iclient^\star}\;|\;\mathcal{F}_t}&\leq \sqnorm{u_\iclient^{t} -u_\iclient^\star }\\
		&+\frac{1}{1+\qm}\left(2  \left\langle  \bar{u}_\iclient^{t+1}-u_\iclient^\star,\bar{u}_\iclient^{t+1}-u_\iclient^t \right\rangle  - \sqnorm{\bar{u}_\iclient^{t+1}-u_\iclient^t}\right).
	\end{align*}
	Finally, we obtain
	\begin{eqnarray*}
		\frac{1}{\gammaM}\Exp{\sqnorm{x^{t+1}-x^\star}\;|\;\mathcal{F}_t}&+&\sum_{\iclient=1}^\Mx\frac{1+\qm}{\tauM_\iclient}\Exp{\sqnorm{u_\iclient^{t+1}-u_\iclient^\star}\;|\;\mathcal{F}_t}\\
		&\leq&  \frac{1}{\gammaM\left(1+\gammaM\mu\right)}\sqnorm{x^t-x^\star}-\frac{\gammaM}{1+\gammaM\mu}\sqnorm{\Koper^\top (u^t-u^\star)}\\
		&&+\gammaM(1-B)\sqnorm{\Koper^\top (\bar{u}^{t+1}-u^t)}\\
		&&+\gammaM\sum_{\iclient=1}^{\Mx}\frac{A}{w_\iclient}\sqnorm{\bar{u}_\iclient^{t+1}-u_\iclient^t}-\frac{\mu}{\Mx}\sqnorm{\Koper\hat{x}^t-\Koper x^\star}\\
		&&+\frac{1+\qm}{\tauM_\iclient}\sqnorm{u_\iclient^{t} -u_\iclient^\star }-2 \sum_{\iclient=1}^\Mx \left\langle 
		\hat{x}^{t}-x^\star,\bar{u}_\iclient^{t+1}-u_\iclient^\star \right\rangle \\ 
		&&+\frac{1}{\tauM_\iclient}\left(2  \left\langle  \bar{u}_\iclient^{t+1}-u_\iclient^\star,\bar{u}_\iclient^{t+1}-u_\iclient^t \right\rangle  - \sqnorm{\bar{u}_\iclient^{t+1}-u_\iclient^t}\right).
	\end{eqnarray*}
	Ignoring $-\frac{\gammaM}{1+\gammaM\mu}\sqnorm{\Koper^\top (u^t-u^\star)}$ and noting
	\begin{align*}
		&-\left\langle \hat{x}^{t}-x^\star,\bar{u}_\iclient^{t+1}-u_\iclient^\star \right\rangle +\frac{1}{\tauM_\iclient}\left\langle  \bar{u}_\iclient^{t+1}-u_\iclient^\star,\bar{u}_\iclient^{t+1}-u_\iclient^t \right\rangle  \\
		&=-\left\langle \lastlocittermk- x^\star,\bar{u}_\iclient^{t+1}-u_\iclient^\star \right\rangle +\frac{1}{\tauM_\iclient}\left\langle  \nabla\localfuni (\lastlocittermk),\bar{u}_\iclient^{t+1}-u_\iclient^\star \right\rangle\\
		&\overset{(\ref{yi1})+(\ref{strmono})}{\leq} -\frac{1}{L_{F_\iclient}}\sqnorm{\bar{u}_\iclient^{t+1}-u_\iclient^\star}+\frac{a_\iclient}{2\tauM_\iclient}\sqnorm{\nabla\localfuni (\lastlocittermk)}+\frac{1}{2a_\iclient\tauM_\iclient}\sqnorm{\bar{u}_\iclient^{t+1}-u_\iclient^\star }\\
		&= -\left(\frac{1}{L_{F_\iclient}}-\frac{1}{2a_\iclient\tauM_\iclient}\right)\sqnorm{\bar{u}_\iclient^{t+1}-u_\iclient^\star}+\frac{a_\iclient}{2\tauM_\iclient}\sqnorm{\nabla\localfuni (\lastlocittermk)}\\
		&\overset{(\ref{newubound2})}{\leq} -\left(\frac{1}{L_{F_\iclient}}-\frac{1}{2a_\iclient\tauM_\iclient}\right)\left(\left(1+\qm\right)\Exp{\sqnorm{u_\iclient^{t+1}-u_\iclient^\star}\;|\;\mathcal{F}_t}-\qm\sqnorm{u_\iclient^t-u_\iclient^\star}\right)\\
		&+\frac{a_\iclient}{2\tauM_\iclient}\sqnorm{\nabla\localfuni (\lastlocittermk)},
	\end{align*}
	we get
	\begin{eqnarray*}
		\frac{1}{\gammaM}\Exp{\sqnorm{x^{t+1}-x^\star}\;|\;\mathcal{F}_t}&+&\sum_{\iclient=1}^\Mx\left(1+\qm\right)\left(\frac{1}{\tauM_\iclient}+\frac{1}{L_{F_\iclient}}\right)\Exp{\sqnorm{u_\iclient^{t+1}-u_\iclient^\star}\;|\;\mathcal{F}_t}\\
		&\leq&  \frac{1}{\gammaM\left(1+\gammaM\mu\right)}\sqnorm{x^t-x^\star}\\
		&+&\sum_{\iclient=1}^\Mx \left(1+\qm\right)\left(\frac{1}{\tauM_\iclient}+\frac{\qm}{1+\qm}\frac{1}{L_{F_\iclient}}\right)\sqnorm{u_\iclient^{t} -u_\iclient^\star }\\
		&&+\sum_{\iclient=1}^\Mx\left(\gammaM\left(1-B\right)\Mx +\gammaM\frac{A}{w_\iclient}-\frac{1}{\tauM_\iclient}\right)\sqnorm{\bar{u}_\iclient^{t+1}-u_\iclient^t}\\
		&&+\sum_{\iclient=1}^\Mx\frac{L_{F_\iclient}}{\tauM_\iclient^2}\sqnorm{\nabla\localfuni (\lastlocittermk)}-\sum_{\iclient=1}^\Mx\mu v_\iclient\sqnorm{\hat{x}^t- x^\star}.
	\end{eqnarray*}
	Where we made the choice $a_\iclient=\frac{L_{F_\iclient}}{\tau_\iclient}$
	and $\sum_{\iclient=1}^\Mx v_\iclient\leq 1$, positive real numbers,  e.g. $\frac{\mu_m}{\sum_{m=1}^{M} \mu_m }$.
	Using Young's inequality we have
	\begin{equation*}
		-\frac{\mu v_\iclient}{3}\sqnorm{ \hat{x}^t-\localsolmk+\localsolmk- x^\star}\overset{(\ref{yi3})}{\leq}\frac{\mu v_\iclient}{3}\sqnorm{\localsolmk- x^\star}-\frac{\mu v_\iclient}{6}\sqnorm{\hat{x}^t-\localsolmk}.
	\end{equation*}
	Noting the fact that $\localsolmk=\hat{x}^t-\frac{1}{\tauM_\iclient}(\hat{u}_\iclient^{t+1}-u_\iclient^t)$, we have
	$$\frac{\mu v_\iclient}{3}\sqnorm{\localsolmk- x^\star}\overset{(\ref{yi2})}{\leq} 2\frac{\mu v_\iclient}{3}\sqnorm{\hat{x}^t-x^\star}+\frac{2}{\tauM_\iclient^2}\frac{\mu v_\iclient}{3}\sqnorm{\hat{u}_\iclient^{t+1}-u_\iclient^t}.$$ 
	Combining those inequalities we get
	\begin{eqnarray*}
		\frac{1}{\gammaM}\Exp{\sqnorm{x^{t+1}-x^\star}\;|\;\mathcal{F}_t}&+&\sum_{\iclient=1}^\Mx\left(1+\qm\right)\left(\frac{1}{\tauM_\iclient}+\frac{1}{L_{F_\iclient}}\right)\Exp{\sqnorm{u_\iclient^{t+1}-u_\iclient^\star}\;|\;\mathcal{F}_t}\\
		&\leq&  \frac{1}{\gammaM\left(1+\gammaM\mu\right)}\sqnorm{x^t-x^\star}\\
		&&+\sum_{\iclient=1}^\Mx \left(1+\qm\right)\left(\frac{1}{\tauM_\iclient}+\frac{\qm}{1+\qm}\frac{1}{L_{F_\iclient}}\right)\sqnorm{u_\iclient^{t} -u_\iclient^\star }\\
		&&+\sum_{\iclient=1}^\Mx \frac{2}{\tauM_\iclient^2}\frac{\mu v_\iclient}{3}\sqnorm{\hat{u}_\iclient^{t+1}-u_\iclient^t}\\
		&&-\sum_{\iclient=1}^\Mx\left(\frac{1}{\tauM_\iclient}-\left(\gammaM\left(1-B\right)\Mx +\gammaM\frac{A}{w_\iclient}\right)\right)\sqnorm{\bar{u}_\iclient^{t+1}-u_\iclient^t}\\
		&&+\sum_{\iclient=1}^\Mx\frac{L_{F_\iclient}}{\tauM_\iclient^2}\sqnorm{\nabla\localfuni (\lastlocittermk)}-\sum_{\iclient=1}^\Mx\frac{\mu v_\iclient}{6}\sqnorm{\hat{x}^t-\localsolmk}.
	\end{eqnarray*}
	Assuming $\gammaM$ and $\tauM_\iclient$ can be chosen so that $\frac{1}{\tauM_\iclient}-\left(\gammaM\left(1-B\right)\Mx +\gammaM\frac{A}{w_\iclient}\right)\geq \frac{4}{\tauM_\iclient^2}\frac{\mu v_\iclient}{3}$ we obtain
	\begin{eqnarray*}
		\frac{1}{\gammaM}\Exp{\sqnorm{x^{t+1}-x^\star}\;|\;\mathcal{F}_t}&+&\sum_{\iclient=1}^\Mx\left(1+\qm\right)\left(\frac{1}{\tauM_\iclient}+\frac{1}{L_{F_\iclient}}\right)\Exp{\sqnorm{u_\iclient^{t+1}-u_\iclient^\star}\;|\;\mathcal{F}_t}\\
		&\leq&  \frac{1}{\gammaM\left(1+\gammaM\mu\right)}\sqnorm{x^t-x^\star}\\
		&\quad&+\sum_{\iclient=1}^\Mx \left(1+\qm\right)\left(\frac{1}{\tauM_\iclient}+\frac{\qm}{1+\qm}\frac{1}{L_{F_\iclient}}\right)\sqnorm{u_\iclient^{t} -u_\iclient^\star }\\
		&\quad&+\sum_{\iclient=1}^\Mx \frac{4}{\tauM_\iclient^2}\frac{\mu v_\iclient L_{F_\iclient}^2}{3}\sqnorm{\lastlocittermk-\localsolmk}\\
		&\quad&+\sum_{\iclient=1}^\Mx\frac{L_{F_\iclient}}{\tauM_\iclient^2}\sqnorm{\nabla\localfuni (\lastlocittermk)}-\sum_{\iclient=1}^\Mx\frac{\mu v_\iclient}{6}\sqnorm{\hat{x}^t-\localsolmk}.
	\end{eqnarray*}
	 The point $\lastlocitterk$ is supposed to satisfy Assumption~\ref{assm:localtr}:
	\begin{eqnarray*}
		\sum_{\iclient=1}^\Mx\frac{4}{\tauM_\iclient^2}\frac{\mu v_\iclient L^2_{F_\iclient}}{3}\sqnorm{\lastlocittermk-\localsolmk}+\sum_{\iclient=1}^\Mx\frac{L_{F_\iclient}}{\tauM_\iclient^2}\sqnorm{\nabla\localfuni (\lastlocittermk)}\leq\sum_{\iclient=1}^\Mx\frac{\mu v_\iclient}{6}\sqnorm{\hat{x}^t-\localsolmk}.
	\end{eqnarray*}
	Thus
	\begin{eqnarray*}
		\frac{1}{\gammaM}\Exp{\sqnorm{x^{t+1}-x^\star}\;|\;\mathcal{F}_t}&+&\sum_{\iclient=1}^\Mx\left(1+\qm\right)\left(\frac{1}{\tauM_\iclient}+\frac{1}{L_{F_\iclient}}\right)\Exp{\sqnorm{u_\iclient^{t+1}-u_\iclient^\star}\;|\;\mathcal{F}_t}\\
		&\leq&  \frac{1}{\gammaM\left(1+\gammaM\mu\right)}\sqnorm{x^t-x^\star}\\
		&\quad&+\sum_{\iclient=1}^\Mx \left(1+\qm\right)\left(\frac{1}{\tauM_\iclient}+\frac{\qm}{1+\qm}\frac{1}{L_{F_\iclient}}\right)\sqnorm{u_\iclient^{t} -u_\iclient^\star }.
	\end{eqnarray*}
	By taking the expectation on both sides we get
	\begin{equation*}
		\Exp{\Psi^{t+1}}\leq  \max\left\{\frac{1}{1+\gammaM\mu} ,\frac{ L_{F_\iclient}+\frac{\qm}{1+\qm}\tauM_\iclient}{L_{F_\iclient}+\tauM_\iclient} \right\}\Exp{\Psi^{t}},
	\end{equation*}
	which finishes the proof.
\end{proof}
\section{Multisampling (Sampling with Replacement)}
\subsection{Proof of Lemma~\ref{lem:multi}}
\begin{lemma*}
	The Multisampling with estimator~\ref{eq:swr} satisfies the Assumption~\ref{weighted} with $A= B = \frac{1}{C}$ and $w_m=\prm$.
\end{lemma*}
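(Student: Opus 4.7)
The plan is to verify Assumption~\ref{weighted} by direct computation of the first and second moments of the estimator $S$ defined in~\eqref{eq:swr}, exploiting the fact that $\chi_1,\ldots,\chi_C$ are i.i.d.\ copies of a random index $\chi$ with $\Pr(\chi=m)=p_m$.

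First, I would establish unbiasedness. Writing $Y_c \eqdef a_{\chi_c}/(M p_{\chi_c})$, so that $S=\frac{1}{C}\sum_{c=1}^C Y_c$, a single application of the law of total expectation gives $\Exp{Y_c} = \sum_{m=1}^M p_m \cdot \frac{a_m}{M p_m} = \frac{1}{M}\sum_{m=1}^M a_m \eqdef \bar a$, so $\Exp{S}=\bar a$, which also matches the unbiasedness requirement of Assumption~\ref{weighted}.

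Second, for the variance bound I would use independence of the $Y_c$ to write $\Exp{\sqnorm{S-\bar a}} = \frac{1}{C^2}\sum_{c=1}^C \Exp{\sqnorm{Y_c - \bar a}} = \frac{1}{C}\Exp{\sqnorm{Y_1-\bar a}}$. Then applying the variance decomposition~\eqref{eq:var-dec} (equivalently, subtracting the squared mean from the second moment) yields
\begin{equation*}
\Exp{\sqnorm{Y_1-\bar a}} \;=\; \Exp{\sqnorm{Y_1}} - \sqnorm{\bar a} \;=\; \sum_{m=1}^M p_m \frac{\sqnorm{a_m}}{M^2 p_m^2} - \sqnorm{\bar a} \;=\; \frac{1}{M^2}\sum_{m=1}^M \frac{\sqnorm{a_m}}{p_m} - \sqnorm{\bar a}.
\end{equation*}
Dividing by $C$ gives exactly
\begin{equation*}
\Exp{\sqnorm{S-\bar a}} \;\leq\; \frac{1/C}{M^2}\sum_{m=1}^M \frac{\sqnorm{a_m}}{p_m} - \frac{1}{C}\sqnorm{\bar a},
\end{equation*}
from which one reads off $A=B=1/C$ and weights $w_m=p_m$ (which indeed lie in $\Delta^M$ since the $p_m$ sum to one).

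There is no real obstacle here: the whole argument is an i.i.d.\ variance computation, and the only thing to be careful about is matching the normalizations in the definition of $S$ (the factor $1/(Mp_{\chi_c})$ inside the sum and the factor $1/C$ outside) so that the $1/M^2$ in the statement of Assumption~\ref{weighted} appears correctly, and that $w_m=p_m$ cancels the $1/p_m$ weighting that naturally arises from the importance-sampling correction.
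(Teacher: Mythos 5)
Your proof is correct and takes essentially the same route as the paper: both reduce to computing the second moment of a single importance-weighted sample $a_\chi/(Mp_\chi)$ and divide by $C$ using i.i.d.-ness (the paper expands the cross terms explicitly and notes they vanish; you invoke the variance-of-an-average identity directly, which is the same argument in slightly tighter bookkeeping).
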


\begin{proof} The proof is presented in \citet{PAGE-AS}. Let us provide it for completeness. 
	 
Let us fix $\Cx>0$. For all $\iclient\in[\Cx]$, we define i.i.d. random variables
\begin{eqnarray*}
	\mathcal{X}_\iclient=\begin{cases}
		1 & \text{with probability }p_1\\
		2 & \text{with probability }p_2\\
		& \cdot\\
		& \cdot\\
		& \cdot\\
		\Mx & \text{with probability }p_\Mx,
	\end{cases}
\end{eqnarray*}
where $\underline{p}=\left(p_1, \ldots, p_M\right) \in \Delta^M$(simple simplex). A sampling 
\begin{equation*}
	S(a_1,\ldots,a_M;\underline{p})\eqdef \frac{1}{\Cx}\sum_{\iclient=1}^{\Cx}\frac{a_{\mathcal{X}_\iclient}}{\Mx p_{\mathcal{X}_\iclient}}
\end{equation*}
is called the Importance sampling. 

Let us establish inequality for Assumption~\ref{weighted}: 
\begin{eqnarray*}
&&	\Exp{\sqnorm{\frac{1}{\Cx}\sum_{\iclient=1}^{\Cx}\frac{a_{\mathcal{X}_\iclient}}{\Mx p_{\mathcal{X}_\iclient}}-\frac{1}{\Mx}\sum_{\iclient=1}^{\Mx}a_\iclient}}=\frac{1}{\Cx^2}\sum_{\iclient=1}^{\Cx}\Exp{\sqnorm{\frac{a_{\mathcal{X}_\iclient}}{\Mx p_{\mathcal{X}_\iclient}}-\frac{1}{\Mx}\sum_{\iclient=1}^{\Mx}a_\iclient}}\\
	&\quad&+\frac{1}{\Cx^2}\sum_{\iclient\neq\iclient'}\Exp{\left\langle\frac{a_{\mathcal{X}_\iclient}}{\Mx p_{\mathcal{X}_\iclient}}-\frac{1}{\Mx}\sum_{\iclient=1}^{\Mx}a_\iclient,\frac{a_{\mathcal{X}_\iclient'}}{\Mx p_{\mathcal{X}_\iclient'}}-\frac{1}{\Mx}\sum_{\iclient=1}^{\Mx}a_\iclient\right\rangle}.
\end{eqnarray*}
By utilizing the independence and unbiasedness of the random variables, the final term becomes zero, resulting in:
\begin{eqnarray*}
	\Exp{\sqnorm{\frac{1}{\Cx}\sum_{\iclient=1}^{\Cx}\frac{a_{\mathcal{X}_\iclient}}{\Mx p_{\mathcal{X}_\iclient}}-\frac{1}{\Mx}\sum_{\iclient=1}^{\Mx}a_\iclient}}&=&\frac{1}{\Cx^2}\sum_{\iclient=1}^{\Cx}\Exp{\sqnorm{\frac{a_{\mathcal{X}_\iclient}}{\Mx p_{\mathcal{X}_\iclient}}-\frac{1}{\Mx}\sum_{\iclient=1}^{\Mx}a_\iclient}}\\
	&=&\frac{1}{\Cx^2}\sum_{\iclient=1}^{\Cx}\Exp{\sqnorm{\frac{a_{\mathcal{X}_\iclient}}{\Mx p_{\mathcal{X}_\iclient}}}}-\frac{1}{\Cx}\sqnorm{\frac{1}{\Mx}\sum_{\iclient=1}^{\Mx}a_\iclient}\\
	&=&\frac{1}{\Cx}\sum_{\iclient=1}^{\Mx}p_\iclient\sqnorm{\frac{a_{\iclient}}{\Mx p_\iclient}}-\frac{1}{\Cx}\sqnorm{\frac{1}{\Mx}\sum_{\iclient=1}^{\Mx}a_\iclient}\\
	&=&\frac{1}{\Cx}\left(\frac{1}{\Mx}\sum_{\iclient=1}^{\Mx}\frac{1}{\Mx p_\iclient}\sqnorm{a_{\iclient}}-\sqnorm{\frac{1}{\Mx}\sum_{\iclient=1}^{\Mx}a_\iclient}\right).
\end{eqnarray*}
Thus we have $A=B=\frac{1}{\Cx}$.
\end{proof}
\subsection{Proof of Theorem \ref{thm:5GCMult}}
\begin{theorem*} 
	Let Assumption~\ref{assm:L-smooth-conv} hold. Consider Algorithm~\ref{alg:5GCS-AB} (\algname{5GCS-AB}) with Multisampling and estimator~\ref{eq:swr} satisfying Assumption~\ref{weighted} and LT solvers $\mathcal{A}_m$ satisfying Assumption~\ref{assm:localtr}. Let the inequality hold $\frac{1}{\tauM_\iclient}-\left(\gammaM\left(1-\frac{1}{\Cx}\right)\Mx +\gammaM\frac{1}{\Cx \prm}\right)\geq \frac{4}{\tauM_\iclient^2}\frac{\mu_\iclient}{3\Mx}$ , e.g. $\tauM_\iclient\geq\frac{8\mu_\iclient}{3\Mx}$ and $\gammaM\leq\frac{1}{2\tauM_\iclient\left(\left(1-\frac{1}{\Cx}\right)\Mx +\frac{1}{\Cx \prm}\right)}$.  Then for the Lyapunov function 
	\begin{equation*}
		  	\Psi^{\kstep}\eqdef \frac{1}{\gammaM}\sqnorm{x^{t}-x^\star}+\sum_{\iclient=1}^\Mx\frac{1}{\widehat{p}_\iclient}\left(\frac{1}{\tauM_\iclient}+\frac{1}{L_{F_\iclient}}\right)\sqnorm{u_\iclient^{t}-u_\iclient^\star},
	\end{equation*}
	the iterates of  the method satisfy
	\begin{equation*}
	 	\Exp{\Psi^{t+1}}\leq  \max\left\{\frac{1}{1+\gammaM\mu} , \max_{m}\left[\frac{ L_{F_\iclient}+\left(1-\widehat{p}_\iclient\right)\tauM_\iclient}{L_{F_\iclient}+\tauM_\iclient} \right\}\right]\Exp{\Psi^{t}},
	\end{equation*}
	where $\widehat{p}_\iclient = 1-\left(1-\prm\right)^\Cx$ is probability that $m$-th client is participating. 
	
	%	Also, $(x^\kstep)_{\kstep\in\mathbb{N}}$ and $(\hat{x}^\kstep)_{\kstep\in\mathbb{N}}$ both converge  to $x^\star$ and $(u^\kstep)_{\kstep\in\mathbb{N}}$ converges to $u^\star$, almost surely.
\end{theorem*}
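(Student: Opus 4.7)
The plan is to obtain Theorem \ref{thm:5GCMult} as a direct corollary of the general Theorem \ref{thm:5GCS-AB} by specializing to the Multisampling scheme. The ingredients are (i) the values of $A$, $B$, $w_m$ supplied by Lemma \ref{lem:multi}, and (ii) a computation of the per-client participation probability $\widehat{p}_m$ under Multisampling.

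First I would invoke Lemma \ref{lem:multi}, which verifies that the Multisampling estimator defined in \eqref{eq:swr} satisfies Assumption \ref{weighted} with constants $A=B=\tfrac{1}{C}$ and weights $w_m = p_m$. Substituting these into the stepsize condition of Theorem \ref{thm:5GCS-AB}, namely $\frac{1}{\tau_m} - \bigl(\gammaM(1-B)M + \gammaM \tfrac{A}{w_m}\bigr) \geq \frac{4}{\tau_m^2}\frac{\mu_m}{3M}$, immediately yields the stated condition $\frac{1}{\tau_m} - \bigl(\gammaM(1-\tfrac{1}{C})M + \gammaM \tfrac{1}{C p_m}\bigr) \geq \frac{4}{\tau_m^2}\frac{\mu_m}{3M}$, and the explicit sufficient choice $\tau_m \geq \tfrac{8\mu_m}{3M}$, $\gammaM \leq \tfrac{1}{2\tau_m((1-1/C)M + 1/(Cp_m))}$ follows from the same reasoning as in the proof of Theorem \ref{thm:5GCS-AB}.

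Next I would compute $\widehat{p}_m$, the probability that client $m$ appears in the cohort $S^t$ for Multisampling. Since the cohort is formed by $C$ i.i.d.\ draws $\chi_1,\ldots,\chi_C$ with $\mathbf{P}[\chi_i = m] = p_m$, the probability that client $m$ is missed in all $C$ draws is $(1-p_m)^C$, so
\begin{equation*}
\widehat{p}_m \;=\; 1 - (1-p_m)^C.
\end{equation*}
The general theorem is stated in terms of $q_m = 1/\widehat{p}_m - 1$; using the identities $1 + q_m = 1/\widehat{p}_m$ and $\tfrac{q_m}{1+q_m} = 1 - \widehat{p}_m$, the Lyapunov function in Theorem \ref{thm:5GCS-AB} becomes $\Psi^t = \tfrac{1}{\gammaM}\|x^t - x^\star\|^2 + \sum_{m=1}^M \tfrac{1}{\widehat{p}_m}\bigl(\tfrac{1}{\tau_m} + \tfrac{1}{L_{F_m}}\bigr)\|u_m^t - u_m^\star\|^2$, and the contraction factor becomes
\begin{equation*}
\max\!\left\{\frac{1}{1+\gammaM\mu},\,\max_{m}\frac{L_{F_m} + (1-\widehat{p}_m)\tau_m}{L_{F_m} + \tau_m}\right\},
\end{equation*}
matching the theorem statement exactly.

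The entire argument is therefore a substitution exercise, and the only non-trivial step is the combinatorial computation of $\widehat{p}_m$ under sampling with replacement. There is no real obstacle beyond correctly translating the general abstract constants $(A,B,w_m)$ and the reciprocal participation probability $q_m$ into the Multisampling-specific quantities; all analytic work (the Young inequality manipulations, the dual iterate bound via the Bernoulli compressor interpretation, and the telescoping through Assumption \ref{assm:localtr}) has already been carried out in the proof of Theorem \ref{thm:5GCS-AB}. Taking the expectation over the randomness of $S^t$ in the one-step recursion then yields the claimed linear contraction.
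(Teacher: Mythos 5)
Your proposal is correct and follows exactly the same route as the paper's proof of Theorem~\ref{thm:5GCMult}: invoke Lemma~\ref{lem:multi} to get $A=B=\tfrac{1}{C}$ and $w_m=p_m$, compute $\widehat{p}_m = 1-(1-p_m)^C$ for sampling with replacement, and substitute $q_m = \tfrac{1}{\widehat{p}_m}-1$ into the general recursion and stepsize condition of Theorem~\ref{thm:5GCS-AB}. Nothing is missing.
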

\begin{proof}
	We start from theorem~\ref{thm:5GCS-AB}:
		\begin{equation*}
		\Exp{\Psi^{t+1}}\leq  \max\left\{\frac{1}{1+\gammaM\mu} ,\frac{ L_{F_\iclient}+\frac{\qm}{1+\qm}\tauM_\iclient}{L_{F_\iclient}+\tauM_\iclient} \right\}\Exp{\Psi^{t}}. 
	\end{equation*}
For Multisampling the probability of $m$-th client participating is $\widehat{p}_\iclient = 1-\left(1-\prm\right)^\Cx$ and we have relation $\widehat{p}_\iclient  = \frac{1}{1+q_m} $. Plugging $q_m = \frac{1}{\widehat{p}_m} - 1$ into recursion gives us 

	\begin{equation*}
	\Exp{\Psi^{t+1}}\leq  \max\left\{\frac{1}{1+\gammaM\mu} , \max_{m}\left[\frac{ L_{F_\iclient}+\left(1-\widehat{p}_\iclient\right)\tauM_\iclient}{L_{F_\iclient}+\tauM_\iclient} \right\}\right]\Exp{\Psi^{t}}.
\end{equation*}
Also using Lemma~\ref{lem:multi} we have $A = B = \frac{1}{C}$ and $w = p_m$. Plugging such constants to inequality for $\gamma$ and $\tau_m$ leads to $\frac{1}{\tauM_\iclient}-\left(\gammaM\left(1-\frac{1}{\Cx}\right)\Mx +\gammaM\frac{1}{\Cx \prm}\right)\geq \frac{4}{\tauM_\iclient^2}\frac{\mu_\iclient}{3\Mx}$ , e.g. $\tauM_\iclient\geq\frac{8\mu_\iclient}{3\Mx}$ and $\gammaM\leq\frac{1}{2\tauM_\iclient\left(\left(1-\frac{1}{\Cx}\right)\Mx +\frac{1}{\Cx \prm}\right)}$.  
\end{proof}
\subsection{Proof of Corollary~\ref{cor:5GCS1}}

\begin{corollary*}
	Suppose $\Cx=1$. Choose any $0<\varepsilon<1$ and $p_\iclient=\frac{\sqrt{L_{F,\iclient}+\tauM_\iclient}}{\sum_{\iclient=1}^\Mx \sqrt{L_{F,\iclient}+\tauM_\iclient}}$, and $\tauM_\iclient=\frac{8}{3}\sqrt{\overline{L}\mu\Mx}p_\iclient$.  In order to guarantee $\Exp{\Psi^{\Tx}}\leq \varepsilon \Psi^0$, it suffices to take 
	\begin{eqnarray*}
		  	T \geq	 
		\max\left\{1+\frac{16}{3}\sqrt{\frac{\overline{L}\Mx}{\mu}},\frac{3}{8}\sqrt{\frac{\overline{L}\Mx}{\mu}}+\Mx\right\} \log \frac{1}{\varepsilon}
	\end{eqnarray*}
	communication rounds.
\end{corollary*}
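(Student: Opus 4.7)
The plan is to apply Theorem~\ref{thm:5GCMult} with $C=1$ and convert the per-step contraction into an iteration count via the elementary inequality $\log(1/\rho)\geq 1-\rho$ for $\rho\in(0,1)$. Since $\widehat{p}_m=1-(1-p_m)^1=p_m$ when $C=1$, the contraction factor from the theorem is $\rho=\max\bigl\{\tfrac{1}{1+\gamma\mu},\;\max_m \tfrac{L_{F,m}+(1-p_m)\tau_m}{L_{F,m}+\tau_m}\bigr\}$, so reaching $\mathbb{E}[\Psi^T]\leq\varepsilon\Psi^0$ is guaranteed whenever $T\geq \tfrac{1}{1-\rho}\log(1/\varepsilon)$. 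Hence the corollary reduces to upper bounding the two quantities $1+\tfrac{1}{\gamma\mu}$ and $\max_m \tfrac{L_{F,m}+\tau_m}{p_m\tau_m}$ under the prescribed choices.

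For the stepsize term, I first verify the stepsize constraint. With $C=1$, Theorem~\ref{thm:5GCMult} requires $\gamma\leq \tfrac{p_m}{2\tau_m}$ for every $m$; substituting $\tau_m=\tfrac{8}{3}\sqrt{\overline{L}\mu M}\,p_m$, the factors of $p_m$ cancel and the bound becomes the $m$-independent inequality $\gamma\leq \tfrac{3}{16\sqrt{\overline{L}\mu M}}$. Taking $\gamma$ equal to this upper bound yields $\tfrac{1}{\gamma\mu}=\tfrac{16}{3}\sqrt{\overline{L}M/\mu}$, which produces the first entry $1+\tfrac{16}{3}\sqrt{\overline{L}M/\mu}$ inside the max.

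For the sampling term I exploit a cancellation enabled by the \emph{implicit} definition of $p_m$. Squaring the identity $p_m=\sqrt{L_{F,m}+\tau_m}/S$ with $S:=\sum_j\sqrt{L_{F,j}+\tau_j}$ gives $L_{F,m}+\tau_m=p_m^2 S^2$, so that $\tfrac{L_{F,m}+\tau_m}{p_m\tau_m}=\tfrac{p_m S^2}{\tau_m}=\tfrac{3S^2}{8\sqrt{\overline{L}\mu M}}$, which is independent of $m$; the outer maximum is therefore trivial. It remains to bound $S^2$: Cauchy-Schwarz gives $S^2\leq M\sum_j(L_{F,j}+\tau_j)$, and then $\sum_j L_{F,j}=\tfrac{1}{M}\sum_j(L_j-\mu_j)\leq \overline{L}$ while $\sum_j\tau_j=\tfrac{8}{3}\sqrt{\overline{L}\mu M}\sum_j p_j=\tfrac{8}{3}\sqrt{\overline{L}\mu M}$, using that $\sum_j p_j=1$ holds by construction. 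Substituting back yields $\tfrac{3S^2}{8\sqrt{\overline{L}\mu M}}\leq \tfrac{3}{8}\sqrt{\overline{L}M/\mu}+M$, the second entry.

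The main technical subtlety is the circular character of the prescribed parameters: $p_m$ depends on $\tau_m$ and $\tau_m$ depends on $p_m$. Existence of a consistent pair $(p,\tau)$ is guaranteed by the Brouwer fixed-point theorem, as noted by the authors, and the mild side constraint $\tau_m\geq \tfrac{8\mu_m}{3M}$ from Assumption~\ref{assm:localtr} must also be verified. Once this is in place, the algebra above goes through verbatim, and combining the two bounds with $T\geq \tfrac{1}{1-\rho}\log(1/\varepsilon)$ yields the stated iteration complexity.
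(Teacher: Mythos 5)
Your proof is correct and follows essentially the same line as the paper's: reduce to the contraction factor of Theorem~\ref{thm:5GCMult} with $\widehat{p}_m=p_m$, use the cancellation $L_{F,m}+\tau_m=p_m^2\bigl(\sum_j\sqrt{L_{F,j}+\tau_j}\bigr)^2$ together with $\tau_m\propto p_m$ to make the sampling term $m$-independent, and then bound $\bigl(\sum_j\sqrt{L_{F,j}+\tau_j}\bigr)^2$ by Cauchy--Schwarz combined with $\sum_j L_{F,j}\leq\overline{L}$ and $\sum_j\tau_j=\tfrac{8}{3}\sqrt{\overline{L}\mu M}$. The only cosmetic difference is that the paper presents the chain of inequalities backwards, starting from the claimed $T$ and working down to $\max_m\tfrac{L_{F,m}+\tau_m}{p_m\tau_m}$, whereas you derive the bound forward, which is arguably cleaner; like the paper, you flag but do not actually check the side constraint $\tau_m\geq\tfrac{8\mu_m}{3M}$.
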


\begin{proof}
We set parameters as
$p_\iclient=\frac{\sqrt{L_{F_\iclient}+\tauM_\iclient}}{\sum_{\iclient=1}^\Mx \sqrt{L_{F_\iclient}+\tauM_\iclient}}$, and $\tauM_\iclient=\frac{8}{3}\sqrt{\overline{L}\mu\Mx}p_\iclient$. Let us derive the communication complexity:
\begin{align*}
	T&\geq\max\left\{1+\frac{16}{3}\sqrt{\frac{\overline{L}\Mx}{\mu}},\frac{3}{8}\sqrt{\frac{\overline{L}\Mx}{\mu}}+\Mx\right\}\log \frac{1}{\varepsilon}\\
	&\geq \max\left\{1+\frac{16}{3}\sqrt{\frac{\overline{L}\Mx}{\mu}},\frac{3}{8}\frac{\Mx \overline{L}+\Mx \frac{8}{3}\sqrt{\overline{L}\mu\Mx}}{\sqrt{\overline{L}\mu\Mx}}\right\}\log \frac{1}{\varepsilon}\\
	&\geq\max\left\{1+\frac{16}{3}\sqrt{\frac{\overline{L}\Mx}{\mu}},\frac{3}{8}\frac{\left(\sum_{\iclient=1}^\Mx \sqrt{L_{F_\iclient+\tauM_\iclient}}\right)^2}{\sqrt{\overline{L}\mu\Mx}}\right\}\log \frac{1}{\varepsilon}\\
	&\geq \max\left\{1+\frac{16}{3}\sqrt{\frac{\overline{L}\Mx}{\mu}},\max_{m}\left(\frac{3}{8}\frac{\left(\sum_{\iclient=1}^\Mx \sqrt{L_{F_\iclient+\tauM_\iclient}}\right)^2}{\left(L_{F_\iclient}+\tauM_\iclient\right)\sqrt{\overline{L}\mu\Mx}}\left(L_{F_\iclient}+\tauM_\iclient\right)\right)\right\}\log \frac{1}{\varepsilon}.
\end{align*}
Unrolling the recursion from Theorem~\ref{thm:5GCMult} we get 
	\begin{equation}
		\label{eq:5.4}
	\Exp{\Psi^{T}}\leq \left( \max\left\{\frac{1}{1+\gammaM\mu} , \max_{m}\left[\frac{ L_{F_\iclient}+\left(1-\widehat{p}_\iclient\right)\tauM_\iclient}{L_{F_\iclient}+\tauM_\iclient} \right\}\right]\right)^T\Psi^{0}.
\end{equation}
Using Lemma from \citet{malinovsky2021random} for recursion (Appendix B), we can state that derived $T$ is sufficient to guarantee \ref{eq:5.4}.

\subsection{Proof of Corollary~\ref{cor:5GCS2}}

\begin{corollary*}
	Suppose $C=1$. Choose any $0<\varepsilon<1$ and $p_\iclient=\frac{\sqrt{\frac{L_\iclient}{\Mx}}}{\sum_{\iclient=1}^\Mx \sqrt{\frac{L_\iclient}{\Mx}}}$, and $\tauM_\iclient=\frac{8}{3}\sqrt{\overline{L}\mu\Mx}p_\iclient$.  In order to guarantee $\Exp{\Psi^{\Tx}}\leq \varepsilon \Psi^0$, it suffices to take 
	\begin{eqnarray*}		  
		T \geq	
		\max\left\{1+\frac{16}{3}\sqrt{\frac{\overline{L}\Mx}{\mu}},\frac{3}{8}\sqrt{\frac{\overline{L}\Mx}{\mu}}+\frac{\sum_{\iclient=1}^\Mx \sqrt{L_\iclient}}{\sqrt{L_{\min}}}\right\}\log \frac{1}{\varepsilon}
	\end{eqnarray*}
	communication rounds. Note that $L_{\min} = \min_m L_m$.
\end{corollary*}
We set parameters as $p_\iclient=\frac{\sqrt{\frac{L_\iclient}{\Mx}}}{\sum_{\iclient=1}^\Mx \sqrt{\frac{L_\iclient}{\Mx}}}$, and $\tauM_\iclient=\frac{8}{3}\sqrt{\overline{L}\mu\Mx}p_\iclient$. Let us derive the communication complexity. Since $\left(\sum_{\iclient=1}^\Mx \sqrt{\frac{L_\iclient}{\Mx}}\right)^2\leq \Mx \overline{L}$ we have
\begin{align*}
	T &\geq\max\left\{1+\frac{16}{3}\sqrt{\frac{\overline{L}\Mx}{\mu}},\frac{3}{8}\sqrt{\frac{\overline{L}\Mx}{\mu}}+\frac{\sum_{\iclient=1}^\Mx \sqrt{L_\iclient}}{\sqrt{L_{\min}}}\right\}\log \frac{1}{\varepsilon}\\
	&\geq\max\left\{1+\frac{16}{3}\sqrt{\frac{\overline{L}\Mx}{\mu}},\frac{3}{8}\frac{\left(\sum_{\iclient=1}^\Mx \sqrt{\frac{L_\iclient}{\Mx}}\right)^2}{\sqrt{\overline{L}\mu\Mx}}+\frac{\sum_{\iclient=1}^\Mx \sqrt{\frac{L_\iclient}{\Mx}}}{\sqrt{\frac{L_{\min}}{\Mx}}}\right\}\log \frac{1}{\varepsilon}\\
	&\geq \max\left\{1+\frac{16}{3}\sqrt{\frac{\overline{L}\Mx}{\mu}},\max_m\left(\frac{3}{8}\frac{\left(\sum_{\iclient=1}^\Mx \sqrt{\frac{L_\iclient}{\Mx}}\right)^2}{\frac{L_\iclient}{\Mx}\sqrt{L^+\mu\Mx}}\frac{L_\iclient-\mu_\iclient}{\Mx}+\frac{\sum_{\iclient=1}^\Mx \sqrt{\frac{L_\iclient}{\Mx}}}{\sqrt{\frac{L_\iclient}{\Mx}}}\right)\right\}\log \frac{1}{\varepsilon}\\
	&\geq \max\left\{1+\frac{1}{\gammaM\mu},\max_m\left(\frac{1}{\widehat{p}_\iclient}\left(\frac{L_{F_\iclient}}{\tauM_\iclient}+1\right)\right)\right\}\log \frac{1}{\varepsilon}\\
	&\geq \max\left\{1+\frac{1}{\gammaM\mu},\max_m\left(\left(1+\qm\right)\left(\frac{L_{F_\iclient}}{\tauM_\iclient}+1\right)\right)\right\}\log \frac{1}{\varepsilon}.
\end{align*}
Unrolling the recursion from Theorem~\ref{thm:5GCMult} we get 
\begin{equation}
	\label{eq:5.5}
	\Exp{\Psi^{T}}\leq \left( \max\left\{\frac{1}{1+\gammaM\mu} , \max_{m}\left[\frac{ L_{F_\iclient}+\left(1-\widehat{p}_\iclient\right)\tauM_\iclient}{L_{F_\iclient}+\tauM_\iclient} \right\}\right]\right)^T\Psi^{0}.
\end{equation}
Using Lemma from \citet{malinovsky2021random} for recursion (Appendix B), we can state that derived $T$ is sufficient to guarantee \ref{eq:5.5}.
\end{proof}

\section{Independent Sampling (Sampling without Replacement)}
\subsection{Proof of Lemma~\ref{lem:ind}}
\begin{lemma*}
	The Independent Sampling with estimator~\ref{eq:1} satisfies the Assumption~\ref{weighted} with $A=\frac{1}{\sum_{m}^{M}\frac{\prm}{1-\prm}}$, $B=0$   and $w_m=\frac{\frac{\prm}{1-\prm}}{\sum_{m=1}^{M}\frac{\prm}{1-\prm}}$.
\end{lemma*}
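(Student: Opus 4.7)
The plan is a direct computation: expand the squared error, exploit independence of the Bernoulli selectors to kill all cross terms, evaluate the resulting per-client variances, and then massage the constants so that they match the form prescribed by Assumption~\ref{weighted}.

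More concretely, I would first write the estimator in the form $S = \frac{1}{M}\sum_{m=1}^M \frac{\chi_m}{p_m} a_m$, verify unbiasedness via $\mathbb{E}[\chi_m] = p_m$, and then consider the centered quantity
\begin{equation*}
S - \frac{1}{M}\sum_{m=1}^M a_m \;=\; \frac{1}{M}\sum_{m=1}^M \frac{\chi_m - p_m}{p_m}\,a_m .
\end{equation*}
Because the $\chi_m$ are mutually independent (this is the crucial distinction from Multisampling), the cross terms $\mathbb{E}[(\chi_m-p_m)(\chi_{m'}-p_{m'})] = 0$ whenever $m\neq m'$, so
\begin{equation*}
\mathbb{E}\!\left\|S - \tfrac{1}{M}\sum_m a_m\right\|^2 \;=\; \frac{1}{M^2}\sum_{m=1}^M \frac{\operatorname{Var}(\chi_m)}{p_m^2}\,\sqnorm{a_m} \;=\; \frac{1}{M^2}\sum_{m=1}^M \frac{1-p_m}{p_m}\,\sqnorm{a_m},
\end{equation*}
using $\operatorname{Var}(\chi_m)=p_m(1-p_m)$.

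The remaining task is purely algebraic: rewrite $\frac{1-p_m}{p_m} = \bigl(\tfrac{p_m}{1-p_m}\bigr)^{-1}$ and multiply and divide by $\sum_{j=1}^M \tfrac{p_j}{1-p_j}$ to obtain
\begin{equation*}
\frac{1}{M^2}\sum_{m=1}^M \frac{1-p_m}{p_m}\sqnorm{a_m} \;=\; \frac{1}{M^2}\cdot\frac{1}{\sum_{j}\tfrac{p_j}{1-p_j}}\sum_{m=1}^M \frac{\sqnorm{a_m}}{\,\tfrac{p_m/(1-p_m)}{\sum_{j}p_j/(1-p_j)}\,} ,
\end{equation*}
which is precisely $\frac{A}{M^2}\sum_m \sqnorm{a_m}/w_m$ for the claimed $A$ and $w_m$. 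It remains only to observe that $w_m\ge 0$ and $\sum_m w_m = 1$, so $(w_1,\ldots,w_M)\in\Delta^M$ as required. Since the identity is exact, we may take $B=0$ and the Weighted AB inequality is satisfied.

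I do not foresee a genuine obstacle here: unlike the Multisampling lemma, independence makes the variance separate cleanly per coordinate, so no negative $B\sqnorm{\cdot}$ term is needed to absorb cross correlations. The only small subtlety is the edge case $p_m = 1$ for some $m$ (giving $\frac{p_m}{1-p_m}=\infty$), which is consistent with $w_m=1$ for that client and $w_{m'}=0$ for the others, or more naturally handled by restricting to $p_m<1$ since any client with $p_m=1$ contributes zero variance and may be removed from the sum.
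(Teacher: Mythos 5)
Your proof is correct and follows essentially the same route as the paper's: a direct computation of the variance of the estimator, relying on independence of the Bernoulli selectors to decouple across clients, followed by the algebraic normalization to exhibit the $A$, $B$, $w_m$ of Assumption~\ref{weighted}. The only cosmetic difference is that you center $\chi_m - p_m$ up front so that cross terms vanish immediately, whereas the paper expands the second moment and cancels the cross-term contribution against the $-\sqnorm{\tfrac{1}{M}\sum_m a_m}$ term; both computations land on $\frac{1}{M^2}\sum_m \bigl(\tfrac{1}{p_m}-1\bigr)\sqnorm{a_m}$.
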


	\begin{proof} The proof is presented in \citet{PAGE-AS}. Let us provide it for completeness.

Let us define i.i.d. random variables
\begin{eqnarray*}
	\chi_\iclient=\begin{cases}
		1 & \text{with probability }p_m\\
		0 & \text{with probability }1-p_m,
	\end{cases}
\end{eqnarray*}
for all $m\in[M]$, also take $S^t\eqdef\left\{m\in[M]|\chi_\iclient=1\right\}$ and $\underline{p} = \left(p_1,\ldots, p_M\right)$ .  The corresponding estimator for this sampling has the following form:  
\begin{equation}
	S(a_1,\ldots,a_M,\psi, \underline{p})\eqdef \frac{1}{\Mx}\sum_{\iclient\in S}\frac{a_m}{p_{m}}.
\end{equation}
We get
$$
\begin{aligned}
	\mathrm{E}\left[\left\|\frac{1}{M} \sum_{m \in S} \frac{a_m}{p_m}-\frac{1}{M} \sum_{m=1}^M a_m\right\|^2\right] & =\mathrm{E}\left[\left\|\frac{1}{M} \sum_{m=1}^M \frac{1}{p_m} \chi_m a_m\right\|^2\right]-\left\|\frac{1}{M} \sum_{m=1}^M a_m\right\|^2 \\
	& =\sum_{m=1}^M \frac{\mathrm{E}\left[\chi_m\right]}{M^2 p_m^2}\left\|a_m\right\|^2+\sum_{m \neq k} \frac{\mathrm{E}\left[\chi_m\right] \mathrm{E}\left[\chi_k\right]}{M^2 p_m p_k}\left\langle a_m, a_k\right\rangle\\
	&-\left\|\frac{1}{M} \sum_{m=1}^M a_m\right\| ^2 \\
	& =\sum_{m=1}^M \frac{1}{M^2 p_m}\left\|a_m\right\|^2+\frac{1}{M^2}\left(\left\|\sum_{m=1}^M a_m\right\|^2-\sum_{m=1}^M\left\|a_m\right\|^2\right)\\
	&-\left\|\frac{1}{M} \sum_{m=1}^M a_m\right\|^2 \\
	& =\frac{1}{M^2} \sum_{m=1}^M\left(\frac{1}{p_m}-1\right)\left\|a_m\right\|^2 .
\end{aligned}
$$
Thus we have $A=\frac{1}{\sum_{m=1}^M \frac{p_m}{1-p_m}}, B=0$ and $w_m=\frac{\frac{p_n}{1-p_m}}{\sum_{m=1}^M \frac{p_m}{1-p_m}}$ for all $m \in[M]$.
	\end{proof}

\subsection{Proof of Theorem~\ref{thm:5GCSINDS}}
\begin{theorem*}
	Consider Algorithm~\ref{alg:5GCS-AB} with Independent Sampling with estimator~\ref{eq:1} satisfying Assumption~\ref{weighted} and LT solver satisfying Assumption~\ref{assm:localtr}. Let the inequality hold $\frac{1}{\tauM_\iclient}-\left(\gammaM \Mx +\gammaM\frac{1-\prm}{\prm}\right)\geq \frac{4}{\tauM_\iclient^2}\frac{\mu_\iclient}{3M}$, e.g. $\tauM_\iclient\geq\frac{8\mu_\iclient}{3\Mx}$ and $\gammaM\leq\frac{1}{2\tauM_\iclient\left(\Mx +\frac{1-\prm}{\prm}\right)}$.  Then for the Lyapunov function 
	\begin{equation*}
		  	\Psi^{\kstep}\eqdef \frac{1}{\gammaM}\sqnorm{x^{t+1}-x^\star}+\sum_{\iclient=1}^\Mx\frac{1}{\prm}\left(\frac{1}{\tauM_\iclient}+\frac{1}{L_{F_\iclient}}\right)\sqnorm{u_\iclient^{t+1}-u_\iclient^\star},
	\end{equation*}
	the iterates of  the method satisfy
	\begin{equation*}
	 	\Exp{\Psi^{t+1}}\leq  \max\left\{\frac{1}{1+\gammaM\mu} ,\max_{m}\left[\frac{ L_{F_\iclient}+\left(1-\prm\right)\tauM_\iclient}{L_{F_\iclient}+\tauM_\iclient}\right] \right\}\Exp{\Psi^{t}},
	\end{equation*}
	where $\prm$ is probability that $m$-th client is participating. 
\end{theorem*}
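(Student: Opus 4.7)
The plan is to derive Theorem~\ref{thm:5GCSINDS} as a direct specialization of the master recursion in Theorem~\ref{thm:5GCS-AB}, combined with the constants provided by Lemma~\ref{lem:ind}. All the analytical work has already been done in the proof of Theorem~\ref{thm:5GCS-AB}; what remains is purely a substitution exercise together with the identification of the correct marginal participation probability for Independent Sampling.

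First, by Lemma~\ref{lem:ind}, the Independent Sampling estimator from \eqref{eq:1} satisfies Assumption~\ref{weighted} with $B=0$, $A=1/\sum_{k=1}^M \tfrac{p_k}{1-p_k}$, and $w_m = (p_m/(1-p_m))/\sum_{k=1}^M p_k/(1-p_k)$. A one-line computation gives $A/w_m = (1-p_m)/p_m$. Substituting $B=0$ and $A/w_m=(1-p_m)/p_m$ into the generic step-size condition of Theorem~\ref{thm:5GCS-AB}, namely $1/\tauM_m - \gammaM((1-B)M + A/w_m) \ge \tfrac{4\mu_m}{3M\tauM_m^2}$, collapses exactly to the stated condition $1/\tauM_m - \gammaM(M + (1-p_m)/p_m) \ge \tfrac{4\mu_m}{3M\tauM_m^2}$, and the sufficient choices $\tauM_m \ge 8\mu_m/(3M)$ and $\gammaM \le 1/(2\tauM_m(M + (1-p_m)/p_m))$ follow identically.

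Next, I would identify the marginal participation probability. Under Independent Sampling, each client joins $S^t$ through an independent Bernoulli draw, so the probability that client $m$ is in the cohort is $\widehat{p}_m = p_m$. Consequently $q_m = 1/\widehat{p}_m - 1 = (1-p_m)/p_m$, and therefore $1+q_m = 1/p_m$ and $q_m/(1+q_m) = 1-p_m$. Plugging these identities into the Lyapunov function of Theorem~\ref{thm:5GCS-AB} turns the coefficient $(1+q_m)(1/\tauM_m + 1/L_{F_m})$ into $(1/p_m)(1/\tauM_m + 1/L_{F_m})$, matching the claimed $\Psi^t$. Similarly, the contraction factor $(L_{F_m} + \tfrac{q_m}{1+q_m}\tauM_m)/(L_{F_m}+\tauM_m)$ becomes $(L_{F_m} + (1-p_m)\tauM_m)/(L_{F_m}+\tauM_m)$, yielding the stated one-step recursion after taking expectations.

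The step I expect to be the only conceptually non-trivial one is the identification of $\widehat{p}_m$ in the Lyapunov coefficient and contraction factor: one must recognize that even though the generic Theorem~\ref{thm:5GCS-AB} is written in terms of $q_m = 1/\widehat{p}_m-1$ tied to an implicit Bernoulli compressor of the dual update, under Independent Sampling the dual update happens exactly when $m \in S^t$, so $\widehat{p}_m$ really is the raw independent probability $p_m$ rather than some aggregated quantity (in contrast to Multisampling, where $\widehat{p}_m = 1-(1-p_m)^C$). Once this identification is made, everything else is bookkeeping; no new inequalities, no new manipulations of the saddle-point reformulation, and no additional assumptions are needed beyond those invoked in Theorem~\ref{thm:5GCS-AB} and Lemma~\ref{lem:ind}.
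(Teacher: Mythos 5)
Your proof is correct and follows essentially the same route as the paper: invoke Lemma~\ref{lem:ind} to obtain $B=0$ and $A/w_m=(1-p_m)/p_m$, identify $\widehat{p}_m=p_m$ for Independent Sampling so that $q_m/(1+q_m)=1-p_m$ and $1+q_m=1/p_m$, then substitute into the master recursion and step-size condition of Theorem~\ref{thm:5GCS-AB}. The only substantive observation---that the marginal participation probability under Independent Sampling is literally $p_m$, unlike Multisampling---is exactly the point the paper relies on.
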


\begin{proof}
	Using Lemma~\ref{lem:ind} we have $A=\frac{1}{\sum_{m=1}^M \frac{p_m}{1-p_m}}, B=0$ and $w_m=\frac{\frac{p_n}{1-p_m}}{\sum_{m=1}^M \frac{p_m}{1-p_m}}$ for all $m \in[M]$. Using Theorem~\ref{thm:5GCS-AB} and we plug this constants into 
			\begin{equation*}
	 	\Exp{\Psi^{t+1}}\leq  \max\left\{\frac{1}{1+\gammaM\mu} ,\frac{ L_{F_\iclient}+\frac{\qm}{1+\qm}\tauM_\iclient}{L_{F_\iclient}+\tauM_\iclient} \right\}\Exp{\Psi^{t}},
	\end{equation*} 
and we obtain  
	\begin{equation*}
 	\Exp{\Psi^{t+1}}\leq  \max\left\{\frac{1}{1+\gammaM\mu} ,\max_{m}\left[\frac{ L_{F_\iclient}+\left(1-\prm\right)\tauM_\iclient}{L_{F_\iclient}+\tauM_\iclient}\right] \right\}\Exp{\Psi^{t}}.
\end{equation*}
\end{proof}
\subsection{Proof of Corollary \ref{cor:5GCSINDS}}
\begin{corollary}
	Choose any $0<\varepsilon<1$ and $p_\iclient$ can be estimated but not set, then set $\tauM_\iclient =\frac{8}{3}\sqrt{\frac{\bar{L}\mu}{M\sum_{\iclient=1}^\Mx \prm}}$ and $\gammaM=\frac{1}{2\tauM_\iclient\left(\Mx +\frac{1-\prm}{\prm}\right)}$.  In order to guarantee $\Exp{\Psi^{\Tx}}\leq \varepsilon \Psi^0$, it suffices to take 
	\begin{eqnarray*}
		T \geq	
		\max\left\{1+\frac{16}{3}\sqrt{\frac{\overline{L}\Mx}{\mu\sum_{\iclient=1}^\Mx \prm}}\left(1+\frac{1}{\Mx}\frac{1-\prm}{\prm}\right),\max_{m}\left[\frac{3}{8}\frac{L_{F_m}}{\prm}\sqrt{\frac{\Mx\sum_{\iclient=1}^\Mx \prm}{\overline{L}\mu}}+\frac{1}{\prm}\right]\right\}\log \frac{1}{\varepsilon}
	\end{eqnarray*}
	communication rounds.
\end{corollary}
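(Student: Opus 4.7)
The plan is to apply Theorem~\ref{thm:5GCSINDS} with the specified choice of parameters and convert the per-step contraction factor into an iteration count using the standard bound $(1-\rho)^T \leq e^{-\rho T}$. Concretely, the theorem gives
\begin{equation*}
    \Exp{\Psi^{T}}\leq \max\!\left\{\frac{1}{1+\gammaM\mu},\;\max_{m}\frac{L_{F_\iclient}+(1-\prm)\tauM_\iclient}{L_{F_\iclient}+\tauM_\iclient}\right\}^{T}\Psi^{0},
\end{equation*}
so it suffices to upper-bound each of the two factors by $1-\rho$ for a suitable $\rho>0$ and then take $T\geq \rho^{-1}\log(1/\varepsilon)$.

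First, I would verify that the prescribed $\tauM_\iclient=\tfrac{8}{3}\sqrt{\tfrac{\bar L \mu}{\Mx\sum_\iclient \prm}}$ satisfies $\tauM_\iclient\geq \tfrac{8\mu_\iclient}{3\Mx}$, which follows from $\bar L\geq \mu_\iclient$ and $\sum_\iclient \prm\leq \Mx$; and that the choice $\gammaM=\tfrac{1}{2\tauM_\iclient(\Mx+\tfrac{1-\prm}{\prm})}$ realizes the admissibility condition required by the theorem. This ensures the conditions of Theorem~\ref{thm:5GCSINDS} are in force.

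Next, for the first factor I would use $\frac{1}{1+\gammaM\mu}\leq 1-\rho_1$ with $\rho_1=\tfrac{\gammaM\mu}{1+\gammaM\mu}$, and observe that $1+\tfrac{1}{\gammaM\mu}$ bounds the reciprocal of $\rho_1$ from above. Plugging the chosen $\gammaM$ and $\tauM_\iclient$ gives
\begin{equation*}
    \frac{1}{\gammaM\mu}=\frac{2\tauM_\iclient}{\mu}\!\left(\Mx+\frac{1-\prm}{\prm}\right)=\frac{16}{3}\sqrt{\frac{\bar L\, \Mx}{\mu\sum_\iclient \prm}}\!\left(1+\frac{1-\prm}{\Mx \prm}\right),
\end{equation*}
which is exactly the first entry in the max. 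For the second factor I rewrite
\begin{equation*}
    \frac{L_{F_\iclient}+(1-\prm)\tauM_\iclient}{L_{F_\iclient}+\tauM_\iclient}=1-\frac{\prm\tauM_\iclient}{L_{F_\iclient}+\tauM_\iclient},\qquad \rho_{2,\iclient}\eqdef \frac{\prm\tauM_\iclient}{L_{F_\iclient}+\tauM_\iclient},
\end{equation*}
and compute
\begin{equation*}
    \frac{1}{\rho_{2,\iclient}}=\frac{L_{F_\iclient}}{\prm\tauM_\iclient}+\frac{1}{\prm}=\frac{3L_{F_\iclient}}{8\prm}\sqrt{\frac{\Mx\sum_\iclient \prm}{\bar L\mu}}+\frac{1}{\prm},
\end{equation*}
which matches the second entry in the max.

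Finally, setting $\rho=\min\{\rho_1,\min_\iclient \rho_{2,\iclient}\}$, the inequality $(1-\rho)^{T}\leq e^{-\rho T}\leq \varepsilon$ holds as soon as $T\geq \rho^{-1}\log(1/\varepsilon)$, and since $\rho^{-1}$ equals the claimed maximum, the corollary follows. The algebra is routine; the only step that requires care is verifying the stepsize-admissibility condition from Theorem~\ref{thm:5GCSINDS} with the concrete $\tauM_\iclient,\gammaM$ (the potential pitfall is the per-client $\prm$ dependence in $\gammaM$, which must hold uniformly in $\iclient$, so one should use $\min_\iclient \prm$ when confirming admissibility). The remaining work is bookkeeping of constants, and the $+1$ terms in both entries of the max absorb the $1+\tfrac{1}{\gammaM\mu}$ vs.\ $\tfrac{1}{\gammaM\mu}$ and $\tfrac{1}{\rho_{2,\iclient}}+1$ vs.\ $\tfrac{1}{\rho_{2,\iclient}}$ gaps that arise from replacing $\tfrac{\rho}{1+\rho}$ by $\rho$ in $\rho_1$.
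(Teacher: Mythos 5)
Your proof plan is correct and follows essentially the same route as the paper's: verify the stepsize admissibility conditions, read off the per-round contraction factor from Theorem~\ref{thm:5GCSINDS}, invert the two entries of the $\max$ to extract $\rho_1 = \gammaM\mu/(1+\gammaM\mu)$ and $\rho_{2,m} = p_m\tau_m/(L_{F_m}+\tau_m)$, and apply $(1-\rho)^T \le e^{-\rho T}$. The algebraic identities you carry out for $1/(\gammaM\mu)$ and $1/\rho_{2,m}$ match the paper's, and your observation that the $p_m$-dependence in $\gammaM$ should be enforced uniformly (via $\min_m p_m$) is a fair catch of a looseness the paper leaves implicit.
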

\begin{proof}
	First note that $\tauM_\iclient =\frac{8}{3}\sqrt{\frac{\overline{L}\mu}{M\sum_{\iclient=1}^\Mx \prm}} \geq \frac{8\mu}{3\Mx}$ and $\gammaM = \frac{3}{16}\sqrt{\frac{M\sum_{\iclient=1}^\Mx \prm}{\overline{L}\mu}}\frac{1}{\left(\Mx +\frac{1-\prm}{\prm}\right)}\leq\frac{1}{2\tauM_\iclient\left(\Mx +\frac{1-\prm}{\prm}\right)}$, thus the stepsizes choices satisfy $\frac{1}{\tauM_\iclient}-\left(\gammaM \Mx +\gammaM\frac{1-\prm}{\prm}\right)\geq \frac{4}{\tauM_\iclient^2}\frac{\mu_\iclient}{3M}$.
	Now we get the contraction constant from Theorem \ref{cor:5GCSINDS} to be equal to:
	\begin{eqnarray} 
 	1-\rho = 	\max\left\{1-\frac{\gammaM\mu}{1+\gammaM\mu},\max_{m}\left[1-\frac{\prm\tauM_m}{L_{F_m}+\tauM_m}\right]\right\}.\notag 
	\end{eqnarray}
	Let us derive the complexity:
	\begin{align*}
	T&\geq 	\max\left\{1+\frac{16}{3}\sqrt{\frac{\overline{L}\Mx}{\mu\sum_{\iclient=1}^\Mx \prm}}\left(1+\frac{1-\prm}{\Mx\prm}\right),\max_{m}\left[\frac{3}{8}\frac{L_{F_m}}{\prm}\sqrt{\frac{\Mx\sum_{\iclient=1}^\Mx \prm}{\overline{L}\mu}}+\frac{1}{\prm}\right]\right\} \log\frac{1}{\varepsilon}\\
	&\geq \max\left\{1+\frac{1}{\gammaM\mu},\max_{m}\left[\frac{L_{F_m}+\tauM_m}{\prm\tauM_m}\right]\right\} \log\frac{1}{\varepsilon}.
	\end{align*}
\end{proof}
\textbf{Remark.}	Note a very important special case, where $L_m=L$ and so $\overline{L}=L$ and $L_{F_m}=\frac{1}{M}(L-\mu)\leq L/M$. Choose $\prm$, so that $\sum_{\iclient=1}^M \prm=C$ (expected cohort size), then the above simplifies to 
	\begin{align*}
		  T=\max\left\{\max_{m}\left[1+\frac{16}{3}\sqrt{\frac{L\Mx}{\mu C}}\left(1+\frac{1}{\Mx}\frac{1-\prm}{\prm}\right)\right],\max_{m}\left[\frac{3}{8}\frac{1}{\prm}\sqrt{\frac{L\Cx}{M\mu}}+\frac{1}{\prm}\right]\right\} \log\frac{1}{\varepsilon} .
	\end{align*}
	Additionally specifying that $\prm=\frac{C}{M}$ gives
	\begin{align*}
		T & \geq\max\left\{1+\frac{16}{3}\sqrt{\frac{L\Mx}{\mu C}}\left(1+\frac{1}{\Mx}\frac{M-C}{C}\right),\frac{3}{8}\sqrt{\frac{L\Mx}{C\mu}}+\frac{M}{C}\right\} \log\frac{1}{\varepsilon}\\  & =	\mathcal{O}\left(\left(\frac{\Mx}{\Cx}+\sqrt{\frac{M}{C}\frac{L}{\mu }}\right)\log \frac{1}{\varepsilon} \right) .
	\end{align*}
\subsection{Tau-Nice sampling}
In this section we show that previous result of ~\citet{grudzien2023can} can be covered by our framework. This means we fully generalize previous convergence guarantees. 
\begin{theorem}\label{thm:5GCSTN}
	Consider Algorithm~\ref{alg:5GCS-AB} with uniform sampling scheme satisfying~\ref{weighted} and LT solver satisfying Assumption~\ref{assm:localtr}. Let the inequality hold $\frac{1}{\tauM_\iclient}-\gammaM \Mx \geq \frac{4}{\tauM_\iclient^2}\frac{\mu}{3M}$, e.g. $\tauM_\iclient\geq\frac{8\mu}{3\Mx}$ and $\gammaM\leq\frac{1}{2\tauM_\iclient\Mx }$.  Then for the Lyapunov function 
	\begin{equation*}
		  	\Psi^{\kstep}\eqdef \frac{1}{\gammaM}\sqnorm{x^{t+1}-x^\star}+\sum_{\iclient=1}^\Mx\frac{\Mx}{\Cx}\left(\frac{1}{\tauM_\iclient}+\frac{1}{L_{F_\iclient}}\right)\sqnorm{u_\iclient^{t+1}-u_\iclient^\star},
	\end{equation*}
	the iterates of  the method satisfy
	\begin{equation*}
	 	\Exp{\Psi^{t+1}}\leq  \max\left\{\frac{1}{1+\gammaM\mu} ,\max_{m}\left[\frac{ L_{F_\iclient}+\frac{\Mx-\Cx}{\Mx}\tauM_\iclient}{L_{F_\iclient}+\tauM_\iclient}\right] \right\}\Exp{\Psi^{t}}.
	\end{equation*}
\end{theorem}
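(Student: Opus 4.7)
The plan is to deduce Theorem~\ref{thm:5GCSTN} as a direct specialization of the general result Theorem~\ref{thm:5GCS-AB} to the uniform (Nice) sampling scheme of size $\Cx$, mirroring exactly the structure already used in the proofs of Theorem~\ref{thm:5GCMult} (Multisampling) and Theorem~\ref{thm:5GCSINDS} (Independent sampling).

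First, I would record the Weighted AB constants for uniform sampling without replacement with estimator $S(a_1,\dots,a_M;\psi)=\frac{1}{\Cx}\sum_{\iclient\in S^t}a_\iclient$, where $S^t$ is a uniformly random subset of $[\Mx]$ of cardinality $\Cx$. A short direct computation (conditioning on whether pairs of indices are both selected) gives
\begin{equation*}
\Exp{\sqnorm{S(a_1,\dots,a_\Mx;\psi)-\tfrac{1}{\Mx}\sum_{\iclient=1}^\Mx a_\iclient}}
=\frac{\Mx-\Cx}{\Cx\,\Mx(\Mx-1)}\sum_{\iclient=1}^\Mx\sqnorm{a_\iclient-\bar a},
\end{equation*}
and rewriting $\sum_\iclient\sqnorm{a_\iclient-\bar a}=\sum_\iclient\sqnorm{a_\iclient}-\Mx\sqnorm{\bar a}$ one gets Assumption~\ref{weighted} with $A=B=\tfrac{\Mx-\Cx}{\Cx(\Mx-1)}$ and $w_\iclient=\tfrac{1}{\Mx}$. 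The key algebraic observation that makes the statement clean is that with these constants $\tfrac{A}{w_\iclient}=\tfrac{M(M-C)}{C(M-1)}=BM$, so
\begin{equation*}
(1-B)\Mx+\tfrac{A}{w_\iclient}=\Mx,
\end{equation*}
which is exactly why the stepsize condition in Theorem~\ref{thm:5GCS-AB} collapses to $\tfrac{1}{\tauM_\iclient}-\gammaM\Mx\geq\tfrac{4\mu_\iclient}{3\Mx\tauM_\iclient^2}$, matching the hypothesis of Theorem~\ref{thm:5GCSTN}.

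Next, I would compute the per-client participation probability under Nice sampling, which is simply $\widehat p_\iclient=\Cx/\Mx$, so the compression parameter defined in Theorem~\ref{thm:5GCS-AB} becomes $\qm=\tfrac{1}{\widehat p_\iclient}-1=\tfrac{\Mx-\Cx}{\Cx}$, hence $1+\qm=\tfrac{\Mx}{\Cx}$ and $\tfrac{\qm}{1+\qm}=\tfrac{\Mx-\Cx}{\Mx}$. Substituting these values into the Lyapunov function and contraction rate supplied by Theorem~\ref{thm:5GCS-AB} yields respectively the Lyapunov function
\begin{equation*}
\Psi^t=\tfrac{1}{\gammaM}\sqnorm{x^t-x^\star}+\sum_{\iclient=1}^\Mx\tfrac{\Mx}{\Cx}\left(\tfrac{1}{\tauM_\iclient}+\tfrac{1}{L_{F_\iclient}}\right)\sqnorm{u_\iclient^t-u_\iclient^\star}
\end{equation*}
and the contraction factor
\begin{equation*}
\max\left\{\tfrac{1}{1+\gammaM\mu},\;\max_\iclient\tfrac{L_{F_\iclient}+\tfrac{\Mx-\Cx}{\Mx}\tauM_\iclient}{L_{F_\iclient}+\tauM_\iclient}\right\},
\end{equation*}
which is exactly the claim.

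The only genuine work is the AB-inequality computation for Nice sampling; everything else is substitution and bookkeeping. No step should present a real obstacle since the general framework of Theorem~\ref{thm:5GCS-AB} is already established; the only thing to watch out for is the algebraic identity $(1-B)\Mx+\tfrac{A}{w_\iclient}=\Mx$, which guarantees the stepsize condition takes the clean form stated in the theorem and explains why Nice sampling enjoys the same stepsize regime as uniform with replacement up to the $(M-C)/(M-1)$ variance-reduction factor absorbed into the rate via $\widehat p_\iclient$.
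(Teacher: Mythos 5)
Your proposal is correct and takes the route the paper itself leaves implicit: the paper restates Theorem~\ref{thm:5GCSTN} in the appendix alongside its corollary but never spells out a proof, evidently expecting the reader to specialize Theorem~\ref{thm:5GCS-AB} exactly as you do for the other two sampling schemes. Your computation of the Weighted~AB constants for $\Cx$-Nice sampling ($A=B=\tfrac{\Mx-\Cx}{\Cx(\Mx-1)}$, $w_\iclient=\tfrac{1}{\Mx}$, hence $\tfrac{A}{w_\iclient}=B\Mx$ and $(1-B)\Mx+\tfrac{A}{w_\iclient}=\Mx$), the marginal participation probability $\widehat p_\iclient=\Cx/\Mx$, and the resulting substitutions $1+\qm=\Mx/\Cx$ and $\tfrac{\qm}{1+\qm}=\tfrac{\Mx-\Cx}{\Mx}$ are all correct and recover the stated Lyapunov function and contraction factor. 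One minor point worth noting for your own bookkeeping: the paper's stepsize condition in this theorem is written with $\mu$ rather than $\mu_\iclient$, whereas Theorem~\ref{thm:5GCS-AB} as stated uses $\mu_\iclient$; in the underlying proof the condition actually involves $\mu v_\iclient$ for weights $\sum_\iclient v_\iclient\le 1$, and the choice $v_\iclient=1/\Mx$ yields the $\mu/\Mx$ form appearing here. This is a presentational inconsistency in the paper rather than a gap in your argument.
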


\begin{corollary}\label{cor:5GCSTN}
	Suppose that $L_m=L,\forall \iclient\in\left\{1,\ldots,\Mx\right\}$. Choose any $0<\varepsilon<1$ and $\gammaM = \frac{3}{16}\sqrt{\frac{\Cx}{L\mu \Mx}}$ and $\tauM_m=\frac{8}{3}\sqrt{\frac{L\mu}{\Mx\Cx}} $. In order to guarantee $\Exp{\Psi^{\Tx}}\leq \varepsilon \Psi^0$, it suffices to take 
	\begin{eqnarray}
		 	T &\geq	&  
			\max\left\{1+\frac{16}{3}\sqrt{\frac{\Mx}{\Cx}\frac{L}{\mu}},\frac{\Mx}{\Cx}+\frac{3}{8}\sqrt{\frac{\Mx}{\Cx}\frac{L}{\mu} }\right\}\log\frac{1}{\varepsilon} \notag \\
		&=&   
		\cO\left(\left(\frac{\Mx}{\Cx}+\sqrt{\frac{\Mx}{\Cx}\frac{L}{\mu }}\right) \log \frac{1}{\varepsilon} \right)\notag
	\end{eqnarray}
	communication rounds.
\end{corollary}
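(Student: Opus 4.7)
The plan is to specialize Theorem~\ref{thm:5GCSTN} to the case $L_m = L$ (so $L_{F_m} = (L-\mu)/M$) under the specific choices $\gamma = \frac{3}{16}\sqrt{\frac{C}{L\mu M}}$ and $\tau_m = \frac{8}{3}\sqrt{\frac{L\mu}{MC}}$. First I would verify that these choices are admissible under the theorem: $\tau_m \geq \frac{8\mu}{3M}$ follows from $L\geq\mu$ and $M\geq C$ (so $\sqrt{L\mu/(MC)} \geq \mu/M$), and a direct computation of $\frac{1}{2\tau_m M}$ shows it equals $\gamma$ exactly, so the second stepsize condition holds with equality.

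Next, to translate the linear contraction $\Exp{\Psi^{t+1}} \leq (1-\rho)\Exp{\Psi^t}$ from Theorem~\ref{thm:5GCSTN} into iteration complexity, I would use the elementary fact that if $1-\rho \leq 1 - 1/T$ then $T$ rounds suffice to reach accuracy $\varepsilon$ when $T \geq \rho^{-1}\log(1/\varepsilon)$. Here
\begin{equation*}
\rho \;=\; \min\!\left\{\frac{\gamma\mu}{1+\gamma\mu},\; \frac{(C/M)\,\tau_m}{L_{F_m}+\tau_m}\right\},
\end{equation*}
so it suffices that $T$ dominate the maximum of the two reciprocals $(1+\gamma\mu)/(\gamma\mu)$ and $\frac{M}{C}(1+L_{F_m}/\tau_m)$, each times $\log(1/\varepsilon)$.

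The rest is algebra on the two terms. For the primal term, $\gamma\mu = \frac{3}{16}\sqrt{C\mu/(ML)}$, hence $1/(\gamma\mu) = \frac{16}{3}\sqrt{ML/(C\mu)}$, yielding the first bound $1 + \frac{16}{3}\sqrt{ML/(C\mu)}$. For the dual term,
\begin{equation*}
\frac{L_{F_m}}{\tau_m} \;=\; \frac{L-\mu}{M}\cdot\frac{3}{8}\sqrt{\frac{MC}{L\mu}} \;\leq\; \frac{3}{8}\sqrt{\frac{LC}{M\mu}},
\end{equation*}
so $\frac{M}{C}(1+L_{F_m}/\tau_m) \leq \frac{M}{C} + \frac{3}{8}\sqrt{ML/(C\mu)}$, producing the second bound. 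Taking the maximum yields the stated $T$, and collapsing lower-order terms gives the advertised $\cO$-expression.

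I do not anticipate a real obstacle here: the statement is a routine corollary of Theorem~\ref{thm:5GCSTN}, and the only slightly delicate point is keeping track of the constants to confirm that the two bounds inside the max arise from the two contraction terms in exactly the claimed form (in particular, that the $\gamma = 1/(2\tau_m M)$ equality is what lets the primal term match $\frac{16}{3}\sqrt{ML/(C\mu)}$ without slack, and that $L-\mu \leq L$ is the only upper bound needed to simplify the dual term).
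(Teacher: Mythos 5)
Your proof is correct and follows essentially the same route as the paper's: verify the admissibility of $\gamma$ and $\tau_m$, extract the contraction factor from Theorem~\ref{thm:5GCSTN}, and bound the reciprocal of each of the two terms by the same algebra (with $1/(\gamma\mu)=\frac{16}{3}\sqrt{ML/(C\mu)}$ and $\frac{M}{C}(1+L_{F_m}/\tau_m)\leq \frac{M}{C}+\frac{3}{8}\sqrt{ML/(C\mu)}$ via $L_{F_m}\leq L/M$). The only cosmetic difference is that you note explicitly that $\gamma=\frac{1}{2\tau_m M}$ holds with equality, whereas the paper writes the inequality with the sign reversed (a harmless typo), but the computations and conclusions coincide.
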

\subsection{Proof of Corollary \ref{cor:5GCSTN}}
\begin{proof}
	First note that $\tauM_m=\tauM=\frac{8}{3}\sqrt{\frac{L\mu}{\Mx\Cx}} \geq \frac{8\mu}{3\Mx}$ and $\gammaM = \frac{3}{16}\sqrt{\frac{\Cx}{L\mu \Mx}}\geq\frac{1}{2\tauM_\iclient\Mx }$, thus the stepsizes choices satisfy $\frac{1}{\tauM_\iclient}-\gammaM \Mx \geq \frac{4}{\tauM_\iclient^2}\frac{\mu}{3M}$.
	Now we get the contraction constant from Theorem \ref{thm:5GCSTN} to be equal to:
	\begin{eqnarray}
		1-\rho =	\mathcal{O}\left(\max\left\{1-\frac{\gammaM\mu}{1+\gammaM\mu},1-\frac{\frac{\Cx}{\Mx}\tauM}{L_{F_m}+\tauM}\right\}\right).\notag 
	\end{eqnarray}
	This  gives a rate of 
	\begin{align*}
		T &= \max\left\{1+\frac{1}{\gammaM\mu},\frac{\Mx}{\Cx}\frac{L/M+\tauM}{\tauM}\right\} \log\frac{1}{\varepsilon}\\
		&=
		\max\left\{1+\frac{16}{3}\sqrt{\frac{L\Mx}{\mu\Cx}},\frac{\Mx}{\Cx}+ \frac{3}{8}\sqrt{\frac{L\Mx}{\mu\Cx}}\right\} \log\frac{1}{\varepsilon}\\
		&=	\mathcal{O}\left(\left(\frac{\Mx}{\Cx}+\sqrt{\frac{L\Mx}{\mu \Cx}}\right)\log \frac{1}{\varepsilon} \right) .
	\end{align*}
\end{proof}

\clearpage
\section{Analysis of 5GCS-CC}
\subsection{Proof of Theorem~\ref{thm:inexactCompr}}
In this section we will provide the proof for general version of \algname{5GCS} algorithm, which is Algorithm~\ref{alg:5GCSnew}. This method is inexact version of \algname{RandProx} presented in \citet{RandProx}.

We need to formulate an assumption similar to Assumption~\ref{weighted}.  
\begin{algorithm*}[!t]
	\caption{\algn{inexact-RandProx}}
	\footnotesize
	\begin{algorithmic}[1]\label{alg:5GCSnew}
		\STATE  \textbf{Input:} initial primal iterates $x^0\in\mathbb{R}^d$; initial dual iterates $u_1^0, \dots,u_{\Mx}^0 \in\mathbb{R}^d$; primal stepsize $\gammaM>0$; dual stepsize $\tauM>0$
		\STATE  \textbf{Initialization:}  $v^0\eqdef \sum_{\iclient=1}^\Mx u_\iclient^0$  \hfill {\color{gray} \footnotesize $\diamond$ The server initiates $v^{0}$ as the sum of the initial dual iterates} 
		\FOR{communication round $t=0, 1, \ldots$} 
		\STATE Compute $\hat{x}^{t} = \frac{1}{1+\gammaM\mu} \left(x^t - \gammaM v^t\right)$ and broadcast it to the clients 
		\STATE Find $\lastlocitterk$ as the final point after $\Kx$ iterations of some local optimization algorithm $\mathcal{A}$, initiated with $y^0=\Koper\hat{x}^t$, for solving the optimization problem 
		\begin{eqnarray}
			 			\lastlocitterk \approx \argmin \limits_{y\in\mathbb{R}^{d\Mx}}\left\{\psi^t(y) \eqdef  F( y)+\frac{\tauM}{2} \sqnorm{ y-\left(\Koper\hat{x}^\kstep+\frac{1}{\tauM}u^t\right)}\right\}\label{localprob}
		\end{eqnarray}
		\STATE Compute $\bar{u}^{t+1}=\nabla F(\lastlocitterk)$ and send $\Ropp^t\left(\bar{u}^{t+1}-u^t\right)$ to the server
		\STATE  $u^{t+1}= u^t + \tfrac{1}{1+\omega} \Ropp^t\left(\bar{u}^{t+1}-u^t\right)$
		\STATE $v^{t+1} \eqdef \sum_{\iclient=1}^{\Mx} u_\iclient^{t+1} $ \hfill {\color{gray} \footnotesize $\diamond$ The server maintains $v^{t+1}$ as the sum of the dual iterates} 
		\STATE  $x^{t+1} \eqdef \hat{x}^{t}- \gammaM \left(1+\omega\right) (v^{t+1}-v^t)$ \hfill {\color{gray} \footnotesize $\diamond$ The server updates the primal iterate} 
		\ENDFOR
	\end{algorithmic}
\end{algorithm*}

\begin{assumption}\label{AB}
	
	(AB Inequality). Let $\Ropp : \mathbb{R}^{dM} \rightarrow \mathbb{R}^{dM} $, be an unbiased random operator which satisfies:
	\begin{equation}
	 	\Exp{\sqnorm{\Koper^\top\left(\Ropp(v)-v\right)}}\leq A\sum_{\iclient=1}^{\Mx}\sqnorm{v_\iclient}-B\sqnorm{\sum_{\iclient=1}^{\Mx}v_\iclient},\label{PP}
	\end{equation} 
	for some $A,B>0$,
	where $v=\left(v_1,\dots,v_\Mx\right)^\top$ and $v_{\iclient}\in\mathbb{R}^d$ for $\iclient\in\left\{1,\dots,\Mx\right\}$.

\end{assumption}

\begin{theorem} \label{thm:inexact}
	Consider Algorithm~\ref{alg:5GCSnew} (\algname{Inexact-RandProx}) with the LT solver satisfying Assumption~\ref{assm:localtr}.
	Let $\frac{1}{\tauM}-(\gammaM(1-B)\Mx+\gammaM A))\geq \frac{4}{\tauM^2}\frac{\mu}{3\Mx}$, e.g. $\tauM\geq\frac{8\mu}{3\Mx}$ and $\gammaM=\frac{1}{2\tauM\left(\Mx+A-\Mx B\right)}$. Then for the Lyapunov function
	\begin{equation*}
		  	\Psi^{\kstep}\eqdef \frac{1}{\gammaM}\sqnorm{x^{\kstep}-x^\star}+(1+\omega)\left(\frac{1}{\tauM}+\frac{1}{L_F}\right)\sqnorm{u^{\kstep}-u^\star},
	\end{equation*}
	the iterates of  the method satisfy
	$		\Exp{\Psi^{\Tx}}\leq (1-\rho)^\Tx \Psi^0,
	$	where 
	$
	\rho\eqdef  \min\left\{\frac{\gammaM\mu}{1+\gammaM\mu},\frac{1}{1+\omega}\frac{\tauM}{(L_F+\tauM)}\right\}<1.
	$
	
	%		Also, $(x^\kstep)_{\kstep\in\mathbb{N}}$ and $(\hat{x}^\kstep)_{\kstep\in\mathbb{N}}$ both converge  to $x^\star$ and $(u^\kstep)_{\kstep\in\mathbb{N}}$ converges to $u^\star$, almost surely.
\end{theorem}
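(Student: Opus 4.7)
The plan is to adapt the argument used for Theorem~\ref{thm:5GCS-AB} to this more abstract setting, where the randomness sits in a single operator $\Ropp^t$ on the lifted dual space satisfying Assumption~\ref{AB} and having conic variance $\omega$, rather than being induced by client sampling. First I would start from the primal update $x^{t+1}=\hat x^t-\gammaM\Koper^\top\Ropp^t(\bar u^{t+1}-u^t)$, which follows by substituting line~7 into line~9 of Algorithm~\ref{alg:5GCSnew}. Applying the variance decomposition~\myref{vardec} and then the AB inequality~\myref{PP} gives
\begin{equation*}
\Exp{\sqnorm{x^{t+1}-x^\star}\mid\mathcal{F}_t}\leq \sqnorm{\hat x^t-x^\star-\gammaM\Koper^\top(\bar u^{t+1}-u^t)}+\gammaM^2\Bigl(A\sum_{\iclient=1}^{\Mx}\sqnorm{\bar u_\iclient^{t+1}-u_\iclient^t}-B\sqnorm{\Koper^\top(\bar u^{t+1}-u^t)}\Bigr).
\end{equation*}
Then I would reuse the exact chain of algebraic identities from the proof of Theorem~\ref{thm:5GCS-AB} (relying on the first-order optimality conditions~\myref{fooc} and the defining identities~\myref{opt13}--\myref{opt23}) to re-express the first summand as $\frac{1}{1+\gammaM\mu}\sqnorm{x^t-x^\star}$ plus a negative dual-residual term plus the cross terms involving $\langle\hat x^t-x^\star,\Koper^\top(\bar u^{t+1}-u^\star)\rangle$ and $\gammaM^2\sqnorm{\Koper^\top(\bar u^{t+1}-u^t)}$. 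The AB term will leave an effective $\gammaM^2((1-B)\Mx+A)$ coefficient in front of $\sum_\iclient\sqnorm{\bar u_\iclient^{t+1}-u_\iclient^t}$ once I upper-bound $\sqnorm{\Koper^\top(\bar u^{t+1}-u^t)}\leq \Mx\sum_\iclient\sqnorm{\bar u_\iclient^{t+1}-u_\iclient^t}$.

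Next I would handle the dual iterate. The update $u^{t+1}=u^t+\frac{1}{1+\omega}\Ropp^t(\bar u^{t+1}-u^t)$ is a damped unbiased step with conic variance $\omega$, so the same computation as in the Theorem~\ref{thm:5GCS-AB} proof yields
\begin{equation*}
\Exp{\sqnorm{u^{t+1}-u^\star}\mid\mathcal{F}_t}\leq \sqnorm{u^t-u^\star}+\tfrac{1}{1+\omega}\bigl(2\langle\bar u^{t+1}-u^\star,\bar u^{t+1}-u^t\rangle-\sqnorm{\bar u^{t+1}-u^t}\bigr).
\end{equation*}
Multiplying by $(1+\omega)(\tfrac{1}{\tauM}+\tfrac{1}{L_F})$ and adding to the primal bound, I would combine the two cross-product terms $-\langle\hat x^t-x^\star,\bar u^{t+1}-u^\star\rangle+\tfrac{1}{\tauM}\langle\bar u^{t+1}-u^\star,\bar u^{t+1}-u^t\rangle$ by rewriting $\hat x^t-x^\star = (\lastlocitterk-x^\star)-\tfrac{1}{\tauM}\nabla\localfun(\lastlocitterk)$ (using the definition of the local subproblem~\ref{localprob}), then applying Young's inequality~\myref{yi1} with coefficient $a=L_F/\tauM$ together with strong monotonicity~\myref{strmono} of $\nabla F^*$. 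This produces exactly a $-\tfrac{1}{L_F}\sqnorm{\bar u^{t+1}-u^\star}$ term that combines with $\Exp{\sqnorm{u^{t+1}-u^\star}}$ via~\myref{newubound2} to form the Lyapunov coefficient $(1+\omega)(\tfrac{1}{\tauM}+\tfrac{1}{L_F})$, contracted by the factor $\tfrac{L_F+\tfrac{\omega}{1+\omega}\tauM}{L_F+\tauM}$.

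Finally, the residual is a term of the form $\bigl(\gammaM((1-B)\Mx+A)-\tfrac{1}{\tauM}\bigr)\sum_\iclient\sqnorm{\bar u_\iclient^{t+1}-u_\iclient^t}$ plus a local-training residual $\tfrac{L_F}{\tauM^2}\sqnorm{\nabla\localfun(\lastlocitterk)}$. I would split off a small positive multiple of $\sqnorm{\hat x^t-x^\star}$ (at most $\gammaM\mu$) via Young's inequality~\myref{yi2}--\myref{yi3} around $\localsolk$, exactly as done in Theorem~\ref{thm:5GCS-AB}, converting that positive term into a bound against $\sqnorm{\hat x^t-\localsolk}$ and $\sqnorm{\bar u^{t+1}-u^t}$. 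The stepsize hypothesis $\tfrac{1}{\tauM}-\gammaM((1-B)\Mx+A)\geq \tfrac{4\mu}{3\Mx\tauM^2}$ makes the $\sqnorm{\bar u^{t+1}-u^t}$ coefficient nonpositive, and then Assumption~\ref{assm:localtr} exactly cancels the remaining LT residuals against $\tfrac{\mu}{6\Mx}\sqnorm{\hat x^t-\localsolk}$. Taking total expectation and identifying the two contraction factors yields the announced recursion $\Exp{\Psi^{t+1}}\leq(1-\rho)\Exp{\Psi^t}$. The main obstacle is the careful bookkeeping in the cross-term step so that the coefficient $(1+\omega)(\tfrac{1}{\tauM}+\tfrac{1}{L_F})$ emerges naturally with the correct contraction $\tfrac{1}{1+\omega}\tfrac{\tauM}{L_F+\tauM}$; everything else is essentially the scalar (single-operator) specialization of the Theorem~\ref{thm:5GCS-AB} computation.
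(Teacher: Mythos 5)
Your proposal is correct and follows essentially the same route as the paper's own proof of Theorem~\ref{thm:inexact}: variance decomposition of the primal step, the AB inequality with the implicit bound $\sqnorm{\Koper^\top v}\leq \Mx\sqnorm{v}$ to merge the $A$- and $B$-terms, the same expansion via the optimality identities, the damped conic-variance bound for the dual iterate, Young's inequality with $a=L_F/\tauM$ together with co-coercivity to assemble the Lyapunov coefficient $(1+\omega)(\tfrac1{\tauM}+\tfrac1{L_F})$, and finally Assumption~\ref{assm:localtr} to absorb the LT residual. The only small imprecision is the intermediate identity for $\hat x^t$: the exact relation is $\lastlocitterk-\Koper\hat x^t=\tfrac1{\tauM}\nabla\psi^t(\lastlocitterk)-\tfrac1{\tauM}(\bar u^{t+1}-u^t)$, so the $\tfrac1{\tauM}(\bar u^{t+1}-u^t)$ piece is what cancels against the second cross term, but this does not change the plan.
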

\begin{proof}
	Noting that updates for $u^{t+1}$ and $x^{t+1}$ can be written as
	\begin{eqnarray}
		&u^{t+1}\eqdef u^t + \tfrac{1}{1+\omega} \Ropp^t\left(\bar{u}^{t+1}-u^t\right),&\\
		&x^{t+1}=\hat{x}^t-\gammaM\left(\omega+1\right)\Koper^\top\left(u^{t+1}-u^t\right),&\label{xupdate35}
	\end{eqnarray}
	where $\Ropp^t$ is any random operator, which satisfies conic variance (in this case it is not compression parameter) and Assumption~\ref{AB} and  $\bar{u}^{t+1} = \nabla F(\lastlocitterk)$. 
	Then using variance decomposition and proposition 1 from \cite{condat2021murana} we obtain
	\begin{eqnarray}
		\label{eq:starting_eq3.5}
		\Exp{\sqnorm{x^{t+1}-x^\star}\;|\;\mathcal{F}_t}&\overset{(\ref{vardec})}{=}&\sqnorm{\Exp{x^{t+1}\;|\;\mathcal{F}_t}-x^\star}+\Exp{\sqnorm{x^{t+1}-\Exp{x^{t+1}\;|\;\mathcal{F}_t}}\;|\;\mathcal{F}_t}\notag\\
		&\overset{(\ref{xupdate35})+(\ref{weighted})}{=}& \underbrace{\sqnorm{\hat{x}^{t}-x^\star-\gammaM \Koper^\top(\bar{u}^{t+1}-u^t)}}_{X}+\gammaM^2A\sqnorm{\bar{u}^{t+1}-u^t}\notag\\
		&\quad &- \gammaM^2B\sqnorm{\Koper^\top(\bar{u}^{t+1}-u^t)}.
	\end{eqnarray}
	Moreover, using \myref{fooc} and the definition of $\hat{x}^t$, we have
	\begin{align}
		&(1+\gammaM\mu)\hat{x}^t=x^t-\gammaM \Koper^\top u^{t},\label{opt1}\\
		&(1+\gammaM\mu)x^\star= x^\star -\gammaM \Koper^\top u^\star.\label{opt2}
	\end{align}
	Using \myref{opt1} and \myref{opt2} we obtain
	\begin{eqnarray}
		\label{eq:long3.5}
		X &=&	\sqnorm{\hat{x}^{t}-x^\star-\gammaM \Koper^\top(\bar{u}^{t+1}-u^t)}\notag\\
		&=&\sqnorm{\hat{x}^{t}-x^\star} +\gammaM^2\sqnorm{\Koper^\top (\bar{u}^{t+1}-u^t)}\notag\\
		&\quad&-2\gammaM \left\langle 
		\hat{x}^{t}-x^\star,\Koper^\top(\bar{u}^{t+1}-u^t) \right\rangle  \notag\\
		&=& (1+\gammaM\mu) \sqnorm{\hat{x}^{t}-x^\star} +\gammaM^2\sqnorm{\Koper^\top(\bar{u}^{t+1}-u^t)}\notag\\
		&\quad&-2\gammaM  \left\langle 
		\hat{x}^{t}-x^\star,\Koper^\top (\bar{u}^{t+1}-u^\star) \right\rangle  +2\gammaM  \left\langle 
		\hat{x}^{t}-x^\star,\Koper^\top (u^{t}-u^\star) \right\rangle  \notag\\
		&\quad&-\gammaM\mu\sqnorm{\hat{x}^{t}-x^\star} \notag\\
		&\overset{(\ref{opt1})+(\ref{opt2})}{=}&   \left\langle  x^t-x^\star-\gammaM \Koper^\top (u^t-u^\star),\hat{x}^{t}-x^\star \right\rangle +\gammaM^2\sqnorm{\Koper^\top (\bar{u}^{t+1}-u^t)}\notag\\
		&\quad&-2\gammaM  \left\langle 
		\hat{x}^{t}-x^\star,\Koper^\top (\bar{u}^{t+1}-u^\star) \right\rangle  +  \left\langle 
		\hat{x}^{t}-x^\star,2\gammaM \Koper^\top (u^{t}-u^\star) \right\rangle  \notag\\
		&\quad&-\gammaM\mu\sqnorm{\hat{x}^{t}-x^\star} \notag.
	\end{eqnarray}
	It leads to	
	\begin{eqnarray}
		X	&= &\left\langle  x^t-x^\star+\gammaM \Koper^\top (u^t-u^\star),\hat{x}^{t}-x^\star \right\rangle  \notag\\
		&\quad&+\gammaM^2\sqnorm{\Koper^\top (\bar{u}^{t+1}-u^t)}-2\gammaM  \left\langle 
		\hat{x}^{t}-x^\star, \Koper^\top(\bar{u}^{t+1}-u^\star) \right\rangle  \notag\\
		&\quad&-\gammaM\mu\sqnorm{\hat{x}^{t}-x^\star}\notag \\
		&\overset{(\ref{opt1})+(\ref{opt2})}{=}&\frac{1}{1+\gammaM\mu} \left\langle  x^t-x^\star+\gammaM \Koper^\top (u^t-u^\star),x^t-x^\star-\gammaM \Koper^\top (u^t-u^\star) \right\rangle  \notag\\
		&\quad&+\gammaM^2\sqnorm{\Koper^\top (\bar{u}^{t+1}-u^t)}-2\gammaM  \left\langle 
		\hat{x}^{t}-x^\star, \Koper^\top (\bar{u}^{t+1}-u^\star) \right\rangle  \notag\\
		&\quad&-\gammaM\mu\sqnorm{\hat{x}^{t}-x^\star} \notag\\
		&=& \frac{1}{1+\gammaM\mu}\sqnorm{x^t-x^\star}-\frac{\gammaM^2}{1+\gammaM\mu}\sqnorm{\Koper^\top (u^t-u^\star)}\notag\\
		&\quad&+\gammaM^2\sqnorm{\Koper^\top (\bar{u}^{t+1}-u^t)}-2\gammaM  \left\langle 
		\hat{x}^{t}-x^\star,\Koper^\top (\bar{u}^{t+1}-u^\star) \right\rangle\notag\\
		&\quad&-\gammaM\mu\sqnorm{\hat{x}^{t}-x^\star}   . 
	\end{eqnarray}
	Combining \myref{eq:starting_eq3.5} and \myref{eq:long3.5} we have
	\begin{eqnarray*}
		\Exp{\sqnorm{x^{t+1}-x^\star}\;|\;\mathcal{F}_t}&\leq&  \frac{1}{1+\gammaM\mu}\sqnorm{x^t-x^\star}-\frac{\gammaM^2}{1+\gammaM\mu}\sqnorm{\Koper^\top (u^t-u^\star)}\\
		&\quad&+\gammaM^2(1-B)\sqnorm{\Koper^\top (\bar{u}^{t+1}-u^t)}-2\gammaM  \left\langle 
		\hat{x}^{t}-x^\star,\Koper^\top (\bar{u}^{t+1}-u^\star) \right\rangle \\
		&\quad&+\gammaM^2A\sqnorm{\bar{u}^{t+1}-u^t}-\frac{\gammaM\mu}{\Mx}\sqnorm{\Koper\hat{x}^t-\Koper x^\star}.
	\end{eqnarray*}
	Note that we can have the update rule for $u$ as: 
	\begin{equation*}
		u^{t+1}\eqdef u^t + \tfrac{1}{1+\omega} \Ropp^t\left(\bar{u}^{t+1}-u^t\right).
	\end{equation*}
	Using conic variance formula \myref{cvar} of $\Ropp^t$ we obtain
	\begin{align}
		\Exp{\sqnorm{u^{t+1}-u^\star}\;|\;\mathcal{F}_t}&\overset{(\ref{vardec})+(\ref{cvar})}{\leq} \sqnorm{u^{t}-u^\star+\frac{1}{1+\omega}\left(\bar{u}^{t+1} -u^t\right)}
		+\frac{\omega}{(1+\omega)^2}\sqnorm{\bar{u}^{t+1} -u^t }\notag\\
		&=\frac{\omega^2}{(1+\omega)^2}\sqnorm{u^{t}-u^\star}+\frac{1}{(1+\omega)^2}\sqnorm{\bar{u}^{t+1}-u^\star}\notag\\
		&+\frac{2\omega}{(1+\omega)^2} \left\langle  u^{t}-u^\star,
		\bar{u}^{t+1}-u^\star \right\rangle +\frac{\omega}{(1+\omega)^2}\sqnorm{\bar{u}^{t+1} -u^\star }\notag\\
		&+\frac{\omega}{(1+\omega)^2}\sqnorm{u^{t} -u^\star }-\frac{2\omega}{(1+\omega)^2} \left\langle  u^{t}-u^\star,
		\bar{u}^{t+1}-u^\star \right\rangle \notag\\
		&=\frac{1}{1+\omega}\sqnorm{\bar{u}^{t+1} -u^\star }+\frac{\omega}{1+\omega}\sqnorm{u^{t} -u^\star }.\label{expu}
	\end{align}
	Let us consider the first term in \myref{expu}:
	\begin{eqnarray*}
		\sqnorm{\bar{u}^{t+1}-u^\star}&=&\sqnorm{(u^t-u^\star)+(\bar{u}^{t+1}-u^t)}\\
		&=&\sqnorm{u^t-u^\star}+\sqnorm{\bar{u}^{t+1}-u^t}+2 \left\langle  u^t-u^\star,\bar{u}^{t+1}-u^t \right\rangle \\
		&=&\sqnorm{u^t-u^\star}+2  \left\langle  \bar{u}^{t+1}-u^\star,\bar{u}^{t+1}-u^t \right\rangle  - \sqnorm{\bar{u}^{t+1}-u^t}.
	\end{eqnarray*}
	Combining terms together we get
	\begin{equation*}
		\Exp{\sqnorm{u^{t+1}-u^\star}\;|\;\mathcal{F}_t}\leq \sqnorm{u^{t} -u^\star }+\frac{1}{1+\omega}\left(2  \left\langle  \bar{u}^{t+1}-u^\star,\bar{u}^{t+1}-u^t \right\rangle  - \sqnorm{\bar{u}^{t+1}-u^t}\right).
	\end{equation*}
	Finally, we obtain
	\begin{eqnarray*}
		\frac{1}{\gammaM}\Exp{\sqnorm{x^{t+1}-x^\star}\;|\;\mathcal{F}_t}&+&\frac{1+\omega}{\tauM}\Exp{\sqnorm{u^{t+1}-u^\star}\;|\;\mathcal{F}_t}\\
		&\leq&  \frac{1}{\gammaM(1+\gammaM\mu)}\sqnorm{x^t-x^\star}-\frac{\gammaM}{1+\gammaM\mu}\sqnorm{\Koper^\top (u^t-u^\star)}\\
		&\quad&+\gammaM(1-B)\sqnorm{\Koper^\top (\bar{u}^{t+1}-u^t)}\\
		&\quad&+\gammaM A\sqnorm{\bar{u}^{t+1}-u^t}-\frac{\mu}{\Mx}\sqnorm{\Koper \hat{x}^t- \Koper x^\star}\\
		&\quad&+\frac{1+\omega}{\tauM}\sqnorm{u^{t} -u^\star }-2  \left\langle 
		\hat{x}^{t}-x^\star,\Koper^\top (\bar{u}^{t+1}-u^\star) \right\rangle \\ &\quad&+\frac{1}{\tauM}\left(2  \left\langle  \bar{u}^{t+1}-u^\star,\bar{u}^{t+1}-u^t \right\rangle  - \sqnorm{\bar{u}^{t+1}-u^t}\right).
	\end{eqnarray*}
	Ignoring $-\frac{\gammaM}{1+\gammaM\mu}\sqnorm{\Koper^\top (u^t-u^\star)}$ and noting
	\begin{align*}
		&-\left\langle \hat{x}^{t}-x^\star,\Koper^\top (\bar{u}^{t+1}-u^\star) \right\rangle +\frac{1}{\tauM}\left\langle  \bar{u}^{t+1}-u^\star,\bar{u}^{t+1}-u^t \right\rangle  \\
		&=-\left\langle \lastlocitterk-\Koper x^\star,\bar{u}^{t+1}-u^\star \right\rangle +\frac{1}{\tauM}\left\langle  \nabla\localfun (\lastlocitterk),\bar{u}^{t+1}-u^\star \right\rangle\\
		&\overset{(\ref{yi1})+(\ref{strmono})}{\leq} -\frac{1}{L_F}\sqnorm{\bar{u}^{t+1}-u^\star}+\frac{a}{2\tauM}\sqnorm{\nabla\localfun (\lastlocitterk)}+\frac{1}{2a\tauM}\sqnorm{\bar{u}^{t+1}-u^\star }\\
		&= -\left(\frac{1}{L_F}-\frac{1}{2a\tauM}\right)\sqnorm{\bar{u}^{t+1}-u^\star}+\frac{a}{2\tauM}\sqnorm{\nabla\localfun (\lastlocitterk)}\\
		&\overset{(\ref{expu})}{\leq} -\left(\frac{1}{L_F}-\frac{1}{2a\tauM}\right)\left((1+\omega)\Exp{\sqnorm{u^{t+1}-u^\star}\;|\;\mathcal{F}_t}-\omega\sqnorm{u^t-u^\star}\right)\\
		&+\frac{a}{2\tauM}\sqnorm{\nabla\localfun (\lastlocitterk)},
	\end{align*}
	we get
	\begin{eqnarray*}
		\frac{1}{\gammaM}\Exp{\sqnorm{x^{t+1}-x^\star}\;|\;\mathcal{F}_t}&+&\left(1+\omega\right)\left(\frac{1}{\tauM}+\frac{1}{L_F}\right)\Exp{\sqnorm{u^{t+1}-u^\star}\;|\;\mathcal{F}_t}\\
		&\leq&  \frac{1}{\gammaM(1+\gammaM\mu)}\sqnorm{x^t-x^\star}\\
		&\quad&+\left(1+\omega\right)\left(\frac{1}{\tauM}+\frac{\omega}{1+\omega}\frac{1}{L_F}\right)\sqnorm{u^{t} -u^\star } \\
		&\quad&+\left(\gammaM\left(1-B\right)\Mx+\gammaM A-\frac{1}{\tauM}\right)\sqnorm{\bar{u}^{t+1}-u^t}\\
		&\quad& +\frac{L_F}{\tauM^2}\sqnorm{\nabla\localfun (\lastlocitterk)}-\frac{\mu}{\Mx}\sqnorm{\Koper\hat{x}^t-\Koper x^\star}.
	\end{eqnarray*}
	Where we made the choice $a=\frac{L_F}{\tau}$.
	Using Young's inequality we have
	\begin{equation*}
		-\frac{\mu}{3\Mx}\sqnorm{\Koper \hat{x}^t-\localsolk+\localsolk-\Koper x^\star}\overset{(\ref{yi3})}{\leq}\frac{\mu}{3\Mx}\sqnorm{\localsolk-\Koper x^\star}-\frac{\mu}{6\Mx}\sqnorm{\Koper\hat{x}^t-\localsolk}.	\end{equation*}
	Noting the fact that $\localsolk=\Koper\hat{x}^t-\frac{1}{\tauM}(\hat{u}^{t+1}-u^t)$, we have
	$$\frac{\mu}{3\Mx}\sqnorm{\localsolk-\Koper x^\star}\overset{(\ref{yi2})}{\leq} 2\frac{\mu}{3\Mx}\sqnorm{\Koper\hat{x}^t-\Koper x^\star}+\frac{2}{\tauM^2}\frac{\mu}{3\Mx}\sqnorm{\hat{u}^{t+1}-u^t}.$$
	Combining those inequalities we get
	\begin{eqnarray*}
		\frac{1}{\gammaM}\Exp{\sqnorm{x^{t+1}-x^\star}\;|\;\mathcal{F}_t}&+&\left(1+\omega\right)\left(\frac{1}{\tauM}+\frac{1}{L_F}\right)\Exp{\sqnorm{u^{t+1}-u^\star}\;|\;\mathcal{F}_t}\\
		&\leq&  \frac{1}{\gammaM(1+\gammaM\mu)}\sqnorm{x^t-x^\star}\\
		&\quad&+\left(1+\omega\right)\left(\frac{1}{\tauM}+\frac{\omega}{1+\omega}\frac{1}{L_F}\right)\sqnorm{u^{t} -u^\star } \\
		&\quad&+\frac{2}{\tauM^2}\frac{\mu}{3\Mx}\sqnorm{\hat{u}^{t+1}-u^t}\\
		&\quad&-\left(\frac{1}{\tauM}-\left(\gammaM\left(1-B\right)\Mx+\gammaM A\right)\right)\sqnorm{\bar{u}^{t+1}-u^t}\\
		&\quad&+\frac{L_F}{\tauM^2}\sqnorm{\nabla\localfun (\lastlocitterk)}-\frac{\mu}{6\Mx}\sqnorm{\Koper\hat{x}^t-\localsolk}.
	\end{eqnarray*}
	Assuming $\gammaM$ and $\tauM$ can be chosen so that $\frac{1}{\tauM}-(\gammaM(1-B)\Mx+\gammaM A))\geq \frac{4}{\tauM^2}\frac{\mu}{3\Mx}$ we obtain
	\begin{eqnarray*}
		\frac{1}{\gammaM}\Exp{\sqnorm{x^{t+1}-x^\star}\;|\;\mathcal{F}_t}&+&\left(1+\omega\right)\left(\frac{1}{\tauM}+\frac{1}{L_F}\right)\Exp{\sqnorm{u^{t+1}-u^\star}\;|\;\mathcal{F}_t}\\
		&\leq&  \frac{1}{\gammaM(1+\gammaM\mu)}\sqnorm{x^t-x^\star}\\
		&\quad&+\left(1+\omega\right)\left(\frac{1}{\tauM}+\frac{\omega}{1+\omega}\frac{1}{L_F}\right)\sqnorm{u^{t} -u^\star } \\
		&\quad&+\frac{4}{\tauM^2}\frac{\mu L_F^2}{3\Mx}\sqnorm{\lastlocitterk-\localsolk}+\frac{L_F}{\tauM^2}\sqnorm{\nabla\localfun (\lastlocitterk)}\\
		&\quad&-\frac{\mu}{6\Mx}\sqnorm{\Koper\hat{x}^t-\localsolk}.
	\end{eqnarray*}
	 The point $\lastlocitterk$ is assumed to satisfy Assumption\ref{assm:localtr}:
	\begin{eqnarray*}
		\frac{4}{\tauM^2}\frac{\mu L_F^2}{3\Mx}\sqnorm{\lastlocitterk-\localsolk}+\frac{L_F}{\tauM^2}\sqnorm{\nabla\localfun (\lastlocitterk)}\leq\frac{\mu}{6\Mx}\sqnorm{\Koper\hat{x}^t-\localsolk}.
	\end{eqnarray*}
	Thus
	\begin{eqnarray*}
		\frac{1}{\gammaM}\Exp{\sqnorm{x^{t+1}-x^\star}\;|\;\mathcal{F}_t}&+&\left(1+\omega\right)\left(\frac{1}{\tauM}+\frac{1}{L_F}\right)\Exp{\sqnorm{u^{t+1}-u^\star}\;|\;\mathcal{F}_t}\\
		&\leq&  \frac{1}{\gammaM(1+\gammaM\mu)}\sqnorm{x^t-x^\star}\\
		&\quad&+\left(1+\omega\right)\left(\frac{1}{\tauM}+\frac{\omega}{1+\omega}\frac{1}{L_F}\right)\sqnorm{u^{t} -u^\star }.
	\end{eqnarray*}
	
	By taking the expectation on both sides we get
	\begin{equation*}
		\Exp{\Psi^{t+1}}\leq  \max\left\{\frac{1}{1+\gammaM\mu} ,\frac{ L_F+\frac{\omega}{1+\omega}\tauM}{L_F+\tauM} \right\}\Exp{\Psi^{t}},
	\end{equation*}
	which finishes the proof. The requirement for stepsizes becomes:$$\frac{1}{\tauM}-\gammaM(\Mx+A-\Mx B)\geq \frac{4}{\tauM^2}\frac{\mu}{3\Mx}.$$
	This inequality can be satisfied. Firstly note that for any $\Ropp$ we need to have $A\geq\Mx B$.
	Then as long as $\tauM\geq\frac{8\mu}{3\Mx}$ we can set $\gammaM$ to satisfy $\gammaM=\frac{1}{2 \tauM \left(\Mx+A-\Mx B\right)}$.
\end{proof}

Given this inequality we can formulate a following convergence theorem for Algorithm \ref{alg:5GC-CC}, which is practically just a corollary to the Theorem \ref{thm:inexact}.

\begin{theorem*} 
	Consider Algorithm~\ref{alg:5GC-CC} (\algname{5GCS-CC}) with the LT solver satisfying Assumption~\ref{assm:localtr}.
	Let $\frac{1}{\tauM}-\gammaM(\Mx+\conic\frac{\Mx}{\Cx})\geq \frac{4}{\tauM^2}\frac{\mu}{3\Mx}$, e.g. $\tauM\geq\frac{8\mu}{3\Mx}$ and $\gammaM=\frac{1}{2\tauM\left(\Mx+\conic\frac{\Mx}{\Cx}\right)}$. Then for the Lyapunov function
	\begin{equation*}
		  	\Psi^{\kstep}\eqdef \frac{1}{\gammaM}\sqnorm{x^{\kstep}-x^\star}+\frac{\Mx}{\Cx}\left(\conic+1\right)\left(\frac{1}{\tauM}+\frac{1}{L_F}\right)\sqnorm{u^{\kstep}-u^\star},
	\end{equation*}
	the iterates satisfy
	$		\Exp{\Psi^{\Tx}}\leq (1-\rho)^\Tx \Psi^0,
	$	with 
	$
	\rho\eqdef  \min\left\{\frac{\gammaM\mu}{1+\gammaM\mu},\frac{C}{M(1+\omega)}\frac{\tauM}{(L_F+\tauM)}\right\}<1.$
	
	%	Also, $(x^\kstep)_{\kstep\in\mathbb{N}}$ and $(\hat{x}^\kstep)_{\kstep\in\mathbb{N}}$ both converge  to $x^\star$ and $(u^\kstep)_{\kstep\in\mathbb{N}}$ converges to $u^\star$, almost surely.
\end{theorem*}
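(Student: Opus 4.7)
The author flags this result as ``practically just a corollary'' of Theorem~\ref{thm:inexact}, so the plan is to realize Algorithm~\ref{alg:5GC-CC} as an instance of Algorithm~\ref{alg:5GCSnew} driven by a single random operator that bundles both uniform cohort sampling and the per-client unbiased compressors. Concretely, I would define $\cR^t:\mathbb{R}^{\Mx d}\to\mathbb{R}^{\Mx d}$ componentwise by
\[
\cR^t(v)_m \;\eqdef\; \tfrac{\Mx}{\Cx}\,\mathbf{1}[m \in S^t]\,\cQ_m(v_m),\qquad m=1,\dots,\Mx.
\]
This operator is unbiased because $\Pr[m\in S^t]=\Cx/\Mx$ under uniform $\Cx$-cohort sampling without replacement and $\Exp{\cQ_m(v_m)}=v_m$. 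The reciprocal prefactors $\tfrac{1}{1+\conic}\tfrac{\Cx}{\Mx}$ on the dual update (line~9) and $\tfrac{\Mx}{\Cx}(1+\conic)$ on the primal update (line~16) of Algorithm~\ref{alg:5GC-CC} are arranged precisely so that Algorithm~\ref{alg:5GC-CC} matches the Algorithm~\ref{alg:5GCSnew} template driven by $\cR^t$ with an effective variance parameter $\widetilde\omega$ satisfying $1+\widetilde\omega=\tfrac{\Mx(1+\conic)}{\Cx}$.

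Next I would verify the conic variance of $\cR^t$. A componentwise calculation using independence of the sampling and the compressors, together with the bound $\Exp{\sqnorm{\cQ_m(v_m)}}\le(1+\conic)\sqnorm{v_m}$, gives
\[
\Exp{\sqnorm{\cR^t(v)_m-v_m}} \;\le\; \Bigl(\tfrac{\Mx(1+\conic)}{\Cx}-1\Bigr)\sqnorm{v_m},
\]
which sums to $\Exp{\sqnorm{\cR^t(v)-v}}\le\widetilde\omega\sqnorm{v}$ with $\widetilde\omega=\tfrac{\Mx(1+\conic)}{\Cx}-1$. This is exactly the $(1+\widetilde\omega)$ factor that must appear in front of $\sqnorm{u^t-u^\star}$ in the target Lyapunov function.

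The bulk of the work is checking that $\cR^t$ satisfies Assumption~\ref{AB}. Expanding $\Exp{\sqnorm{\Koper^\top\cR^t(v)-\Koper^\top v}}$, conditioning on $S^t$, and using $\Exp{\mathbf{1}[m\in S^t]\mathbf{1}[k\in S^t]}=\tfrac{\Cx(\Cx-1)}{\Mx(\Mx-1)}$ for $m\neq k$ together with the independence of the $\cQ_m$'s, I would obtain the AB inequality with
\[
A=\tfrac{\Mx}{\Cx}\Bigl(\tfrac{\Mx-\Cx}{\Mx-1}+\conic\Bigr),\qquad B=\tfrac{\Mx-\Cx}{\Cx(\Mx-1)}.
\]
The crucial algebraic simplification is $A-\Mx B=\tfrac{\Mx\conic}{\Cx}$, so $\Mx+A-\Mx B=\Mx(1+\conic/\Cx)=\Mx+\conic\tfrac{\Mx}{\Cx}$, which is precisely the coefficient in the hypothesis $\tfrac{1}{\tauM}-\gammaM(\Mx+\conic\tfrac{\Mx}{\Cx})\ge\tfrac{4\mu}{3\Mx\tauM^2}$.

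With these three ingredients---unbiasedness of $\cR^t$, conic variance $\widetilde\omega$, and the AB constants---the claim follows by substituting into Theorem~\ref{thm:inexact}. The stepsize condition $\tfrac{1}{\tauM}-\gammaM(\Mx+A-\Mx B)\ge\tfrac{4\mu}{3\Mx\tauM^2}$ reduces to the stated one; the Lyapunov coefficient $(1+\widetilde\omega)\bigl(\tfrac{1}{\tauM}+\tfrac{1}{L_F}\bigr)$ becomes $\tfrac{\Mx}{\Cx}(1+\conic)\bigl(\tfrac{1}{\tauM}+\tfrac{1}{L_F}\bigr)$; the dual-side contraction $\tfrac{1}{1+\widetilde\omega}\tfrac{\tauM}{L_F+\tauM}$ becomes $\tfrac{\Cx}{\Mx(1+\conic)}\tfrac{\tauM}{L_F+\tauM}$; and the primal-side $\tfrac{\gammaM\mu}{1+\gammaM\mu}$ carries over unchanged. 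The main obstacle is the bookkeeping for the AB inequality---carefully assembling the $m\neq k$ cross terms from sampling without replacement with the independent compression noise---after which the rest is direct substitution.
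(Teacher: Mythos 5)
Your proposal is correct and takes essentially the same route as the paper: instantiate the general \algname{Inexact-RandProx} theorem (Theorem~\ref{thm:inexact}) by realizing \algname{5GCS-CC} as Algorithm~\ref{alg:5GCSnew} driven by the combined operator $\cR^t(v)_m=\tfrac{M}{C}\mathbf{1}[m\in S^t]\,\mathcal{Q}_m(v_m)$, verify unbiasedness, conic variance with effective parameter $1+\widetilde\omega=\tfrac{M}{C}(1+\omega)$, and the AB constants $A=\tfrac{M}{C}\bigl(\tfrac{M-C}{M-1}+\omega\bigr)$, $B=\tfrac{M-C}{C(M-1)}$, then observe $M+A-MB=M+\omega\tfrac{M}{C}$ and substitute. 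The only cosmetic difference is in organizing the AB computation: the paper conditions on the compressed values $Q(v)$ first (splitting $\Rop(Q(v))-v$ into sampling noise and compression noise and applying the tower rule), while you suggest conditioning on $S^t$ and expanding the cross terms directly via $\Pr[m\in S^t,\,k\in S^t]=\tfrac{C(C-1)}{M(M-1)}$; both decompositions yield the identical $A$ and $B$.
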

\begin{corollary*}
	Choose any $0<\varepsilon<1$ and $\tauM = \frac{8}{3}\sqrt{L\mu\left(\frac{\conic+1}{\Cx}\right)\frac{1}{\left(\Mx+\frac{\Mx}{\Cx}\conic\right)}}$ and $\gammaM=\frac{1}{2\tauM\left(\Mx+\conic\frac{\Mx}{\Cx}\right)}$. In order to guarantee $\Exp{\Psi^{\Tx}}\leq \varepsilon \Psi^0$, it suffices to take 
	\begin{eqnarray}		  
		T \geq	
		\cO\left(\left(\frac{\Mx}{\Cx}\left(\conic+1\right)+\left(\sqrt{\frac{\conic}{\Cx}}+1\right)\sqrt{\left(\conic+1\right)\frac{\Mx}{\Cx}\frac{L}{\mu}}\right) \log \frac{1}{\varepsilon} \right)\notag
	\end{eqnarray}
	communication rounds.
\end{corollary*}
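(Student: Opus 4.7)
The plan is to view this corollary as a direct application of Theorem~\ref{thm:inexactCompr}: given the contraction factor $1-\rho$ with
$$\rho = \min\left\{\tfrac{\gammaM\mu}{1+\gammaM\mu},\;\tfrac{C}{M(1+\omega)}\tfrac{\tauM}{L_F+\tauM}\right\},$$
the bound $\Exp{\Psi^T}\le (1-\rho)^T\Psi^0\le e^{-\rho T}\Psi^0\le \varepsilon\Psi^0$ holds as soon as $T\ge \tfrac{1}{\rho}\log\tfrac{1}{\varepsilon}$, and $1/\rho=\max\{1+\tfrac{1}{\gammaM\mu},\;\tfrac{M(1+\omega)}{C}\cdot\tfrac{L_F+\tauM}{\tauM}\}$. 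So the work reduces to (i) checking that the prescribed $(\gammaM,\tauM)$ satisfy the stepsize condition of Theorem~\ref{thm:inexactCompr}, and (ii) plugging them in and bounding each of the two terms in $1/\rho$.

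For step (i), the specified $\gammaM=\frac{1}{2\tauM(\Mx+\omega \Mx/\Cx)}$ gives $\gammaM(\Mx+\omega \Mx/\Cx)=\frac{1}{2\tauM}$, so the condition $\frac{1}{\tauM}-\gammaM(\Mx+\omega\Mx/\Cx)\ge \frac{4\mu}{3\Mx\tauM^2}$ collapses to $\tauM\ge \frac{8\mu}{3\Mx}$; the chosen $\tauM=\frac{8}{3}\sqrt{L\mu\frac{\omega+1}{\Cx M(1+\omega/\Cx)}}$ satisfies this because $LM(\omega+1)\ge \mu(C+\omega)$ whenever $L\ge\mu$ and $M\ge C$, which are standing assumptions.

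For step (ii), I would substitute $\tauM$ into $\frac{1}{\gammaM\mu}=\frac{2\tauM M(1+\omega/C)}{\mu}$ and compute
$$\tfrac{1}{\gammaM\mu}=\tfrac{16}{3}\sqrt{\tfrac{L}{\mu}\cdot\tfrac{M(\omega+1)}{C}\cdot\bigl(1+\tfrac{\omega}{C}\bigr)}.$$
Using the elementary inequality $\sqrt{1+x}\le 1+\sqrt{x}$ with $x=\omega/C$, this is bounded by $\tfrac{16}{3}\bigl(1+\sqrt{\omega/C}\bigr)\sqrt{\tfrac{L}{\mu}\cdot\tfrac{M(\omega+1)}{C}}$, yielding one of the two $\mathcal O$--terms in the statement. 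For the second term, $\frac{M(1+\omega)}{C}\cdot\frac{L_F+\tauM}{\tauM}=\frac{M(1+\omega)}{C}+\frac{M(1+\omega)L_F}{C\tauM}$. The first summand is precisely $\frac{M}{C}(\omega+1)$. For the second, using $L_F\le L/M$ and substituting $\tauM$ gives, after the same $\sqrt{1+\omega/C}\le 1+\sqrt{\omega/C}$ simplification, a bound of the form $\tfrac{3}{8}(1+\sqrt{\omega/C})\sqrt{\tfrac{L}{\mu}\cdot\tfrac{M(\omega+1)}{C}}$, which is dominated (up to constants) by the term already produced.

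Taking the maximum of the two bounds and absorbing constants into $\cO(\cdot)$ yields the advertised complexity. The main obstacle is purely book-keeping: verifying the stepsize constraint (trivial) and carefully grouping the $\sqrt{(\omega+1)M/C\cdot L/\mu}$ factor with the $\sqrt{1+\omega/C}$ correction; there are no inequalities beyond $\sqrt{1+x}\le 1+\sqrt{x}$ and the Theorem~\ref{thm:inexactCompr} rate. Note that sending $\omega\to 0$ recovers the \algname{5GCS} complexity $\cO\bigl((M/C+\sqrt{ML/(C\mu)})\log(1/\varepsilon)\bigr)$, providing a sanity check on the constants.
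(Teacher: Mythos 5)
Your proof is correct and follows essentially the same route as the paper: verify the stepsize condition reduces to $\tauM\ge 8\mu/(3\Mx)$ under the chosen $\gammaM$, substitute the prescribed $\tauM$ into $1/\rho=\max\{1+\tfrac{1}{\gammaM\mu},\ \tfrac{M(1+\omega)}{C}\tfrac{L_F+\tauM}{\tauM}\}$, and simplify with $L_F\le L/M$ and $\sqrt{1+\omega/C}\le 1+\sqrt{\omega/C}$. The only stylistic difference is direction (you derive the bound forward from the rate, the paper verifies the advertised $T$ backward through a chain of inequalities), so there is nothing substantive to distinguish.
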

\subsection{Proof of Corollary \ref{cor:5GCSC}}
\begin{proof}
	First note that $\tauM =\frac{8}{3}\sqrt{L\mu\left(\frac{\conic+1}{\Cx}\right)\frac{1}{\left(\Mx+\frac{\Mx}{\Cx}\conic\right)}} \geq \frac{8\mu}{3\Mx}$ and $\gammaM = \frac{3}{16}\sqrt{\frac{1}{L\mu}\left(\frac{\Cx}{\conic+1}\right)\frac{1}{\left(\Mx+\frac{\Mx}{\Cx}\conic\right)}}\geq\frac{1}{2\tauM\left(\Mx+\conic\frac{\Mx}{\Cx}\right)}$, thus the stepsizes choices satisfy $\frac{1}{\tauM}-\gammaM(\Mx+\conic\frac{\Mx}{\Cx})\geq \frac{4}{\tauM^2}\frac{\mu}{3\Mx}$.
	Now we get the contraction constant from Theorem \ref{thm:inexactCompr} to be equal to:
	\begin{eqnarray} 
	1-\rho = 	\max\left\{1-\frac{\gammaM\mu}{1+\gammaM\mu},1-\frac{C}{M}\frac{1}{\conic+1}\frac{\tauM}{L_{F}+\tauM}\right\}.\notag 
	\end{eqnarray}
	This  gives us a communication complexity, let us define $\lambda = \frac{M}{C}(\omega+1):$
	\begin{align*}
		T &=  		\cO\left(\left(\frac{\Mx}{\Cx}\left(\conic+1\right)+\left(\sqrt{\frac{\conic}{\Cx}}+1\right)\sqrt{\left(\conic+1\right)\frac{\Mx}{\Cx}\frac{L}{\mu}}\right)\log \frac{1}{\varepsilon}\right) \\
		&\geq   \max\left\{1+\frac{16}{3}\left(\sqrt{\frac{\conic}{\Cx}}+1\right)\sqrt{\left(\conic+1\right)\frac{\Mx}{\Cx}\frac{L}{\mu}},\lambda+\frac{3}{8}\left(\sqrt{\frac{\conic}{\Cx}}+1\right)\sqrt{\left(\conic+1\right)\frac{\Mx}{\Cx}\frac{L}{\mu}}\right\}\log\frac{1}{\varepsilon} \notag \\ &= 
		\max\left\{1+\frac{16}{3}\sqrt{\frac{L}{\mu}\frac{\conic+1}{C}\left(\Mx+\frac{\Mx\conic}{\Cx}\right)},\lambda\left(1+\frac{L}{M}\frac{3}{8}\sqrt{\frac{1}{L\mu}\frac{M(C+\conic)}{\conic+1}}\right)\right\} \log\frac{1}{\varepsilon} \\
		&\geq\max\left\{1+\frac{1}{\gammaM\mu},\left(\conic+1\right)\frac{M}{C}\frac{L/M+\tauM}{\tauM}\right\} \log\frac{1}{\varepsilon}.
	\end{align*}
\end{proof}
\clearpage

\end{document}